\newtheorem{assumption}{Assumption}
\newcommand{\defeq}{\stackrel{\text{def}}{=}}
\renewcommand{\(}{\left(}
\renewcommand{\)}{\right)}
\renewcommand{\[}{\left[}
\renewcommand{\]}{\right]}
\DeclareMathOperator*{\Reg}{\mathrm{Reg}}
\DeclareMathOperator*{\avg}{avg}
\DeclareMathOperator*{\EX}{\mathrm{EX}}
\newcommand{\poly}[1]{\operatorname{poly}\(#1\)}
\newcommand{\polylog}[1]{\operatorname{polylog}\(#1\)}
\newcommand{\sign}[1]{\operatorname{sign}\(#1\)}
\newcommand{\R}{\mathbb{R}}
\newcommand{\Rd}{\mathbb{R}^d}
\newcommand{\calX}{\mathcal{X}}
\newcommand{\calY}{\mathcal{Y}}
\newcommand{\calH}{\mathcal{H}}
\newcommand{\calO}{\mathcal{O}}
\newcommand{\calF}{\mathcal{F}}
\newcommand{\err}{\mathrm{err}}
\newcommand{\zeronorm}[1]{\left\lVert #1 \right\rVert_{0}}
\newcommand{\twonorm}[1]{\left\lVert #1 \right\rVert}
\newcommand{\refine}{\textsc{Refine}\xspace}
\newcommand{\init}{\textsc{Initialize}\xspace}
\newcommand{\cmark}{\text{\ding{51}}}%
\newcommand{\xmark}{\text{\ding{55}}}
\newcommand{\optu}{\tilde{u}}
\newcommand{\awasthifull}{\tilde{\order}\big(d^{2^{\poly{1/(1-2\eta)}}} \cdot \ln\frac{1}{\epsilon}\big)}
\newcommand{\awasthisparse}{\tilde{\order}\big((\frac{s \ln d}{\epsilon})^{2^{\poly{1/(1-2\eta)}}} \big)}
\title{Efficient active learning of sparse halfspaces with arbitrary bounded noise}
\author{
Chicheng Zhang\\
  University of Arizona\\
  \texttt{chichengz@cs.arizona.edu} \\
  \And
  Jie Shen\\
Stevens Institute of Technology\\
\texttt{jie.shen@stevens.edu}\\
\And
Pranjal Awasthi\\
Google Research and Rutgers University\\
\texttt{pranjalawasthi@google.com}
}
\begin{document}
\maketitle

\begin{abstract}\label{sec:abstract}
We study active learning of homogeneous $s$-sparse halfspaces in $\mathbb{R}^d$ under the setting where the unlabeled data distribution is isotropic log-concave and each label is flipped with probability at most $\eta$ for a parameter $\eta \in \big[0, \frac12\big)$, known as the bounded noise. 
Even in the presence of mild label noise, i.e. $\eta$ is a small constant, this is a challenging problem and only recently have label complexity bounds of the form $\tilde{\order}\(s \cdot \polylog{d, \frac{1}{\epsilon}}\)$ been established in [Zhang, 2018] for computationally efficient algorithms. In contrast, under high levels of label noise, the label complexity bounds achieved by computationally efficient algorithms are much worse:~the best known result of~[Awasthi et al., 2016] provides a computationally efficient algorithm with label complexity $\awasthisparse$, which is label-efficient only when the noise rate $\eta$ is a fixed constant. In this work, we substantially improve on it by designing a polynomial time algorithm for active learning of $s$-sparse halfspaces, with a label complexity of $\tilde{\order}\big(\frac{s}{(1-2\eta)^4} \polylog {d, \frac 1 \epsilon} \big)$. This is the first efficient algorithm with label complexity polynomial in $\frac{1}{1-2\eta}$ in this setting, which is label-efficient even for $\eta$ arbitrarily close to $\frac12$. Our active learning algorithm and its theoretical guarantees also immediately translate to new state-of-the-art label and sample complexity results for full-dimensional active and passive halfspace learning under arbitrary bounded noise. The key insight of our algorithm and analysis is a new interpretation of online learning regret inequalities, which may be of independent interest.
\end{abstract}

\section{Introduction}

In machine learning and statistics, linear classifiers (i.e. halfspaces) are arguably one of the most important models as witnessed by a long-standing research effort dedicated to establishing computationally efficient and provable algorithms for halfspace learning~\cite{rosenblatt1958perceptron,vapnik1998statistical,cristianini2000introduction}. In practical applications, however, data are often corrupted by various types of noise~\cite{sloan1988types,blum1996polynomial}, are expensive to annotate~\cite{cohn1994improving,dasgupta2009percerptron}, and are of high or even infinite dimensions~\cite{blum1990learning,candes2005decoding}. These characteristics rooted in contemporary machine learning problems pose new challenges to the design and analysis of learning algorithms for halfspaces. As a result, there has been extensive study of {\em noise-tolerant}, {\em label-efficient}, and {\em attribute-efficient} algorithms in the last few decades.

{\bfseries Noise-tolerant learning.} \
In the noiseless setting where there is a halfspace that has zero error rate with respect to the data distribution, it is well known that by simply finding a halfspace that fits all the training examples using linear programming, one is guaranteed vanishing generalization error~\cite{vapnik2015uniform}. In the presence of data corrpution, the success of efficient learning of halfspaces crucially depends on the underlying noise model. For instance, \cite{blum1996polynomial} proposed a polynomial time algorithm that provably learns halfspaces when the labels are corrupted by random classification noise, that is, each label is flipped independently with a fixed probability $\eta \in \big[0, \frac12\big)$. The bounded noise model, also known as Massart noise~\cite{sloan1988types,sloan1992Corrigendum,massart2006risk}, is a significant generalization of the random classification noise model, in that the adversary is allowed to flip the label of each example $x$ with a different probability $\eta(x)$, with the only constraint $\eta(x) \leq \eta$ for a certain parameter $\eta \in \big[0, \frac12\big)$. Due to its highly asymmetric nature, it remains elusive to develop computationally efficient algorithms that are robust to bounded noise. As a matter of fact, the well-known averaging scheme~\cite{kearns1998efficient} and one-shot convex loss minimization are both unable to guarantee excess error arbitrarily close to zero even with infinite supply of training examples~\cite{awasthi2015efficient,awasthi2016learning,diakonikolas2019distribution}. Therefore, a large body of recent works are devoted to designing more sophisticated algorithms to tolerate bounded noise, see, for example, \cite{awasthi2015efficient,awasthi2016learning,zhang2017hitting,yan2017revisiting,zhang2018efficient,diakonikolas2019distribution,diakonikolas2020learning}.

{\bfseries Label-efficient learning.} \
Motivated by many practical applications in which there are massive amounts of unlabeled data that are expensive to annotate, active learning was proposed as a paradigm to mitigate labeling costs~\cite{cohn1994improving,dasgupta2011two}. In contrast to traditional supervised learning (also known as passive learning) where the learner is presented with a set of labeled training examples, in active learning, the learner starts with a set of unlabeled examples, and is allowed to make label queries during the learning process~\cite{cohn1994improving,dasgupta2005coarse}. By adaptively querying examples whose labels are potentially most informative, a classifier of desired accuracy can be actively learned while requiring substantially less label feedback than that of passive learning under broad classes of data distributions ~\cite{hanneke2014theory,balcan2016active}.

{\bfseries Attribute-efficient learning.} \
With the unprecedented growth of high-dimensional data generated in biology, economics, climatology, and other fields of science and engineering, it has become ubiquitous to leverage extra properties of the data into algorithmic design for more sample-efficient learning~\cite{littlestone1987learning}. On the computational side, the goal of attribute-efficient learning is to find a {\em sparse} model that identifies most useful features for prediction~\cite{fan2008high}. On the statistical side, the focus is on answering when and how learning of a sparse model will lead to improved performance guarantee on sample complexity, generalization error, or mistake bound. These problems have been investigated for a long term, and the sparsity assumption proves to be useful for achieving non-trivial guarantees~\cite{blum1990learning,tibshirani1996regression,chen1998atomic}. The idea of attribute-efficient learning was also explored in a variety of other settings, including online classification~\cite{littlestone1987learning}, learning decision lists~\cite{servedio1999computational,servedio2012attribute,klivans2004toward,long2006attribute}, and learning parities and DNFs~\cite{feldman2007attribute}.



\vspace{0.1in}
In this work, we consider computationally efficient learning of halfspaces in all three aspects above. Specifically, we study active learning of sparse halfspaces under the more-realistic bounded noise model, for which there are a few recent works that are immediately related to ours but under different degrees of noise tolerance and distributional assumptions. In the membership query model~\cite{angluin1988queries}, where the learner is allowed to synthesize intances for label queries, ~\cite{chen2017near} proposed an algorithm that tolerates bounded noise with near-optimal $\tilde{\order}\big(\frac{d}{(1-2\eta)^2} \ln\frac1\epsilon\big)$ label complexity.
In the more realistic PAC active learning model~\cite{kearns1994toward,balcan2009agnostic}, where the learner is only allowed to query the label of the examples drawn from the unlabeled data distribution, less progress is made towards optimal performance guarantee.
Under the assumption that the unlabeled data distribution is uniform over the unit sphere, \cite{yan2017revisiting} proposed a Perceptron-based active learning algorithm that tolerates any noise rate of $\eta \in \big[0, \frac12\big)$, with label complexity of $\tilde{\order}\big(\frac{d}{(1-2\eta)^2} \ln\frac1\epsilon \big)$. Unfortunately, it is challenging to generalize their analysis beyond the uniform distribution, as their argument heavily relies on its symmetry.
Under the broader isotropic log-concave distribution over the unlabeled data, the state-of-the-art results provide much worse label complexity bounds for the bounded noise model. Specifically, \cite{awasthi2015efficient} showed that\footnote{\cite{awasthi2015efficient} phrased all the results with respect to the uniform distribution of the unlabeled data. However, their analysis can be straightforwardly extended to isotropic log-concave distributions, and was spelled out in \cite{zhang2018efficient}.} by sequentially minimizing a series of localized hinge losses, an algorithm can tolerate bounded noise up to a constant noise rate $\eta \approx 2 \times 10^{-6}$. Furthermore, \cite{awasthi2016learning} combined polynomial regression~\cite{kalai2008agnostically} and margin-based sampling~\cite{balcan2007margin} to develop algorithms that tolerate $\eta$-bounded noise for any $\eta \in \big[0, \frac12\big)$. However, their label complexity scales as $\awasthifull$, which is exponential in $\frac{1}{1-2\eta}$ and is polynomial in $d$ only when $\eta$ is away from $\frac12$ by a constant. This naturally raises our {\em first question}:~can we design a computationally efficient algorithm for active learning of halfspaces, such that for any $\eta \in \big[0, \frac12\big)$, it has a $\poly{d, \ln\frac1\epsilon, \frac{1}{1-2\eta}}$ label complexity under the more general isotropic log-concave distributions?


Compared to the rich literature of active learning of general non-sparse halfspaces, there are relatively few works on active learning of halfspaces that both exploit the sparsity of the target halfspace and are tolerant to bounded noise. Under the assumption that the Bayes classifier is an $s$-sparse halfspace (where $s \ll d$), a few active learning algorithms have been developed.
In the membership query model, a composition of the support recovery algorithm developed in~\cite{haupt2011robust} with the full-dimensional active learning algorithm~\cite{chen2017near} yields a procedure that can tolerate $\eta$-bounded noise with information-theoretic optimal $\tilde{\order}\del{\frac{s}{(1-2\eta)^2} (\ln d + \ln\frac1\epsilon) }$ label complexity.
Under the PAC active learning model where the unlabeled data distribution is isotropic log-concave, \cite{awasthi2016learning} presented an efficient algorithm that has a label complexity of $\awasthisparse$. Under the additional assumption that $\eta$ is smaller than a numerical constant substantially bounded away from $\frac12$, \cite{zhang2018efficient} gave an algorithm that has label complexity of $\tilde{\order}\del{ s \cdot \polylog{d,\frac1\epsilon}}$. Neither of these two works obtained a label complexity bound that is polynomial in $\frac{1}{1-2\eta}$ (specifically, the latter work has no guarantees when $\eta$ is greater than a constant, say $1/4$). This raises our {\em second question}:~if the Bayes classifier is an $s$-sparse halfspace, can we design an efficient halfspace learning algorithm which not only works for any bounded noise rate $\eta \in \big[0, \frac12\big)$, but also enjoys a label complexity of $\poly{s, \ln d, \ln\frac1\epsilon, \frac{1}{1-2\eta}}$?




\subsection{Summary of our contributions}
In this work, we answer both of the above questions in the affirmative. Specifically, we focus on the setting where the unlabeled data are drawn from an isotropic log-concave distribution, and the label noise satisfies the $\eta$-bounded noise condition for any $\eta \in [0, \frac{1}{2})$. We develop an attribute-efficient learning algorithm that runs in polynomial time, and achieves a label complexity of $\tilde{\order}\del{ \frac{s}{(1-2\eta)^4} \cdot  \polylog{d,\frac 1 \epsilon} }$ 
provided that the underlying Bayes classifier is an $s$-sparse halfspace. Our results therefore substantially improve upon the state-of-the-art label complexity of $\awasthisparse$ 
in the same setting~\cite{awasthi2016learning}. Even in the non-sparse setting (by letting $s=d$), our label complexity bound $\tilde{\order}\del{ \frac{d}{(1-2\eta)^4} \cdot  \ln{\frac 1 \epsilon} }$ is the first one of order $\poly{d, \ln\frac1\epsilon, \frac{1}{(1-2\eta)}}$. Prior to this work,  the best label complexity is $\awasthifull$~\cite{awasthi2016learning}. 
We summarize and compare our results in active learning to the state of the art in Tables~\ref{tb:comp-sparse} and~\ref{tb:comp-nonsparse}, in the sparse and non-sparse setting, respectively.

As a side result of our main discoveries, our algorithm also achieves a state-of-the-art sample complexity of $\tilde{\order}\del{ \frac{d}{(1-2\eta)^3} \del{\frac{1}{(1-2\eta)^3} + \frac1\epsilon} }$ for passive learning of $d$-dimensional halfspaces under the same assumptions of noise and data distribution.
In an independent and concurrent work~\cite{diakonikolas2020learning}, an efficient (passive) halfspace learning algorithm that tolerates $\eta$-bounded noise has been developed, under a broader family of unlabeled data distributions. Specializing their result to the setting when the unlabeled distribution is isotropic log-concave, their algorithm has a higher sample complexity of $\order\del{\frac{d^9}{\epsilon^4 (1-2\eta)^{10}}}$. Our techniques are also very different from theirs: they propose to find an approximate first-order stationary point of an empirical average of a single nonconvex loss function, which guarantees closeness to the Bayes classifier in angle; in contrast, we use online learning techniques to indirectly optimize a sequence of proximity functions, and additionally utilize the power of active learning in the learning process -- see Section~\ref{sec:techniques} for more details.
We discuss the implications of our work for passive learning in Section~\ref{sec:passive}, and additional related works in Appendix~\ref{sec:addl-rw}.

\begin{table}[t]
\caption{A comparison of our result to prior state-of-the-art works on efficient active learning of sparse halfspaces with $\eta$-bounded noise, where the unlabeled data distribution is isotropic log-concave.}\label{tb:comp-sparse}
\vspace{0.05in}
\centering
\begin{tabular}{lccccc}
\toprule
Work & Tolerates any $\eta \in \big[0, \frac12\big)$? & Label complexity \\
\midrule
\cite{zhang2018efficient} &  $\xmark$ & $\tilde{\order}\del{s \cdot \polylog{d, \frac1\epsilon}}$  for small enough $\eta$\\
\cite{awasthi2016learning} &  $\cmark$ & $\awasthisparse$ \\
{\bf This work} & $\cmark$ & $\tilde{\order}\del{ \frac{s}{(1-2\eta)^4} \polylog{d, \frac 1 \epsilon} }$ \\
\bottomrule
\end{tabular}
\end{table}

\begin{table}[t]
\caption{A comparison of our result to prior state-of-the-art works on active learning of non-sparse halfspaces with $\eta$-bounded noise, where the unlabeled data distribution is isotropic log-concave.  
}\label{tb:comp-nonsparse}
\vspace{0.05in}
\centering
\begin{tabular}{lccccc}
\toprule
Work & Tolerates any $\eta \in \big[0, \frac12\big)$? & Label complexity\\
\midrule
\cite{awasthi2015efficient}  & $\xmark$ & $\tilde{\order}\del{d \ln\frac{1}{\epsilon}}$  for small enough $\eta$ \\
\cite{awasthi2016learning}  & $\cmark$ & $\awasthifull$\\
{\bf This work}  & $\cmark$ & $\tilde{\order}\del{ \frac{d}{(1-2\eta)^4} \cdot \ln{\frac 1 \epsilon} }$ \\
\bottomrule
\end{tabular}
\end{table}


\subsection{An overview of our techniques}
\label{sec:techniques}
We discuss the main techniques we developed in this paper below.

{\bfseries 1) Active learning via regret minimization.} 
We approach the active halfspace learning problem with a novel interpretation of online learning regret inequalities. Consider $v \in \RR^d$, a vector that has angle at most $\theta$ with the underlying Bayes optimal halfspace $u$; our goal is to refine $v$ to $v'$, such that $v'$ has angle at most $\theta/2$ with $u$. Our key idea is to design an appropriate online linear optimization problem~\cite[Section 2.3]{orabona2019modern} with a sequence of adaptively-chosen loss functions $\cbr{w \mapsto \inner{g_t}{w}}_{t = 1}^T$ so that we can extract a good $v'$ from a sequence of regret-minimizing $w_t$'s: if we can show 
\begin{equation}
\sum_{t=1}^T \inner{w_t}{g_t} - \sum_{t=1}^T \inner{u}{g_t}
\leq 
\Reg(T),
\label{eqn:reg-ineq}
\end{equation}
for some $\Reg(T) = O(\sqrt{T})$ (say), then for large enough $T$, (roughly) an average of $w_t$ can serve as a $v'$ that satisfies our target angle proximity with $u$. 
To this end, we carefully and adaptively construct gradients $g_t$, such that: (a)~$\abs{\inner{w_t}{g_t}}$ is small; and (b) each {\em negative benchmark} term $\inner{u}{-g_t}$ has conditional expectation $\EE \sbr{\inner{u}{-g_t} \mid w_t}$ that upper bounds $f_{u,b}(w_t)$, for some function $f_{u,b}(w)$ that measures the distance between the input vector $w$ and $u$. These properties of $g_t$, along with the regret inequality~\eqref{eqn:reg-ineq} ensure that the average of $f_{u,b}(w_t)$'s is small for large $T$, which, via a nonstandard online to batch conversion, implies that an average of $w_t$ is close to $u$. To additionally achieve attribute efficiency, we use online mirror descent with well-known sparsity-inducing regularizers~\cite[Section 6]{orabona2019modern}, to guarantee that $\Reg(T)$ is smaller for sparse $u$. See Algorithm~\ref{alg:refine} and Theorem~\ref{thm:refine} for more details.


{\bfseries 2) A new update rule that tolerates bounded noise.} As discussed above, a key step  in the above regret minimization argument is to define the gradient $g_t$ such that $\EE \sbr{\inner{u}{-g_t} \mid w_t} \geq f_{u,b}(w_t)$.
For each iterate $w_t$, we choose to sample labeled example $(x_t, y_t)$ from the data distribution $D$ conditioned on the band $\cbr{x: \abs{\inner{\frac{w_t}{\| w_t \|}}{x}} \leq b}$, similar to~\cite{dasgupta2009percerptron}.
Based on labeled example $(x_t, y_t)$, a natural choice is $g_t = -\ind{\hat{y}_t \neq y_t} y_t x_t$, i.e. the negative Perceptron update, where $\hat{y}_t = \sign{\inner{w_t}{x_t}}$. Unfortunately, due to the asymmetry of the unlabeled data distribution\footnote{~\cite{yan2017revisiting} extensively utilizes the symmetry of the uniform unlabeled distribution to guarantee that the expectation is positive if the angle between $w_t$ and $u$ is large; we cannot use this as we are dealing with a more general family of log-concave unlabeled distribution, which can be highly asymmetric.}, it does not have the property we desire (in fact, the induced $f_{u,b}(w_t)$ can be negative with such choice of $g_t$). To cope with this challenge, we propose a novel setting of $g_t$ that takes into account the bounded noise rate $\eta$:
\begin{equation*}
g_t = -\ind{\hat{y}_t \neq y_t} y_t x_t - \eta \hat{y}_t x_t
 = \del{-\frac12 y_t + \del{\frac12 - \eta} \hat{y}_t} x_t.
\end{equation*}
Observe that the above choice of $g_t$ is more aggressive than the Perceptron update, in that when $\eta > 0$, even if the current prediction $\hat{y}_t$ matches the label returned by the oracle, we still update the model. In the extreme case that $\eta = 0$, we recover the Perceptron update. We show that, this new setting of $g_t$, in conjunction with the aforementioned adaptive sampling scheme, yields a function $f_{u,b}(w)$ that possesses desirable properties. We refer the reader to Lemma~\ref{lem:update-distance} for a precise statement.

{\bfseries3) Averaging-based initialization that exploits sparsity.}
The above arguments suffice to establish a local convergence guarantee, i.e. given a vector $\tilde{v}_0$ with $\theta(\tilde{v}_0, u) \leq \frac\pi{32}$, one can repeatedly run a sequence of online mirror descent updates and online-to-batch conversions, such that for each $k \geq 0$, we obtain a vector $\tilde{v}_k$ such that
$\theta(\tilde{v}_k, u) \leq \frac{\pi}{32 \cdot 2^k}$.
It remains to answer the question of how to obtain such $\tilde{v}_0$ using active learning in an attribute-efficient manner. To this end, we design an initialization procedure that finds such $\tilde{v}_0$ with $\tilde{\order}\del{\frac{s}{(1-2\eta)^4} \cdot \polylog{d}}$ labeled examples. It consists of two stages.
The first stage performs the well known averaging scheme~\cite{kearns1998efficient}, in combination with a novel hard-thresholding step~\cite{blumensath2009iterative}. This stage gives a unit vector $w^\sharp$ such that $\inner{w^\sharp}{u} \geq \Omega(1-2\eta)$ with high probability, using $\tilde{\order}\del{\frac{s \ln d}{(1-2\eta)^2}}$ labeled examples.
The second stage performs online mirror descent update with adaptive sampling as before, but with the important constraint that $\inner{w_t}{w^\sharp} \geq \Omega(1-2\eta)$ for all iterates $w_t$. Through a more careful analysis using the function $f_{u,b}$ discussed above (that accounts for the case where input $w_t$ can have a large obtuse angle with $u$), we obtain a vector $\tilde{v}_0$ that has the desired angle upper bound, with the aforementioned label complexity. We refer the reader to Lemma~\ref{lem:w-sharp} and Theorem~\ref{thm:init} for more details.

\section{Preliminaries}
{\bfseries Active learning in the PAC model.} 
We consider active halfspace learning in the agnostic PAC learning model~\cite{kearns1994toward,balcan2009agnostic}. In this setting, there is an instance space $\calX = \RR^d$ where all examples' features take value from, and a label space $\calY = \{ -1, 1 \}$ where all examples' labels take value from. The data distribution $D$ is a joint probability distribution over $\calX \times \calY$. We denote by $D_X$ the marginal distribution of $D$ on $\Xcal$, and by $D_{Y|X=x}$ the conditional distribution of $Y$ given $X=x$. We will also refer to $D_X$ as unlabeled data distribution. Throughout the learning process, the active learner is given access to two oracles: $\EX$, an unlabeled example oracle that returns $x$ randomly drawn from $D_X$, and $\Ocal$, a labeling oracle takes $x$ as input and returns a label $y \sim D_{Y|X=x}$.

A classifier is a mapping from $\Xcal$ to $\Ycal$. We consider halfspace classifiers of the form $h_w: x \mapsto \sign{w \cdot x}$ where $\sign{z} = +1$ if $z \geq 0$ and equals $-1$ otherwise. The vector $w \in \RR^d$ is the parameter of $h_w$, which has unit $\ell_2$-norm. For a given classifier $h_w$, we measure its performance by $\err(h_w, D) := \PP_{(x,y) \sim D}\del{h_w(x) \neq y}$, i.e. the probability that a random example gets misclassified.

We are interested in developing active halfspace learning algorithms that achieve the agnostic PAC guarantee. Specifically, we would like to design an algorithm $\Acal$, such that it receives as inputs excess error parameter $\epsilon \in (0, 1)$ and failure probability parameter $\delta \in (0, 1)$, and
with probability $1-\delta$, after making a number of queries to $\EX$ and $\Ocal$, $\Acal$ returns a halfspace $h_w$ such that $\err(h_w, D) - \min_{w'} \err(h_{w'}, D) \leq \epsilon$. In addition, we would like our active learner to make as few label queries as possible. We denote by $n_{\Acal}(\epsilon, \delta)$ the number of label queries of $\Acal$ given parameters $\epsilon$ and $\delta$; this is also called the {\em label complexity} of $\Acal$.

We will focus on sampling unlabeled examples from $D_X$ conditioned on a subset $B$ of $\RR^d$; this can be done by rejection sampling, where we repeatedly call $\EX$ until we see an unlabeled example $x$ falling in $B$.
Given a unit vector $\hat{w}$ and $b > 0$, define $B_{\hat{w}, b} = \cbr{x \in \RR^d: \abr{ \hat{w} \cdot x } \leq b}$. Denote by $D_{X \mid \hat{w}, b}$ (resp. $D_{\hat{w}, b}$) the distribution of $D_X$ (resp. $D$) conditioned on the event that $x \in B_{\hat{w}, b}$.

%

{\bfseries Vectors.} 
Let $w$ be a vector in $\RR^d$. The $\ell_0$-``norm'' $\| w \|_0$ counts its number of nonzero elements, and $w \in \RR^d$ is said to be $s$-sparse if $\| w \|_0 \leq s$. Given $s \in \cbr{1,\ldots, d}$, the hard thresholding operation $\Hcal_s(w)$ zeroes out all but $s$ largest (in magnitude) entries of $w$. 
For a scalar $\gamma \geq 1$, denote by $\| w \|_\gamma$ the $\ell_{\gamma}$-norm of the vector $w$. If not explicitly mentioned, $\| \cdot \|$ denotes the $\ell_2$-norm.  We denote by $\hat{w} = \frac{w}{\| w\|}$ as the $\ell_2$-normalization of $w \in \Rd$. For two vectors $w_1, w_2$, we write $\theta(w_1,w_2) = \arccos\del{\hat{w}_1 \cdot \hat{w}_2}$ as the angle between them.

{\bfseries Convexity.}
Given a convex and differentiable function $f$, its induced Bregman divergence is given by $D_f(w, w') \defeq f(w) - f(w') - \inner{\nabla f(w')}{w - w'}$. Note that by the convexity of $f$, $D_f(w, w') \geq 0$ for all $w$ and $w'$. A function $f$ is said to be $\lambda$-strongly convex with respect to the norm $\| \cdot \|_{\gamma}$, if $D_f(w, w') \geq \frac{\lambda}{2}\| w - w'\|^2_{\gamma}$ holds for all $w$ and $w'$ in its domain. In our algorithm, we will use the following convex function: 
$
\Phi_{v}(w) \defeq \frac{1}{2(p-1)}\| w - v \|_p^2,
$
where $v$ is a reference vector in $\R^d$. Throughout the paper, we reserve $p$ for a specific value $p = \frac{\ln (8d)}{\ln(8d) - 1}$, and reserve $q = \ln (8d)$ (note that $p^{-1} + q^{-1} = 1$). As $p \in (1,2]$, $\Phi_v$ is $1$-strongly convex with respect to $\| \cdot \|_p$~\cite[Lemma 17]{shalev2007online}. In addition, $\nabla\Phi_v$ is a one-to-one mapping from $\RR^d$ to $\RR^d$, and hence has an inverse, denoted as $\nabla\Phi_v^{-1}$.


{\bfseries Distributional assumptions.} 
Without distributional assumptions, it is known that agnostically learning halfspaces is computationally hard~\cite{feldman2006new,guruswami2009hardness}. 
We make the following two assumptions.

\begin{assumption}\label{a:bn}
The data distribution $D$ satisfies the {\em $\eta$-bounded noise condition} with respect to an $s$-sparse unit vector $u \in \RR^d$, where the noise rate $\eta \in [0, 1/2)$. Namely, for all $x \in \Xcal$, 
$\PP(y \neq \sign{{u} \cdot {x}} | X = x) \leq \eta$.
\end{assumption}

\begin{assumption}\label{a:ilc}
The unlabeled data distribution $D_X$ is {\em isotropic log-concave over $\RR^d$}, i.e. $D_X$ has a probability density function $f$ over $\RR^d$ such that $\ln f(x)$ is concave, and $\EE_{x \sim D_X} \sbr{x x^\top} = I_{d \times d}$.
\end{assumption}

Assumption~\ref{a:bn} implies that the Bayes optimal classifier with respect to the distribution $D$ is $h_u$. As a consequence, the optimal halfspace is $h_u$, namely $\err(h_u, D) = \min_{w'} \err(h_w, D)$. Assumption~\ref{a:ilc} has appeared in many prior works~\cite{klivans2009learning,balcan2013active,awasthi2017power,zhang2018efficient}. \cite{balcan2013active} showed the following important lemma.
\begin{lemma}\label{lem:angle-disag}
Suppose that Assumption~\ref{a:ilc} is satisfied. There exist absolute constants $c_1$ and $c_2$, such that for any two vectors $v_1$ and $v_2$,
\begin{equation*}
c_1 \PP_{x \sim D_X}(\sign{v_1 \cdot x} \neq \sign{v_2 \cdot x}) \leq \theta(v_1, v_2) \leq c_2  \PP_{x \sim D_X}(\sign{v_1 \cdot x} \neq \sign{v_2 \cdot x}).
\end{equation*}
\end{lemma}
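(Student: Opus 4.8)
The plan is to reduce the claim to a two-dimensional computation and then estimate the probability mass that a two-dimensional isotropic log-concave distribution assigns to a ``double wedge''. First I would observe that the event $\cbr{\sign{v_1\cdot x}\neq\sign{v_2\cdot x}}$ depends on $x$ only through its orthogonal projection onto $V=\mathrm{span}\cbr{v_1,v_2}$, since $v_i\cdot x=v_i\cdot P_V x$ for $v_i\in V$. By the log-concave geometry toolkit of \citet{lovasz2007geometry}, the pushforward of an isotropic log-concave law under an orthogonal projection onto a subspace is again isotropic log-concave on that subspace (Prékopa for log-concavity, and $P_V \EE[xx^\top]P_V^\top = P_V$ for isotropy), and isotropic log-concavity is preserved by rotations. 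Hence we may assume $d=2$ and, after a rotation, that $v_1,v_2$ make angles $+\theta/2$ and $-\theta/2$ with the first axis, where $\theta=\theta(v_1,v_2)\in[0,\pi]$. A one-line computation, $(v_1\cdot x)(v_2\cdot x)=x_1^2\cos^2\tfrac\theta2-x_2^2\sin^2\tfrac\theta2$, then identifies the disagreement region with $\cbr{x\in\R^2\mid |x_1|<|x_2|\tan(\theta/2)}$, a union of two opposite planar sectors of total angular measure $2\theta$. Write $p$ for its probability under the (now two-dimensional) isotropic log-concave density $f$.

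For the inequality $\theta\le c_2\, p$ I would use the fact, again from \citet{lovasz2007geometry}, that there are absolute constants $r_0,c_0>0$ with $f(x)\ge c_0$ whenever $\twonorm{x}\le r_0$. Since the disagreement region meets the disk of radius $r_0$ in a set of area $\tfrac{2\theta}{2\pi}\cdot\pi r_0^2=\theta r_0^2$, we obtain $p\ge c_0 r_0^2\,\theta$ for every $\theta\in[0,\pi]$, which is the claim with $c_2=1/(c_0 r_0^2)$.

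For the inequality $c_1\, p\le\theta$, the case $\theta>\pi/2$ is trivial ($p\le 1\le\tfrac2\pi\theta$), so assume $\theta\le\pi/2$, whence $\tan(\theta/2)\le\theta$. Conditioning on $x_2$ and bounding the mass of an interval under the (log-concave) conditional density $f_{x_1\mid x_2}$ by its length times its sup-norm,
\[
p \;\le\; 2\tan(\theta/2)\,\EE_{x_2}\sbr{\, |x_2| \cdot \infnorm{f_{x_1\mid x_2}} \,} \;=\; 2\tan(\theta/2)\int_{\R} |s|\, m(s)\, ds ,
\]
where $m(s):=\max_{x_1} f(x_1,s)$ is the ``profile'' of $f$. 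So the whole upper bound reduces to showing $\int_{\R}|s|\,m(s)\,ds\le C$ for an absolute constant $C$, which combined with $\tan(\theta/2)\le\theta$ gives $c_1=1/(2C)$.

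I expect this last estimate to be the main obstacle. The two naive ways to bound the disagreement mass — truncating to a ball around the origin, or truncating the coordinate $x_2$ — each lose an extra factor of $\log(1/\theta)$, because they only exploit that the density (or a one-dimensional marginal) is bounded, not how the conditional spread of $x_1$ behaves far out in $x_2$. To get the clean linear bound I would lean on the structure theory of log-concave functions: $m$ is itself log-concave (a coordinatewise maximum of a log-concave function), it inherits an absolute bound $\infnorm{m}\le\infnorm{f}\le K_1$ on the peak density, and its scale is pinned down by isotropy — comparing $m(s)$ with the marginal $f_{x_2}(s)$ and the conditional spread of $x_1 \mid x_2=s$, together with $\EE_{x_2}\sbr{\operatorname{Var}(x_1\mid x_2)}\le\operatorname{Var}(x_1)=1$ and the rigidity of log-concave conditionals — forces $m$ to decay sub-exponentially at a scale that is $\order(1)$, so that $\int|s|\,m(s)\,ds=\order(1)$. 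This is exactly the content of the lemma as stated in \citet{balcan2013active}, whose proof rests on the geometric lemmas of \citet{lovasz2007geometry}, and we do not reprove it from scratch here.
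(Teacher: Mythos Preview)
The paper does not provide its own proof of this lemma: it is stated as a known result, attributed to \citet{balcan2013active} (building on \citet{lovasz2007geometry}), and used as a black box. So there is no ``paper's own proof'' to compare against; your proposal already does strictly more than the paper by sketching the argument.

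That said, your sketch is accurate where it is complete. The reduction to $d=2$ via orthogonal projection is correct and standard (Pr\'ekopa for marginals, isotropy preserved under orthogonal projection, rotation invariance of isotropy). Your identification of the disagreement region as the double wedge $\cbr{|x_1|<|x_2|\tan(\theta/2)}$ is correct. The lower bound $p\ge c_0 r_0^2\,\theta$ via the uniform density lower bound on a small ball is exactly the right argument and matches the paper's own Lemma~\ref{lem:ilc-density-lb}. For the upper bound, you correctly isolate the nontrivial step --- controlling $\int|s|\,m(s)\,ds$ where $m(s)=\max_{x_1}f(x_1,s)$ --- and correctly note that the naive truncation arguments lose a $\log(1/\theta)$ factor. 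Your observation that $m$ is log-concave (as a partial supremum of a jointly log-concave function) is valid, and your heuristic for why its scale is $\order(1)$ is in the right direction, though the rigorous version of that step (bounding $\EE\sbr{1/\sigma_{x_1\mid x_2}}$ or an equivalent quantity) is genuinely where the work in \citet{balcan2013active} lies. Since you ultimately cite the same source the paper does for that step, your proposal is consistent with the paper's treatment.
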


\section{Main Algorithm}

We present Algorithm~\ref{alg:main}, our noise-tolerant attribute-efficient active learning algorithm, in this section. It consists of two stages: an initialization stage \init (line~\ref{line:init}) and an iterative refinement stage (lines~\ref{line:refine-start} to~\ref{line:refine-end}). In the initialization stage, we aim to find a vector $\tilde{v}_0$ such that $\theta(\tilde{v}_0, u) \leq \frac\pi{32}$; 
in the iterative refinement stage, we aim to bring our iterate $\tilde{v}_k$ closer to $u$ after each phase $k$. Specifically, suppose that $\theta(\tilde{v}_{k-1}, u) \leq \frac{\pi}{32 \cdot 2^{k-1}}$ at the beginning of iteration $k$, then after the execution of line~\ref{line:refine}, we aim to obtain a new iterate $\tilde{v}_k$ such that $\theta(\tilde{v}_k, u) \leq \frac{\pi}{32 \cdot 2^k}$ with high probability. 
The iterative refinement stage ends when $k$ reaches $k_0$, in which case we are guaranteed that $\optu = \tilde{v}_{k_0}$ is such that
$\theta(\optu, u) \leq \frac{\pi}{32 \cdot 2^{k_0}} \leq c_1 \epsilon$, where $c_1$ is the constant defined in Lemma~\ref{lem:angle-disag}. From Lemma~\ref{lem:angle-disag}, we have that $\PP_{x \sim D_X}(h_{\optu}(x) \neq h_u(x)) \leq \epsilon$. Consequently, by triangle inequality, we have that
$\err(h_{\optu}, D) - \err(h_u, D) \leq \PP_{x \sim D_X}(h_{\optu}(x) \neq h_u(x)) \leq \epsilon$.


\begin{algorithm}[h]
\caption{Main algorithm}
\label{alg:main}
\begin{algorithmic}[1]
\REQUIRE Target error $\epsilon$, failure probability $\delta$, bounded noise level $\eta$, sparsity $s$.
\ENSURE Halfspace $\optu$ in $\RR^d$ such that $\err(h_{\optu}, D) - \err(h_u, D) \leq \epsilon$.
\STATE Let $k_0 = \lceil \log \frac1{c_1 \epsilon} \rceil$ be the total number of iterations, where $c_1$ is defined in Lemma~\ref{lem:angle-disag} .
\STATE Let $\tilde{v}_0 \gets \init(\frac{\delta}{2}, \eta, s)$.  // {See Algorithm~\ref{alg:init}.}
\label{line:init}
\FOR{phases $k=1, 2, \dots, k_0$}
\label{line:refine-start}
\STATE $v_{k-1} \gets \calH_s(\tilde{v}_{k-1})$.
\STATE $\tilde{v}_k \leftarrow \refine(v_{k-1}, \frac{\delta}{2 k(k+1)}, \eta, s, \alpha_k, b_k, \Kcal_k, R_k, T_k)$, where the
step size $\alpha_k = \tilde{\Theta}\del{(1-2\eta) 2^{-k}}$, bandwidth $b_k = \Theta\del{(1-2\eta) 2^{-k}}$,
constraint set 
\begin{equation*}
\Kcal_k = \cbr{w \in \RR^d: \| w - v_{k-1} \|_2 \leq \pi\cdot 2^{-k-3}, \| w \|_2 \leq 1 },
\end{equation*}
regularizer $R_k(w) = \Phi_{v_{k-1}}(w)$, number of iterations $T_k = \order\del{\frac{s}{(1-2\eta)^2} \big(\ln \frac{d \cdot k^2 2^k}{\delta (1-2\eta)}\big)^3 }$. \label{line:refine}
\ENDFOR
\label{line:refine-end}
\RETURN $\optu \gets \tilde{v}_{k_0}$. 
\end{algorithmic}
\end{algorithm}


\begin{algorithm}[h]
\caption{\refine}
\label{alg:refine}
\begin{algorithmic}[1]
\REQUIRE Initial halfspace $w_1$, failure probability $\delta'$, bounded noise level $\eta$, sparsity $s$, learning rate $\alpha$, bandwidth $b$, convex constraint set $\Kcal$, regularization function $R(w)$, number of iterations $T$.
\ENSURE Refined halfspace $\tilde{w}$.
\FOR{$t=1, 2, \dots, T$}
\label{line:omd-loop-start}
\STATE Sample $x_t$ from $D_{X \mid \hat{w}_t, b}$, the conditional distribution of $D_X$ on $B_{\hat{w}_t, b}$ and query $\calO$ for its label $y_t$ (recall that $\hat{w}_t$ is the $\ell_2$-normalization of $w_t$).
\STATE Update:
$w_{t+1} \gets\arg\min_{w \in \Kcal} D_{R}\(w, \nabla R^{-1} \(\nabla R(w_t) - \alpha g_t \) \)$,
where the gradient $g_t = \del{ -\frac12 y_t + \del{\frac12 - \eta} \hat{y}_t } x_t$,  and $\hat{y}_t = \sign{w_t \cdot x_t }$.
\label{line:omd}
\ENDFOR
\label{line:omd-loop-end}
\STATE $\bar{w} \gets \frac{1}{T} \sum_{t=1}^T \hat{w}_t$.
\RETURN $\tilde{w} \leftarrow \frac{\bar{w}}{\| \bar{w} \|}$.
\end{algorithmic}
\end{algorithm}
\begin{algorithm}
\caption{\init}
\label{alg:init}
\begin{algorithmic}[1]
\REQUIRE Failure probability $\delta'$, bounded noise parameter $\eta$, sparsity parameter $s$.
\ENSURE Halfspace $\tilde{v}_0$ such that $\theta(\tilde{v}_0, u) \leq \frac{\pi}{32}$.
\STATE $(x_1,y_1),\ldots,(x_m, y_m) \gets$ draw $m$ examples iid from $D_X$, and query $\calO$ for their labels, where $m = 81 \cdot 2^{51} \cdot \frac{s \ln \frac{8d}{\delta'}}{(1-2\eta)^2}$.
\STATE Compute $w_{\text{avg}} = \frac 1 m \sum_{i=1}^m x_i y_i$.
\label{line:init-avg}
\STATE Let $w^\sharp = \frac{\calH_{\tilde{s}}(w_{\text{avg}})}{\| \calH_{\tilde{s}}(w_{\text{avg}}) \|}$, where $\tilde{s} = \frac{81\cdot 2^{38}}{(1-2\eta)^2}s$.
\label{line:init-ht}
\STATE Find a point $w_1$ in the set
$\Kcal = \cbr{w: \| w \|_2 \leq 1, \| w \|_1 \leq \sqrt{s}, \inner{w}{w^\sharp} \geq \frac{(1-2\eta)}{9 \cdot 2^{19}}}$.
\RETURN $\tilde{v}_0 \gets \refine(w_1, \frac{\delta'}{2}, \eta, s, \alpha, b, \Kcal, R, T)$, where step size $\alpha = \tilde{\Theta}\del{(1-2\eta)^2}$, bandwidth $b = \Theta\del{(1-2\eta)^2}$,
constraint set $\Kcal$,
regularizer $R(w) = \Phi_{w_1}(w)$, and number of iterations $T = \order\del{\frac{s}{(1-2\eta)^4}\del{\ln\frac{d}{\delta' (1-2\eta)}}^3}$.
\end{algorithmic}
\end{algorithm}




{\bfseries The refinement procedure.} We first describe our refinement procedure, namely Algorithm~\ref{alg:refine}, in detail. When used by Algorithm~\ref{alg:main}, it requires that the input $w_1$ has angle $\theta \in [0,\frac{\pi}{32}]$ with $u$, and aims to find a new $\tilde{w}$ such that it has angle $\theta/2$ with $u$. It performs iterative update on $w_t$'s (lines~\ref{line:omd-loop-start} to~\ref{line:omd-loop-end}) in the following manner. Given the current iterate $w_t$, it defines a (time-varying) sampling region $B_{\hat{w}_t, b}$, samples an example $x_t$ from $D_X$ conditioned on $B_{\hat{w}_t, b}$, and queries its label $y_t$. This time-varying sampling strategy has appeared in many prior works on active learning of halfspaces, such as~\cite{dasgupta2009percerptron,yan2017revisiting}.

Then, given the example $(x_t, y_t)$, it performs an online mirror descent update (line~\ref{line:omd}) with regularizer $R(w)$, along with a carefully designed update vector $-\alpha g_t$. The gradient vector
\begin{equation*}
g_t = \del{ -\frac12 y_t + \del{\frac12 - \eta} \hat{y}_t } x_t
= \begin{cases} -\eta y_t x_t, & y_t = \hat{y}_t, \\ -(1 - \eta) y_t x_t, & y_t \neq \hat{y}_t, \end{cases}
\end{equation*}
is a carefully-scaled version of $- y_t x_t$. Observe that if $\eta = 0$, i.e. the noise-free setting, our algorithm sets $g_t = -\ind{\hat{y}_t \neq y_t} y_t x_t$, which is the gradient widely used in online classification algorithms, such as Perceptron~\cite{rosenblatt1958perceptron}, Winnow~\cite{littlestone1987learning} and $p$-norm algorithms~\cite{grove2001general,gentile2003robustness}. As we shall see, this modified update is important to the algorithm's bounded noise tolerance (Lemma~\ref{lem:update-distance}). Observe that Algorithm~\ref{alg:refine} is computationally efficient, as each step of online mirror descent update only requires solving a convex optimization problem; specifically, $\Kcal$ is a convex set, and $D_R(\cdot, \cdot)$ is convex in its first argument.

In the calls of Algorithm~\ref{alg:refine} in Algorithm~\ref{alg:main}, the constraint set $\Kcal_k$ is different from the one in~\cite{zhang2018efficient}, where an additional $\ell_1$ constraint is used and is crucial for near-optimal dependence on the sparsity and dimension. Here $\ell_1$ constraints are not necessary. In fact, when invoking Algorithm~\ref{alg:refine}, we use regularizer $R(w)$ of form $\Phi_v(w) = \frac{1}{2(p-1)}\| w - v \|_p^2$ for $p = \frac{\ln (8d)}{\ln (8d) - 1}$, which is well known to induce attribute efficiency~\cite{grove2001general,gentile2003robustness}. See Appendix~\ref{sec:local-convergence} for a formal treatment.



After obtaining the iterates $\{w_t\}_{t=1}^T$, we tailor online-to-batch conversion~\cite{cesa2004generalization} to our problem: we take an average over the $\ell_2$-normalized $w_t$'s, and further normalize it to obtain our refined estimate $\tilde{w}$.



{\bfseries The initialization procedure.} Our initialization procedure, Algorithm~\ref{alg:init}, aims to produce a vector $\tilde{v}_0$ such that $\theta(\tilde{v}_0, u) \leq \frac\pi{32}$. It consists of two stages. At its first stage, it generates a very coarse estimate of $u$, namely $w^\sharp$, as follows: first, we take the average of $x_i y_i$'s to obtain $w_{\text{avg}}$ (line~\ref{line:init-avg}); next, it performs  hard-thresholding and normalization on $w_{\text{avg}}$ (line~\ref{line:init-ht}), with parameter $\tilde{s} = \order\big(\frac{s}{(1-2\eta)^2}\big)$.
As we will see, with $m = \order\big(\tilde{s} \ln d\big)$ label queries, $w^\sharp$, the output unit vector of the first stage, is such that $\inner{w^\sharp}{u} \geq \Omega(1-2\eta)$. At its second stage, it uses \refine (Algorithm~\ref{alg:refine}) to obtain a better estimate, with a constraint set $\Kcal$ that incorporates the knowledge obtained at the first stage: for all $w$ in $\Kcal$, $w$ satisfies $\inner{w}{w^\sharp} \geq \Omega(1-2\eta)$. Note that $u\in \Kcal$.
Technically speaking, this additional linear constraint ensures that for all $w$ in $\Kcal$, $\theta(w, u) \leq \pi - \Omega(1-2\eta)$, which gets around technical challenges when dealing with iterates $w_t$ that are nearly opposite to $u$. See Lemma~\ref{lem:halfspace-init} in Appendix~\ref{sec:init} for more details.

We remark that it may be possible to prove a refined bound on $\theta(w_{\text{avg}},u)$ smaller than, say, $\frac{\pi}{4}$, as existing lower bounds on $\theta(w_{\text{avg}},u)$, e.g. Theorem 2 of \cite{awasthi2015efficient}, do not rule out such possibility. This could lead to a more sample-efficient initialization procedure that avoids using the above \refine procedure with the specialized setting of constraint set $\Kcal$. If this were the case, combining this with the guarantees of \refine (Theorem~\ref{thm:refine} below) would imply an active learning algorithm with information-theoretically near-optimal label complexity of $\tilde{O}(\frac{s}{(1-2\eta)^2} \polylog{d,\frac1\epsilon})$ in this setting.
We leave this as an interesting open problem.

\section{Performance Guarantees}

We now provide formal performance guarantees of Algorithm~\ref{alg:main}, showing that: 1) it is able to achieve any target excess error rate $\epsilon \in (0, 1)$; 2) it tolerates any bounded noise rate $\eta \in [0, 1/2)$; and 3) its label complexity has near-optimal dependence on the sparsity and data dimension, and has substantially improved dependence on the noise rate.

\begin{theorem}[Main result]
Suppose Algorithm~\ref{alg:main} is run under a distribution $D$ such that Assumptions \ref{a:bn} and \ref{a:ilc} are satisfied. Then with probability $1-\delta$, it returns a halfspace $\optu$ such that $\err(h_{\optu}, D) - \err(h_u, D) \leq \epsilon$. Moreover, our algorithm tolerates any noise rate $\eta \in [0, 1/2)$, and asks for a total of $\tilde{\order}\big( \frac{s}{(1-2\eta)^4} \polylog{d, \frac1\epsilon, \frac1\delta}\big)$ labels.
\label{thm:main}
\end{theorem}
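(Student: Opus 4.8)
The plan is to combine, by an induction on the phase index $k$, the initialization guarantee of Theorem~\ref{thm:init} (base case) with the per-phase contraction guarantee of Theorem~\ref{thm:refine} (inductive step), the induction hypothesis being $\theta(\tilde v_k, u) \le \frac{\pi}{32 \cdot 2^k}$. First I would invoke Theorem~\ref{thm:init}: since line~\ref{line:init} of Algorithm~\ref{alg:main} runs $\init$ with failure probability $\frac\delta2$, with probability at least $1-\frac\delta2$ its output $\tilde v_0$ satisfies $\theta(\tilde v_0, u) \le \frac\pi{32}$ while querying only $\tilde\order\del{\frac{s}{(1-2\eta)^4}\polylog{d,\frac1\delta}}$ labels; this establishes the invariant for $k=0$.

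For the inductive step, condition on the event that the initialization and phases $1,\dots,k-1$ all succeeded, so that $\theta(\tilde v_{k-1}, u) \le \frac{\pi}{32 \cdot 2^{k-1}}$. The hard-thresholding step $v_{k-1} = \calH_s(\tilde v_{k-1})$ is controlled by the best-$s$-sparse-approximation property: as $u$ is $s$-sparse, $\twonorm{v_{k-1} - \tilde v_{k-1}} \le \twonorm{u - \tilde v_{k-1}}$, and since $\tilde v_{k-1}$ and $u$ are unit vectors, $\twonorm{u - \tilde v_{k-1}} = 2\sin\frac{\theta(\tilde v_{k-1},u)}{2} \le \theta(\tilde v_{k-1}, u)$; the triangle inequality then yields $\twonorm{v_{k-1} - u} \le 2\theta(\tilde v_{k-1}, u) \le \frac{\pi}{16 \cdot 2^{k-1}}$. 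Together with $\twonorm{u}=1$ this shows that $u$ (and trivially $v_{k-1}$) lies in the constraint set $\Kcal_k$, and that $v_{k-1}$ lies in the angular regime required for Theorem~\ref{thm:refine}. Applying Theorem~\ref{thm:refine} to the call of \refine in line~\ref{line:refine}, with failure probability $\frac{\delta}{2k(k+1)}$ and the stated choices of $\alpha_k, b_k, \Kcal_k, R_k, T_k$, we obtain with probability at least $1-\frac{\delta}{2k(k+1)}$ a vector $\tilde v_k$ with $\theta(\tilde v_k, u) \le \frac{\pi}{32 \cdot 2^k}$ at a cost of $T_k = \order\del{\frac{s}{(1-2\eta)^2}\polylog{d,\frac1\epsilon,\frac1\delta}}$ labels; this closes the induction.

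I would then assemble the result. By a union bound, the probability that the initialization or some phase fails is at most $\frac\delta2 + \sum_{k=1}^{k_0}\frac{\delta}{2k(k+1)} \le \frac\delta2 + \frac\delta2 = \delta$, using $\sum_{k \ge 1}\frac1{k(k+1)} = 1$. On the complementary event, $w = \tilde v_{k_0}$ satisfies $\theta(w,u) \le \frac{\pi}{32 \cdot 2^{k_0}} \le c_1 \epsilon$ by the choice $k_0 = \lceil \log\frac1{c_1\epsilon}\rceil$, so Lemma~\ref{lem:angle-disag} gives $\PP_{x\sim D_X}(h_w(x) \ne h_u(x)) \le \frac1{c_1}\theta(w,u) \le \epsilon$. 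Since $\ind{h_w(x)\ne y} - \ind{h_u(x)\ne y}$ vanishes unless $h_w(x)\ne h_u(x)$, taking expectations over $D$ gives $\err(h_w, D) - \err(h_u, D) \le \PP_{x\sim D_X}(h_w(x)\ne h_u(x)) \le \epsilon$, and $h_u$ is Bayes optimal by Assumption~\ref{a:bn}, so this is the excess error. For the label complexity, since $k_0 = \order\del{\log\frac1\epsilon}$ the refinement phases contribute $\sum_{k=1}^{k_0} T_k = \tilde\order\del{\frac{s}{(1-2\eta)^2}\polylog{d,\frac1\epsilon,\frac1\delta}}$, which, added to the initialization cost $\tilde\order\del{\frac{s}{(1-2\eta)^4}\polylog{d,\frac1\delta}}$, gives the claimed total $\tilde\order\del{\frac{s}{(1-2\eta)^4}\polylog{d,\frac1\epsilon,\frac1\delta}}$; the argument is valid for every $\eta\in[0,\frac12)$.

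All the substantive content is carried by Theorems~\ref{thm:init} and~\ref{thm:refine}, so I do not anticipate a genuine obstacle at this level: the theorem follows by composition and accounting. The one point demanding care is the threading of absolute constants — one must check that the geometric invariant $\theta(\tilde v_k, u) \le \frac{\pi}{32\cdot 2^k}$ survives both the (up to) factor-$2$ degradation from hard-thresholding and the contraction delivered by \refine, and that the iterate fed to each \refine call genuinely meets the hypotheses of Theorem~\ref{thm:refine}; this reduces to verifying a handful of inequalities between fixed numerical constants.
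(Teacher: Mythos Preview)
Your proposal is correct and follows essentially the same route as the paper: define the good events for initialization and each refinement phase, union-bound their failure probabilities using $\sum_{k\ge 1}\frac{1}{k(k+1)}=1$, run the induction $\theta(\tilde v_k,u)\le \frac{\pi}{32\cdot 2^k}$, convert the final angle to excess error via Lemma~\ref{lem:angle-disag} and triangle inequality, and sum the per-phase label costs. The only (harmless) redundancy is that your explicit hard-thresholding analysis is already folded into the hypotheses and proof of Theorem~\ref{thm:refine}, so at this level it suffices to invoke that theorem as a black box on input $\tilde v_{k-1}$.
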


The proof of this theorem consists of two parts: first, we show that with high probability, our initialization procedure returns a vector $\tilde{v}_0$ that is close to $u$, in the sense that $\|\tilde{v}_0\| = 1$ and $\theta(\tilde{v}_0, u) \leq \frac \pi {32}$ (Theorem~\ref{thm:init}); Second, we show that given such $\tilde{v}_0$, with high probability, our refinement procedure (lines~\ref{line:refine-start} to~\ref{line:refine-end}) will finally return a vector $\tilde{v}_{k_0}$ that has the target error rate $\epsilon$ (Theorem~\ref{thm:refine}). 
We defer the full proof of Theorem~\ref{thm:main} to Appendix~\ref{sec:main-proof}.
In Appendix~\ref{sec:passive}, we discuss an extension of the theorem that establishes an upper bound on the number of unlabeled examples it encounters, and discuss its implication to supervised learning.

{\bfseries Initialization step.} We first characterize the guarantees of \init in the following theorem.

\begin{theorem}[Initialization]
Suppose Algorithm~\ref{alg:init} is run under a distribution $D$ such that Assumptions \ref{a:bn} and \ref{a:ilc} are satisfied, with noise rate $\eta \in [0, 1/2)$, sparsity parameter $s$, and failure probability $\delta'$. Then with probability $1-\delta'$, it returns a unit vector $\tilde{v}_0$, such that $\theta(\tilde{v}_0, u) \leq \frac \pi {32}$. In addition, the total number of label queries it makes is $\order\big(\frac{s}{(1-2\eta)^4} \big(\ln \frac{d}{\delta'(1-2\eta)}\big)^3 \big)$.
\label{thm:init}
\end{theorem}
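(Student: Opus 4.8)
The plan is to follow the two–stage structure of Algorithm~\ref{alg:init}: analyze the averaging-plus-hard-thresholding stage (lines~\ref{line:init-avg}--\ref{line:init-ht}) and the \refine call separately, each with a failure-probability budget of $\frac{\delta'}{2}$, then add the label counts. \emph{Stage~1} (the content of Lemma~\ref{lem:w-sharp}) should show that with probability $1-\frac{\delta'}{2}$ the unit vector $w^\sharp$ satisfies $\inner{w^\sharp}{u}\ge\frac{1-2\eta}{9\cdot2^{19}}$ using $m=\tilde{\order}\del{\frac{s}{(1-2\eta)^2}\ln\frac{d}{\delta'}}$ labels. I would first compute the population mean $\bar w\defeq\EE_{(x,y)\sim D}[xy]=\EE_{x\sim D_X}\sbr{x\cdot\EE[y\mid x]}$: by Assumption~\ref{a:bn}, $\EE[y\mid x]=(1-2\eta(x))\sign{\inner{u}{x}}$ with $0\le\eta(x)\le\eta$, hence $\inner{\bar w}{u}=\EE_x\sbr{(1-2\eta(x))\abr{\inner{u}{x}}}\ge(1-2\eta)\,\EE_x\sbr{\abr{\inner{u}{x}}}=\Omega(1-2\eta)$, since $\EE_x[\abr{\inner{u}{x}}]$ is a positive absolute constant for isotropic log-concave $D_X$, and also $\twonorm{\bar w}=\order(1)$. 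Sub-exponential coordinate-wise concentration of isotropic log-concave marginals then gives $\infnorm{w_{\text{avg}}-\bar w}\le\order\del{(1-2\eta)/\sqrt{\tilde s}}$ once $m$ is as above, with $\tilde s=\Theta\del{s/(1-2\eta)^2}\ge s$. Since $u$ is $s$-sparse and $\tilde s\ge s$, the standard hard-thresholding estimate~\citep{blumensath2009iterative} bounds $\twonorm{\calH_{\tilde s}(w_{\text{avg}})-\bar w}$ by $\order\del{\sqrt{\tilde s}\,\infnorm{w_{\text{avg}}-\bar w}}=\order(1-2\eta)$, so normalizing $\calH_{\tilde s}(w_{\text{avg}})$ preserves a correlation $\Omega(1-2\eta)$ with $u$, which gives the claimed bound.

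\emph{Stage~2} (the content of Lemma~\ref{lem:halfspace-init}) passes from $w^\sharp$ to $\theta(\tilde v_0,u)\le\frac{\pi}{32}$ through \refine. I would first verify $u\in\Kcal$: $\twonorm u=1$; since $u$ is $s$-sparse, Cauchy--Schwarz gives $\onenorm u\le\sqrt s\,\twonorm u=\sqrt s$; and Stage~1 supplies $\inner{u}{w^\sharp}\ge\frac{1-2\eta}{9\cdot2^{19}}$. The decisive structural point is that the extra linear constraint forces every $w\in\Kcal$ to satisfy $\cos\theta(w,w^\sharp)=\inner{w}{w^\sharp}/\twonorm w\ge\frac{1-2\eta}{9\cdot2^{19}}$ (using $\twonorm w\le1$), so by the triangle inequality for angles $\theta(w,u)\le2\arccos\del{\tfrac{1-2\eta}{9\cdot2^{19}}}=\pi-\Omega(1-2\eta)$; thus no iterate of \refine is nearly antipodal to $u$. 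I then run the online-mirror-descent argument of the overview: with the $1$-strongly-convex (w.r.t.\ $\|\cdot\|_p$) regularizer $R=\Phi_{w_1}$, which is minimized at $w_1$, the standard OMD regret bound~\citep[Theorem~6.8]{orabona2019modern} gives $\sum_{t=1}^T\inner{w_t}{g_t}-\sum_{t=1}^T\inner{u}{g_t}\le\frac{D_R(u,w_1)}{\alpha}+\alpha\sum_{t=1}^T\|g_t\|_q^2$. I would then bound: (i) $D_R(u,w_1)=\frac{1}{2(p-1)}\|u-w_1\|_p^2\le\frac{1}{2(p-1)}\onenorm{u-w_1}^2\le\frac{2s}{p-1}=2s(q-1)=\tilde{\order}(s)$; (ii) $\|g_t\|_q\le\|x_t\|_q\le d^{1/q}\infnorm{x_t}\le e\,\infnorm{x_t}=\tilde{\order}(1)$ with high probability (as $d^{1/q}=d^{1/\ln(8d)}\le e$ and isotropic log-concave marginals have sub-exponential tails), since the scalar prefactor of $g_t$ is at most $1$; (iii) $\inner{w_t}{g_t}\ge-\eta\,\abr{\inner{w_t}{x_t}}\ge-\eta b$, using that $g_t=-\eta\hat y_t x_t$ when $y_t=\hat y_t$ and $g_t=(1-\eta)\hat y_t x_t$ otherwise, together with $\hat y_t\inner{w_t}{x_t}=\abr{\inner{w_t}{x_t}}$, $x_t\in B_{\hat w_t,b}$, $\twonorm{w_t}\le1$; and (iv) invoke the gradient lemma (Lemma~\ref{lem:update-distance}) so that $\EE\sbr{\inner{u}{-g_t}\mid w_t}\ge f_{u,b}(w_t)$ for the $(1-2\eta)$-scaled angle-progress function $f_{u,b}$, which is bounded below by a positive quantity as long as $\theta(w_t,u)\in\sbr{\frac{\pi}{32},\,\pi-\Omega(1-2\eta)}$ --- the range guaranteed above. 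Combining (i)--(iv) with an Azuma-type martingale concentration to replace $\sum_t\EE\sbr{\inner{u}{-g_t}\mid w_t}$ by $\sum_t\inner{u}{-g_t}$, and substituting the stated $\alpha=\tilde{\Theta}\del{(1-2\eta)^2}$, $b=\Theta\del{(1-2\eta)^2}$, $T=\order\del{\frac{s}{(1-2\eta)^4}\del{\ln\frac{d}{\delta'(1-2\eta)}}^3}$, makes $\frac1T\sum_{t=1}^T f_{u,b}(w_t)$ small enough to force $\theta(\cdot,u)\le\frac{\pi}{32}$; convexity of $f_{u,b}$ (the online-to-batch conversion of~\citet{cesa2004generalization}) transfers this to $\bar w=\frac1T\sum_t\hat w_t$, and since normalizing preserves angles, $\theta(\tilde v_0,u)\le\frac{\pi}{32}$.

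For the label complexity: Stage~1 queries $m=\tilde{\order}\del{\frac{s}{(1-2\eta)^2}\ln\frac{d}{\delta'}}$ labels and Stage~2 queries exactly $T=\order\del{\frac{s}{(1-2\eta)^4}\del{\ln\frac{d}{\delta'(1-2\eta)}}^3}$ labels (each band $B_{\hat w_t,b}$ has mass $\Theta(b)$ under isotropic log-concave $D_X$, so the rejection sampling inside \refine inflates only the count of \emph{unlabeled} draws); the total is dominated by $T$, matching the claim, and a union bound over the two failure events of total probability $\le\delta'$ finishes the proof. I expect step (iv) to be the main obstacle: proving $\EE\sbr{\inner{u}{-g_t}\mid w_t}\ge f_{u,b}(w_t)$ with $f_{u,b}$ still a useful potential all the way out to obtuse angles $\pi-\Omega(1-2\eta)$ requires carefully integrating the modified Perceptron update $g_t=\del{-\frac12 y_t+(\frac12-\eta)\hat y_t}x_t$ over the thin band $B_{\hat w_t,b}$ using the geometry of log-concave densities, and quantifying precisely how the $(\frac12-\eta)\hat y_t$ correction term offsets the asymmetry of the non-uniform unlabeled distribution --- the very feature that makes the plain Perceptron update fail --- so that the net drift of $w_t$ toward $u$ stays $\Omega(1-2\eta)$ even when $w_t$ is far from $u$.
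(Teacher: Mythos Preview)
Your overall two-stage decomposition matches the paper's proof exactly (Lemma~\ref{lem:w-sharp} for Stage~1, Lemma~\ref{lem:halfspace-init} for Stage~2, union bound to finish), and most of the individual ingredients you list are the right ones. But two steps, as written, do not go through.

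\textbf{Stage~1, the hard-thresholding step.} You claim that because $u$ is $s$-sparse, ``the standard hard-thresholding estimate'' gives $\twonorm{\calH_{\tilde s}(w_{\avg})-\bar w}\le \order\del{\sqrt{\tilde s}\,\infnorm{w_{\avg}-\bar w}}$. This does not follow: $\bar w=\EE[xy]$ is not sparse in general, so bounding the $\ell_2$ distance of $\calH_{\tilde s}(w_{\avg})$ to $\bar w$ would require controlling $\twonorm{\bar w_{\bar\Omega}}$ on the \emph{complement} of the selected support $\Omega$, and the sparsity of the unrelated vector $u$ gives you nothing here. The paper sidesteps $\bar w$ entirely at this point: it first shows $\inner{w_{\avg}}{u}\ge\Omega(1-2\eta)$ by concentrating the \emph{scalar} $y\inner{u}{x}$, and then applies the inner-product perturbation lemma (Lemma~\ref{lem:ht-ip-2}),
\[
\abs{\inner{\calH_{\tilde s}(w_{\avg})}{u}-\inner{w_{\avg}}{u}}\le \sqrt{\tfrac{s}{\tilde s}}\,\twonorm{\calH_{\tilde s}(w_{\avg})},
\]
which is where the $s$-sparsity of $u$ actually enters. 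The norm $\twonorm{\calH_{\tilde s}(w_{\avg})}\le 2$ is established separately (Lemma~\ref{lem:subset-norm}) via the Bessel-type bound $\sum_j(\EE[x^{(j)}y])^2\le 1$ plus coordinatewise concentration.

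\textbf{Stage~2, the online-to-batch step.} This is the more serious gap. You write that ``convexity of $f_{u,b}$'' transfers the bound on $\frac1T\sum_t f_{u,b}(w_t)$ to $\bar w=\frac1T\sum_t\hat w_t$. But $f_{u,b}(w)=\EE_{x}\sbr{\abr{u\cdot x}\,\ind{\sign{\inner{w}{x}}\ne\sign{\inner{u}{x}}}}$ is an expectation of an indicator and is \emph{not} convex in $w$; the Cesa-Bianchi--Conconi--Gentile conversion you cite needs a convex loss and does not apply. The paper instead argues as follows: from the small average, Markov's inequality gives that all but a tiny fraction of the iterates satisfy $f_{u,b}(w_t)<\frac{1-2\eta}{3^6\cdot 2^{40}}$; for each such $t$, Claim~\ref{claim:f-to-angle} (using both items of Lemma~\ref{lem:f-refined} together with the angle cap $\theta(w_t,u)\le\pi-\Omega(1-2\eta)$ from Lemma~\ref{lem:angle-not-flat}) forces $\theta(w_t,u)$ to be tiny; finally, the purely geometric Lemma~\ref{lem:avg-angle} shows that averaging the unit vectors $\hat w_t$ and renormalizing can only increase the cosine with $u$, yielding $\theta(\tilde v_0,u)\le\frac{\pi}{32}$. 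Your argument needs this Markov-plus-angle-averaging route in place of the nonexistent convexity.
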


We prove the theorem in two steps: first, we show that lines~\ref{line:init-avg} and~\ref{line:init-ht} of Algorithm~\ref{alg:init} returns a unit vector $w^\sharp$ that has a positive inner product with $u$, specifically, $\Omega(1-2\eta)$. This gives a halfspace constraint on $u$, formally $\inner{w^\sharp}{u} \geq \Omega(1-2\eta)$. 
Next, we show that 
applying Algorithm~\ref{alg:refine} with the feasible set $\Kcal$ that incorporates the halfspace constraint, and an appropriate choice of $b$, gives a unit vector $\tilde{v}_0$ such that $\theta(\tilde{v}_0, u) \leq \frac \pi {32}$. We defer the full proof of the theorem to Appendix~\ref{sec:init}.


{\bfseries Refinement step.} Theorem~\ref{thm:refine} below shows that after hard thresholding and one step of \refine (line~\ref{line:refine}), Algorithm~\ref{alg:main} halves the angle upper bound between the current predictor $\tilde{v}_k$ and $u$. Therefore, by induction, repeatedly applying Algorithm~\ref{alg:refine} ensures $\theta(\tilde{v}_{k_0}, u) = O(\epsilon)$ with high probability.

\begin{theorem}[Refinement]
Suppose we are given a unit vector $\tilde{v}$ such that $\theta(\tilde{v}, u) \leq \theta \in [0, \frac{\pi}{32}]$. Define $v \defeq \Hcal_s(\tilde{v})$. 
Suppose Algorithm~\ref{alg:refine} is run with initial halfspace $v$,  
confidence $\delta'$, bounded noise rate $\eta$, sparsity $s$, bandwidth $b = \Theta\del{(1-2\eta) \theta}$, step size $\alpha = \tilde{\Theta}\del{(1-2\eta) \theta}$, constraint set $\Kcal = \cbr{w: \| w - v \|_2 \leq 2\theta, \| w \|_2 \leq 1 }$, regularization function $R(w) = \Phi_{v}$, number of iterations $T = \order\big(\frac{s}{(1-2\eta)^2} \big(\ln \frac{d}{\delta' \theta (1-2\eta)}\big)^3 \big)$. Then with probability $1-\delta'$, it outputs 
$\tilde{v}'$ such that $\theta(\tilde{v}', u) \leq \frac\theta2$; moreover, the total number of label queries it makes is $\order\big(\frac{s}{(1-2\eta)^2} \big(\ln \frac{d}{\delta' \theta (1-2\eta)}\big)^3 \big)$.
\label{thm:refine}
\end{theorem}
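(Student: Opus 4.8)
I would realize Algorithm~\ref{alg:refine} as online mirror descent (OMD) on a sequence of \emph{linear} losses, followed by an online-to-batch conversion. Concretely, $w_1 = v,\dots,w_T$ are exactly the OMD iterates run on losses $\ell_t(w) = \inner{g_t}{w}$ over the convex set $\Kcal$ with the $1$-strongly convex (w.r.t.\ $\|\cdot\|_p$) regularizer $R = \Phi_v$ and step size $\alpha$, so the regret bound quoted in the introduction gives
\begin{equation*}
\sum_{t=1}^{T} \inner{-g_t}{u - w_t} \ \le\ \frac{D_R(u, v)}{\alpha} + \alpha \sum_{t=1}^{T} \| g_t \|_q^2 .
\end{equation*}
This uses $u \in \Kcal$: indeed $\|u\|_2 = 1$, and since $\Hcal_s(\tilde v)$ is the best $s$-sparse $\ell_2$-approximation of $\tilde v$ and $u$ is $s$-sparse, $\|u - v\|_2 \le \|u - \tilde v\|_2 + \|\tilde v - \Hcal_s(\tilde v)\|_2 \le 2\|u - \tilde v\|_2 = 4\sin\tfrac{\theta(\tilde v, u)}{2} \le 2\theta$. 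The proof then reduces to showing the time-average of a progress quantity $f_{u,b}(w_t)$ — measuring the distance of $w_t$ to $u$ — is small, and converting this into an angle bound for $\tilde v' = \bar w / \|\bar w\|$ via online-to-batch.

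\textbf{Bounding the two sides.} On the right-hand side: since $u - v$ is $2s$-sparse and $\tfrac{1}{2(p-1)} = \tfrac{\ln(8d) - 1}{2} = \Theta(\ln d)$, restricting $\|\cdot\|_p$ to the support gives $D_R(u, v) = \Phi_v(u) = \tfrac{1}{2(p-1)} \|u - v\|_p^2 \le O(s \ln d)\,\|u-v\|_2^2 = O(s\theta^2 \ln d)$. For the gradient norms, $g_t$ is a scalar multiple of $x_t$ of absolute value at most $1 - \eta \le 1$, so $\|g_t\|_q \le \|x_t\|_q \le d^{1/q}\|x_t\|_\infty = e^{\ln d / \ln(8d)}\|x_t\|_\infty \le e\,\|x_t\|_\infty$; by the coordinate tail bounds for isotropic log-concave distributions and a union bound over the $T$ rounds, with probability $1 - \delta'/4$ we have $\|g_t\|_q \le M := O(\ln\tfrac{dT}{\delta'})$ for all $t$. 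On the left-hand side, a direct computation of $\EE\sbr{\inner{-g_t}{u - w_t} \mid w_t}$, splitting on whether $\hat y_t = \sign{\inner{u}{x_t}}$ and using $\PP(y_t \neq \sign{\inner{u}{x_t}} \mid x_t) \le \eta$ together with $\hat y_t = \sign{\inner{w_t}{x_t}}$, shows this conditional expectation is at least
\begin{equation*}
f_{u,b}(w_t) \ :=\ (1 - 2\eta)\, \EE_{x \sim D_{X \mid \hat w_t, b}} \sbr{ \abs{\inner{u}{x}} \cdot \ind{ \sign{\inner{w_t}{x}} \neq \sign{\inner{u}{x}} } } \ \ge\ 0 ;
\end{equation*}
this is exactly the content I would isolate as Lemma~\ref{lem:update-distance}, and it is here that the $\eta$-correction in $g_t$ (rather than the plain Perceptron update) is essential to keep the expectation non-negative despite the asymmetry of $D_X$. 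A band-geometry estimate for isotropic log-concave $D_X$ then lower-bounds $f_{u,b}(w) = \Omega\del{(1 - 2\eta)\, \theta(w, u) \cdot \min(1, \theta(w,u)/b)}$ for all $w \in \Kcal$; since $\Kcal$ forces $\theta(w, u) = O(\theta) < \tfrac\pi2$ and $b = \Theta((1-2\eta)\theta)$, this rearranges to $\theta(w, u)^2 \le \tfrac{O(\theta)}{1 - 2\eta} f_{u,b}(w)$ uniformly over $w \in \Kcal$.

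\textbf{Concentration, assembly, and online-to-batch.} To turn the regret inequality into a bound on $\tfrac1T \sum_t f_{u,b}(w_t)$, note $Z_t := \inner{-g_t}{u - w_t} - \EE\sbr{\inner{-g_t}{u - w_t} \mid \calF_{t-1}}$ is a martingale-difference sequence; on the high-probability event that $\abs{\inner{u}{x_t}} = O(\theta \ln\tfrac{T}{\delta'})$ for all $t$ (log-concave tails plus a union bound over rounds), $\abs{Z_t} = O(\theta \ln\tfrac{T}{\delta'})$, so Azuma/Freedman gives $\abs{\sum_t Z_t} = O(\theta \cdot \mathrm{polylog})\sqrt{T}$ with probability $1 - \delta'/4$. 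Combining with the regret bound and dividing by $T$,
\begin{equation*}
\frac1T \sum_{t=1}^{T} f_{u,b}(w_t) \ \le\ \frac{D_R(u,v)}{\alpha T} + \frac{\alpha}{T}\sum_{t=1}^{T}\|g_t\|_q^2 + \frac{O(\theta \cdot \mathrm{polylog})}{\sqrt T} .
\end{equation*}
Plugging in $D_R(u,v) = O(s\theta^2 \ln d)$, $\|g_t\|_q \le M$, and the prescribed $\alpha = \tilde\Theta((1-2\eta)\theta)$ (chosen to balance the first two terms) and $T = \Theta\del{\tfrac{s}{(1-2\eta)^2}\del{\ln\tfrac{d}{\delta'\theta(1-2\eta)}}^3}$ (whose cubic log supplies the slack), each term is at most $c_0 (1-2\eta)\theta$ with the constant $c_0$ as small as we like. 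Then $\tfrac1T\sum_t \theta(w_t, u)^2 \le \tfrac{O(\theta)}{1-2\eta}\cdot c_0(1-2\eta)\theta \le \tfrac{\theta^2}{\pi^2}$. Finally, since $\|\bar w\| \le \tfrac1T\sum_t\|\hat w_t\| = 1$ and every $\inner{\hat w_t}{u} = \cos\theta(w_t, u) > 0$, we obtain
\begin{equation*}
\cos\theta(\tilde v', u) = \frac{1}{\|\bar w\|}\cdot\frac1T\sum_{t=1}^{T}\cos\theta(w_t, u) \ \ge\ 1 - \frac1T\sum_{t=1}^{T}\del{1 - \cos\theta(w_t, u)} \ \ge\ 1 - \frac{1}{2T}\sum_{t=1}^{T}\theta(w_t, u)^2 \ \ge\ 1 - \frac{\theta^2}{2\pi^2} \ \ge\ \cos\tfrac\theta2 ,
\end{equation*}
where the last step uses $1 - \cos x \ge \tfrac{2x^2}{\pi^2}$ on $[0,\pi]$; hence $\theta(\tilde v', u) \le \tfrac\theta2$. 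The algorithm makes one label query per round, so the label complexity is $T = \order\del{\tfrac{s}{(1-2\eta)^2}\del{\ln\tfrac{d}{\delta'\theta(1-2\eta)}}^3}$ (unlabeled draws spent rejection-sampling into the band $B_{\hat w_t, b}$, of mass $\Theta(b)$, are not counted), and a union bound over the three failure events gives overall success probability $1 - \delta'$.

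\textbf{Main obstacle.} The crux is the per-step drift together with its geometric lower bound: showing that the aggressive update $g_t = \del{-\tfrac12 y_t + \del{\tfrac12 - \eta}\hat y_t}x_t$ makes $\EE\sbr{\inner{-g_t}{u - w_t}\mid w_t}$ both non-negative and at least $\Omega\del{(1-2\eta)\,\theta(w_t,u)^2/\theta}$ uniformly over $w_t \in \Kcal$. Unlike \citet{yan2017revisiting}, one cannot exploit symmetry of $D_X$ (log-concave unlabeled distributions can be highly asymmetric), so the $\eta$-correction term is doing genuine work, and the band-geometry computation of $\EE_{x\sim D_{X\mid\hat w, b}}\sbr{\abs{\inner{u}{x}}\cdot\ind{\text{disagreement}}}$ as a function of $\theta(w,u)$ and $b$ is where the technical weight sits. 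A secondary but unavoidable chore is the constant- and polylog-bookkeeping that certifies the prescribed $\alpha$ and cubic-log $T$ really force $\tfrac1T\sum_t f_{u,b}(w_t) \le c_0(1-2\eta)\theta$ for a small enough $c_0$.
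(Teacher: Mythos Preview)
Your proposal follows the paper's proof: OMD regret on the linear losses, the drift bound $\EE[\inner{u}{-g_t}\mid w_t]\geq(1-2\eta)f_{u,b}^{\text{paper}}(w_t)$ (Lemma~\ref{lem:update-distance}), the geometric lower bound on $f_{u,b}$ (Lemma~\ref{lem:f-refined}), martingale concentration (the paper's Lemma~\ref{lem:dev-u}), and online-to-batch averaging of cosines. The one substantive difference is how you convert ``$\tfrac1T\sum_t f_{u,b}(w_t)$ small'' into $\theta(\tilde v',u)\leq\theta/2$. The paper applies Markov's inequality to split $[T]$ into a small ``bad'' set $A$ (bounded only by $\theta(w_t,u)\leq 16\theta$ from membership in $\Kcal$) and a large set $\bar A$ on which $f_{u,b}(w_t)$ is below a threshold; on $\bar A$ it invokes Lemma~\ref{lem:f-refined} (which requires $\theta(w,u)\geq 36b$) to get $\theta(w_t,u)\leq\theta/5$, and then averages cosines. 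You instead posit a uniform inequality $\theta(w,u)^2\lesssim\tfrac{\theta}{1-2\eta}f_{u,b}(w)$ over $\Kcal$ and average squared angles directly. This is tidier, but your asserted lower bound $f_{u,b}(w)=\Omega\bigl((1-2\eta)\,\theta(w,u)\min(1,\theta(w,u)/b)\bigr)$ covers the regime $\theta(w,u)<36b$ that the paper never analyzes; the $\theta(w,u)^2/b$ scaling is morally right, but you would still owe a proof. Alternatively you can sidestep it exactly as the paper does: whenever $\theta(w_t,u)<36b=O((1-2\eta)\theta)$ you already have $\theta(w_t,u)^2=O((1-2\eta)^2\theta^2)$, which is small enough without invoking $f_{u,b}$ at all.

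One minor sloppiness to fix: you assert $\EE[\inner{-g_t}{u-w_t}\mid w_t]\geq f_{u,b}(w_t)$, but Lemma~\ref{lem:update-distance} only controls $\inner{u}{-g_t}$; the leftover piece $\inner{w_t}{g_t}$ need not have nonnegative conditional expectation. Since $|\inner{w_t}{g_t}|\leq b$ deterministically (because $\|w_t\|\leq 1$ and $x_t\in B_{\hat w_t,b}$), this only costs an additive $b$ per step --- precisely the $\tfrac{b}{1-2\eta}$ term in the paper's Lemma~\ref{lem:omd} --- and is absorbed by your choice $b=\Theta((1-2\eta)\theta)$, but it should be tracked rather than dropped.
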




The intuition behind the theorem is as follows: we define a function $f_{u,b}(w)$ that measures the closeness between unit vector $w$ and the underlying optimal classifier $u$. As Algorithm~\ref{alg:refine} performs online mirror descent on the linear losses
$\cbr{w \mapsto \inner{g_t}{w}}_{t = 1}^T$, it achieves a regret guarantee, which implies an upper bound on the average value of $\cbr{f_{u,b}(w_t)}_{t=1}^T$. As $f_{u,b}(w)$ measures the closeness between $w$ and $u$, we can conclude that there is a overwhelming portion of $\cbr{w_t}_{t=1}^T$ that has a small angle with $u$. Consequently, by averaging and normalization, it can be argued that the resulting unit vector $\tilde{v}' = \tilde{w}$ is such that $\theta(\tilde{v}', u) \leq \frac\theta 2$. We defer the full proof of the theorem to Appendix~\ref{sec:local-convergence}.

\subsection{Implication for supervised learning}
\label{sec:passive}
In this section, we briefly outline the implication of our results to supervised learning (i.e. passive learning). As our algorithms acquire examples in a streaming fashion, it can be readily seen that, a variant of Algorithm~\ref{alg:main} can be viewed as a supervised learning algorithm: each time Algorithm~\ref{alg:main} draws unlabeled example from $D_X$, the variant immediately queries $\Ocal$ for its label. Consequently, the number of examples it encounters equals the total number of labeled examples it consumes, which corresponds to its sample complexity.

We now show that Algorithm~\ref{alg:main} uses at most
$\tilde{\order}\del{ \frac{s }{(1-2\eta)^3} \del{\frac{1}{(1-2\eta)^3} + \frac1\epsilon} \cdot \polylog{d}}$
unlabeled examples; therefore, its induced supervised learning algorithm has a sample complexity of $\tilde{\order}\del{ \frac{s}{(1-2\eta)^3} \del{\frac{1}{(1-2\eta)^3} + \frac1\epsilon} \cdot \polylog{d} }$.
Without the sparsity assumption (i.e. setting $s = d$), this yields a sample complexity of $\tilde{\order}\del{ \frac{d}{(1-2\eta)^3} \del{\frac{1}{(1-2\eta)^3} + \frac1\epsilon} }$.

\begin{theorem}\label{thm:sample-size}
Suppose that Assumptions~\ref{a:bn} and~\ref{a:ilc} are satisfied. With probability $1-\delta$, Algorithm~\ref{alg:main} makes at most $\tilde{\order}\del{ \frac{s}{(1-2\eta)^3} \del{\frac{1}{(1-2\eta)^3} + \frac1\epsilon} \cdot \polylog{d}}$ queries to the unlabeled example generation oracle $\EX$.
\end{theorem}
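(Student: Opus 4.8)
The plan is to reduce the count of unlabeled-example ($\EX$) queries to the count of label queries, which Theorem~\ref{thm:main} already controls. First I would note that Algorithm~\ref{alg:main} calls $\EX$ in exactly two ways: the $m$ i.i.d.\ draws from $D_X$ in line~\ref{line:init-avg} of Algorithm~\ref{alg:init}, which contribute $m = \tilde{\order}\del{\frac{s}{(1-2\eta)^2}\polylog{d}}$ unlabeled examples (negligible), and, inside every invocation of Algorithm~\ref{alg:refine}, the rejection sampling that produces each $x_t \sim D_{X \mid \hat w_t, b}$ by repeatedly calling $\EX$ until an example falls in the band $B_{\hat w_t, b}$. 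So the whole argument is about bounding, with high probability, the total number of $\EX$-calls spent in these rejection-sampling episodes, and each episode produces one of a \emph{fixed}, deterministically bounded number $N := T_{\mathrm{init}} + \sum_{k=1}^{k_0} T_k$ of conditional samples (here $T_{\mathrm{init}}$ and the $T_k$ are the iteration counts specified in Algorithms~\ref{alg:init} and~\ref{alg:main}).

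The key deterministic ingredient is a uniform lower bound on band probabilities: for every unit vector $\hat w$ and every $b$ below a universal constant $b_0$, $\PP_{x \sim D_X}\del{x \in B_{\hat w, b}} \geq c_3 b$ for an absolute constant $c_3 > 0$. This holds because the one-dimensional marginal $\inner{\hat w}{x}$ of an isotropic log-concave distribution is itself isotropic log-concave, hence has density bounded below by a constant on $[-b_0, b_0]$~\citep{lovasz2007geometry}. Since $1-2\eta \le 1$, all bandwidths the algorithm uses, namely $b_k = \Theta\del{(1-2\eta)2^{-k}}$ in the refinement phases and $b = \Theta\del{(1-2\eta)^2}$ in initialization, are at most $b_0$. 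Given this, the number of $\EX$-calls needed for one conditional draw at some direction $\hat w_t$ is, conditioned on the history that determines $\hat w_t$, stochastically dominated by a geometric random variable with success probability $c_3 b$; so with probability $1-\delta''$ it is at most $\frac{1}{c_3 b}\ln\frac{1}{\delta''}$. Taking a union bound over all $N$ conditional samples with $\delta'' = \delta/N$, I get that with probability $1-\delta$ every invocation of Algorithm~\ref{alg:refine} with $T$ iterations and bandwidth $b$ consumes $\tilde{\order}\del{T/b}$ unlabeled examples.

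It then remains to sum. For the second stage of \init, $T = \order\del{\frac{s}{(1-2\eta)^4}\del{\ln\frac{d}{\delta'(1-2\eta)}}^3}$ and $b = \Theta\del{(1-2\eta)^2}$, giving $\tilde{\order}\del{\frac{s}{(1-2\eta)^6}\polylog{d}} = \tilde{\order}\del{\frac{s}{(1-2\eta)^3}\cdot\frac{1}{(1-2\eta)^3}\polylog{d}}$. For the $k$-th refinement phase, $T_k = \order\del{\frac{s}{(1-2\eta)^2}\del{\ln\frac{d k^2 2^k}{\delta(1-2\eta)}}^3}$ and $b_k = \Theta\del{(1-2\eta)2^{-k}}$, giving $\tilde{\order}\del{\frac{s\, 2^k}{(1-2\eta)^3}\polylog{d}}$; summing over $k = 1,\dots,k_0$ with $k_0 = \lceil\log\frac{1}{c_1\epsilon}\rceil$ and using $\sum_{k=1}^{k_0} 2^k = \order\del{2^{k_0}} = \order\del{1/\epsilon}$ yields $\tilde{\order}\del{\frac{s}{(1-2\eta)^3}\cdot\frac{1}{\epsilon}\polylog{d}}$ over all refinement phases. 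Adding the two contributions and absorbing the $m$ draws of line~\ref{line:init-avg} gives the stated $\tilde{\order}\del{\frac{s}{(1-2\eta)^3}\del{\frac{1}{(1-2\eta)^3} + \frac{1}{\epsilon}}\polylog{d}}$.

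The main obstacle is not any single estimate but the fact that the sampling directions $\hat w_t$ are chosen adaptively from past randomness, so the band events across iterations are dependent and a naive Chernoff bound on the number of rejections does not apply. This is exactly what the uniform-in-$\hat w$ bound $\PP\del{x \in B_{\hat w, b}} \ge c_3 b$ buys us: it lets the geometric tail bound be applied conditionally on the entire history, after which a plain union bound over the deterministically many conditional samples completes the argument with no martingale machinery. A secondary point to be careful about is that $m$ and the $T$'s are deterministic functions of the input parameters, so $N$ is a fixed, polynomially bounded quantity and the $\ln N$ factor introduced by the union bound is absorbed into the $\polylog$ terms.
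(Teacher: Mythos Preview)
Your proposal is correct and follows essentially the same approach as the paper's proof sketch: decompose the $\EX$ queries into the $m$ i.i.d.\ draws in \init{} and the rejection-sampling calls inside each invocation of \refine{}, use the lower bound $\PP_{x\sim D_X}(x\in B_{\hat w,b})\ge\Omega(b)$ for isotropic log-concave marginals to conclude that each call of \refine{} with $T$ iterations and bandwidth $b$ costs $\tilde{\order}(T/b)$ unlabeled examples, and then sum the resulting bounds over the initialization and the $k_0$ refinement phases. Your write-up is in fact more careful than the paper's sketch about the adaptivity of $\hat w_t$ and the union bound over the deterministic total number of conditional samples, which the paper leaves implicit.
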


The proof of Theorem~\ref{thm:sample-size} can be found in Appendix~\ref{sec:proof-of-passive}.



\section{Conclusion and Discussion}
\label{sec:conclusion}
In this work we substantially improve on the state-of-the-art results on efficient active learning of sparse halfspaces under bounded noise. Furthermore, our new interpretation of online learning regret inequalities could lead to new designs of other efficient learning algorithms. 
Our algorithm has a near-optimal label complexity of $\tilde{\order}\del{\frac{s}{(1-2\eta)^2} \polylog{d, \frac{1}{\epsilon}}}$ in the local convergence phase, while having a suboptimal label complexity of $\tilde{\order}\del{\frac{s}{(1-2\eta)^4} \polylog{d}}$ in the initialization phase. 
It is still an open question whether we can obtain an efficient algorithm that achieves the information-theoretically optimal label complexity of $\tilde{\order}\del{\frac{s}{(1-2\eta)^2} \polylog{d, \frac{1}{\epsilon}}}$, possibly via suitable modifications of our initialization procedure.
It would be promising to extend our results beyond isotropic log-concave distributions~\cite{balcan2017sample}, and would be interesting to investigate whether our algorithmic insights can find applications for learning halfspaces under the Tsybakov noise model~\cite{tsybakov2004optimal} and the malicious noise model~\cite{valiant1985learning,kearns1993learning}.

\section*{Broader Impact}

This paper investigates a fundamental problem in machine learning and statistics. The theory and algorithms presented in this paper are expected to benefit many broad fields in science and engineering, such as learning theory, robust statistics, optimization, and applications in biology, climatology, and seismology, to name a few.  Our research belongs to the general paradigm of interactive learning, in which the learning agent need to design adaptive sampling schemes to maximize data efficiency. We are well aware that one needs to be careful in designing such sampling schemes, to avoid unintended harms such as discrimination.

\begin{ack}
The authors would like to thank Ning Hao and Hao Helen Zhang for helpful discussions on marginal screening for variable selection, which inspired the averaging-based initialization procedure in this paper. 
Chicheng Zhang acknowledges startup funding support from the University of Arizona.
Jie Shen is supported by NSF-IIS-1948133 and the startup funding of Stevens Institute of Technology.
\end{ack}

\bibliographystyle{plain}
\bibliography{ref}

\clearpage
\appendix

\section{Additional Related Works}\label{sec:addl-rw}


There is a rich literature on learning halfspaces in the presence of noise. For instance, \cite{blum1996polynomial,dunagan2008simple} studied noise-tolerant learning of halfspaces under the random classification noise model, where each label is flipped independently with probability exactly $\eta$. Their algorithm proceeds as optimizing a sequence of modified Perceptron updates, and the analysis implies that the desired halfspace can be learned in polynomial time with respect to arbitrary unlabeled distribution.  \cite{kearns1988learning} considered learning halfspaces with malicious noise, where with some probability the learner is given an adversarially-generated pair of feature vector and label. Notably, their work showed that under such noise model, it is still possible to learn a good halfspace for arbitrary data distribution in polynomial time, provided that the noise rate is $\tilde{\Omega}(\frac{\epsilon}{d})$. In a series of recent work, this bound has been significantly improved by making additional assumptions on the data distribution and more sophisticated algorithmic designs~\cite{klivans2009learning,long2011learning,awasthi2017power}. The bounded noise, also known as Massart noise~\cite{massart2006risk}, was initially studied in \cite{sloan1988types,sloan1992Corrigendum,rivest1994formal}. Very recently, \cite{diakonikolas2019distribution} presented an efficient learning algorithm that has distribution-free guarantee (albeit with vanishing excess error guarantees only in the random classification noise setting), whereas most of the prior works are built upon distributional assumptions~\cite{awasthi2015efficient,awasthi2016learning,zhang2017hitting,yan2017revisiting,zhang2018efficient}. It is worth noting that other types of noise, such as malicious noise~\cite{valiant1985learning} and adversarial noise~\cite{kearns1994toward}, have also been widely studied~\cite{kalai2008agnostically,klivans2009learning,kane2013learning,daniely2015ptas,awasthi2017power,diakonikolas2019nearly,shen2020attribute}.

There is a large body of theoretical works on active learning for general hypothesis classes; see~e.g.~\cite{dasgupta2005coarse,balcan2009agnostic,hanneke2014theory} and the references therein. Despite their generality, many of the algorithms developed are not guaranteed to be computationally efficient.
For efficient noise-tolerant active halfspace learning, aside from the aforementioned works in the main text, we also remark that the work of~\cite{balcan2013statistical} provides the first computationally efficient algorithm for halfspace learning under log-concave distribution that tolerates random classification noise, with a label complexity of $\poly{d, \ln\frac1\epsilon, \frac{1}{1-2\eta}}$. Prior to our work, it is not known how to obtain an attribute-efficient active learning algorithm with label complexity $\poly{s, \ln d, \ln \frac1\epsilon, \frac{1}{1-2\eta}}$, even under this weaker random classification noise setting.


Parallel to the development of attribute-efficient learning in learning theory, there have been a large body of theoretical works developed in compressed sensing~\cite{donoho2006compressed}. In this context, the goal is twofold:~1)~design an efficient data acquisition scheme to significantly compress a high-dimensional but effectively sparse signal; and 2)~implement an estimation algorithm that is capable of reconstructing the underlying signal from the  measurements. These two phases are bind together in view of the need of low sample complexity (i.e. number of measurements), and a large volume of theoretical results have been established to meet the goal. For instance, many of the early works utilize linear measurements for the sake of its computational efficiency, and focus on the development of effective recovery procedures~\cite{chen1998atomic,tibshirani1996regression,candes2005decoding,wainwright2009sharp,tropp2007signal,blumensath2009iterative,needell2009cosamp,foucart2011hard,zhang2011sparse,shen2017iteration,shen2018tight}. In its 1-bit variant~\cite{boufounos2008bit}, the linear measurements are further quantized to a binary code, and it bears the potential of savings of physical storage as long as accurate estimation in the low-bit setting does not require significantly more measurements. In order to account for the new data acquisition scheme, a large body of new estimation paradigms are developed in recent years. For instance, \cite{jacques2013robust} showed that exact recovery can be achieved by seeking a global optimum of a sparsity-constrained nonconvex program. \cite{plan2013one,plan2013robust,zhang2014efficient} demonstrated that $\ell_1$-norm based convex programs inherently behave as well as the nonconvex counterpart in terms of estimation error. Generally speaking, the difference between 1-bit compressed sensing and learning of halfspaces lies in the fact that in compress sensing one is able to control how the data are collected. Interestingly, \cite{knudson2016bit,baraniuk2017exponential} showed that if we manually inject Gaussian noise before quantization and pass the variance parameter to the recovery algorithm, it is possible to estimate the magnitude of the signal.

The idea of active learning is also broadly explored in the compressed sensing community under the name of adaptive sensing~\cite{haupt2009adaptive,malloy2014near}. Though \cite{castro2013fundamental} showed that adaptive sensing strategy does not lead to significant improvement on sample complexity, a lot of recent works illustrated that it does when there are additional constraints on the sensing matrix~\cite{davenport2016constrained}, or when 1-bit quantization is applied during data acquisition~\cite{baraniuk2017exponential}. As a matter of fact, \cite{baraniuk2017exponential} showed that by adaptively generating the 1-bit measurements, it is possible to design an efficient recovery algorithm that has exponential decay in reconstruction error which essentially translates into $O\(s \log(d) \log(1/\epsilon)\)$ sample complexity. 

Noisy models are also studied in compressed sensing. For instance, \cite{nguyen2013robust,chen2013robust,suggala2019adaptive} considered the situation where a fraction of the data are corrupted by outliers. \cite{plan2013robust} studied robustness of convex programs when the 1-bit measurements are either corrupted by random noise or adversarial noise.

\section{Proof of Theorem~\ref{thm:main}}
\label{sec:main-proof}
In this section we present a detailed proof of Theorem~\ref{thm:main}, our main result.
\begin{proof}[Proof of Theorem~\ref{thm:main}]
We define event $E_0$ as the event that the guarantees of Theorem~\ref{thm:init} holds with failure probability $\delta' = \frac{\delta}{2}$.
In addition, we define event $E_k$ as the event that the guarantees of Theorem~\ref{thm:refine} holds for input $\tilde{v} = \tilde{v}_{k-1}$, angle upper bound $\theta = \frac{\pi}{32 \cdot 2^{k-1}}$ and output $\tilde{v}' = \tilde{v}_k$ with failure probability $\delta' = \frac{\delta}{2k(k+1)}$.
It can be easily seen that $\PP(E_0) \geq 1-\frac\delta2$, and $\PP(E_k) \geq 1 - \frac{\delta}{2k(k+1)}$ for all $k \geq 1$.

Consider event $E = \bigcap_{k=0}^{k_0} E_k$.
Using union bound, we have that $\PP(E) \geq 1 - \frac\delta 2 - \sum_{k=1}^{k_0} \frac{\delta}{2k(k+1)} \geq 1-\delta$.
On event $E$, we now show inductively that $\theta(\tilde{v}_k, u) \leq \frac{\pi}{32 \cdot 2^k}$ for all $k \in \cbr{0,1,\ldots,k_0}$. 

\paragraph{Base case.} By the definition of $E_0$ and the fact that $E \subset E_0$, we have $\theta(\tilde{v}_0, u) \leq \frac{\pi}{32}$.

\paragraph{Inductive case.} Now suppose that on event $E$, we have $\theta(\tilde{v}_{k-1}, u) \leq \frac{\pi}{32 \cdot 2^{k-1}}$. Now by the definition of event $E_k$, we have that after Algorithm~\ref{alg:refine}, we obtain a unit vector $v_k$ such that $\theta(\tilde{v}_k, u) \leq \frac{\pi}{32 \cdot 2^k}$.

This completes the induction. Specifically, on event $E$, after the last phase $k_0 = \lceil \log\frac{1}{c_1\epsilon} \rceil$, we obtain a vector
$\optu = \tilde{v}_{k_0}$, such that $\theta(\optu, u) \leq \frac{\pi}{32 \cdot 2^{k_0}} \leq c_1 \epsilon$. Now applying Lemma~\ref{lem:angle-disag}, we have that $\PP(\sign{\optu \cdot x} \neq \sign{{u}\cdot{x}}) \leq \frac1{c_1} \theta(\optu, u) \leq \epsilon$. By triangle inequality, we conclude that
\begin{equation*}
\err(h_{\optu}, D) - \err(h_u, D) \leq \PP(\sign{\optu \cdot x} \neq \sign{u \cdot x}) \leq \epsilon.
\end{equation*}

We now upper bound the label complexity of Algorithm~\ref{alg:main}. The initialization phase uses $n_0 = \order\del{\frac{s}{(1-2\eta)^4}\del{\ln\frac{d}{\delta (1-2\eta)}}^3}$ labeled queries. Meanwhile, for every $k \in [k_0]$, Algorithm~\ref{alg:refine} at phase $k$ uses $n_k = \order\del{\frac{s}{(1-2\eta)^2} \del{\ln \frac{d \cdot k^2 2^k}{\delta (1-2\eta)}}^3 }$ label queries.
Therefore, the total number of label queries by Algorithm~\ref{alg:main} is:
\begin{eqnarray*}
n = n_0 + \sum_{k=1}^{k_0} n_k
&=& \order \del{\frac{s}{(1-2\eta)^2} \del{\frac{1}{(1-2\eta)^2} \ln\frac{d}{\delta (1-2\eta)}}^3 + \ln\frac{1}{\epsilon} \cdot \del{\ln \frac{d}{\delta \epsilon (1-2\eta)}}^3} \\
&=& \order\del{\frac{s}{(1-2\eta)^4} \del{\ln\frac{d}{\delta \epsilon (1-2\eta)}}^4 } = \tilde{\order}\del{ \frac{s}{(1-2\eta)^4} \polylog{d, \frac1\epsilon, \frac1\delta}}.
\end{eqnarray*}
The proof is complete.
\end{proof}

\section{Proof of Theorem~\ref{thm:sample-size}}
\label{sec:proof-of-passive}

\begin{proof}
We first observe that if $\refine$ is run for $T$ iterations with bandwidth $b$, then with high probability, it will encounter $\order\del{\frac{T}{b}}$ unlabeled examples. This is because, $\order\del{\frac1b}$ calls of $\EX$ suffices to obtain an example that lies in $B_{\hat{w}_t, b}$, since it has probability mass $\Omega(b)$ (see Lemma~\ref{lem:ilc-density-ub}).

For the initialization step (line~\ref{line:init}), Algorithm~\ref{alg:main} first draws $\order\del{ \frac{s \ln d}{(1-2\eta)^2}}$ unlabeled examples from $D_X$;
then it runs \refine with $\tilde{\order}\del{\frac{s }{(1 - 2\eta)^4}\cdot \polylog{d} }$ iterations with bandwidth $b = \Theta\del{(1-2\eta)^2}$. Therefore, this step queries $\tilde{\order}\del{\frac{s}{(1 - 2\eta)^6} \cdot \polylog{d}}$ times to $\EX$.

Now we discuss the number of unlabeled examples in phases $1$ through $k_0$. For the $k$-th phase, Algorithm~\ref{alg:main} runs \refine with $\tilde{\order}\del{\frac{s }{(1 - 2\eta)^2}\cdot \polylog{d}}$ iterations with bandwidth $b = \Theta((1-2\eta) 2^{-k})$, which encounters $\tilde{\order}\del{\frac{s \cdot 2^k}{(1 - 2\eta)^3}\cdot \polylog{d}}$ examples.
Therefore, summing over $k = 1,2,\ldots,k_0$, the total number of unlabeled examples queried to $\EX$ is $\tilde{\order}\del{\frac{s  \cdot 2^{k_0}}{(1 - 2\eta)^3}\cdot \polylog{d}} = \tilde{\order}\del{\frac{s }{(1 - 2\eta)^3 \epsilon}\cdot \polylog{d}}$.

Summing over the two parts, the total number of queries to the unlabeled example oracle $\EX$ is $\tilde{\order}\del{\frac{s }{(1 - 2\eta)^3} \cdot \del{  \frac1{(1-2\eta)^3} + \frac1\epsilon }\cdot \polylog{d}}$.
\end{proof}

\section{Analysis of Local Convergence: Proof of Theorem~\ref{thm:refine}}
\label{sec:local-convergence}

Before delving into the proof of Theorem~\ref{thm:refine}, we first introduce an useful definition. Recall that $\hat{w}$ is the $\ell_2$-normalized vector of $w$. Define function
\begin{equation}
f_{u,b}(w) \defeq \EE_{(x,y) \sim D_{\hat{w},b}} \big[\abs{u \cdot x} \cdot \ind{\sign{w \cdot x} \neq \sign{u \cdot x}}\big].
\end{equation}
Note that for any $l > 0$ and $w$ in $\RR^d$, $f_{u,b}(w) = f_{u,b}(l w)$; specifically, $f_{u,b}(w) = f_{u,b}(\hat{w})$.
We will discuss the structure of $f_{u,b}$ in detail in Appendix~\ref{sec:f}; roughly speaking, $f_{u,b}(w)$ provides a ``distance measure'' between $w$ and $u$.


The lemma below motivates the above definition of $f_{u,b}$.
\begin{lemma}
\label{lem:update-distance}
Given a vector $w_t$ and an example $(x_t, y_t)$ sampled randomly from $D_{\hat{w}_t, b}$, define $\hat{y}_t = \sign{{w_t}\cdot {x_t}}$. Define the gradient vector induced by this example as $g_t = (-\frac12 y_t + (\frac12 - \eta)\hat{y}_t) x_t$. Then,
\begin{equation}
\EE_{x_t, y_t \sim D_{\hat{w}_t, b}} \sbr{\inner{u}{-g_t}} \geq (1-2\eta) f_{u,b}(w_t).
\end{equation}
\end{lemma}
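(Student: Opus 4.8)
\textbf{Proof proposal for Lemma~\ref{lem:update-distance}.}

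The plan is to compute the conditional expectation $\EE[\inner{u}{-g_t} \mid x_t]$ for a fixed $x_t \in B_{\hat{w}_t, b}$ and then take expectation over $x_t \sim D_{X \mid \hat{w}_t, b}$. Writing out $-g_t = \bigl(\tfrac12 y_t - (\tfrac12 - \eta)\hat{y}_t\bigr) x_t$, linearity of expectation over the randomness of $y_t$ gives
\begin{equation*}
\EE[\inner{u}{-g_t} \mid x_t] = \Bigl( \tfrac12 \EE[y_t \mid x_t] - (\tfrac12 - \eta)\hat{y}_t \Bigr) \inner{u}{x_t}.
\end{equation*}
The first step is to bound $\EE[y_t \mid x_t]$ using Assumption~\ref{a:bn}: since $y_t$ equals $\sign{\inner{u}{x_t}}$ with probability at least $1-\eta$ and $-\sign{\inner{u}{x_t}}$ with probability at most $\eta$, we get $\EE[y_t \mid x_t] = (1 - 2\eta(x_t)) \sign{\inner{u}{x_t}}$ for some $\eta(x_t) \le \eta$; in particular $\EE[y_t \mid x_t] \cdot \sign{\inner{u}{x_t}} \ge 1 - 2\eta$.

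The key case analysis is on whether $\hat{y}_t = \sign{\inner{w_t}{x_t}}$ agrees with $\sign{\inner{u}{x_t}}$. When they agree (the ``correctly classified by $u$'s direction'' case relative to $w_t$), I expect the term $\tfrac12 \EE[y_t\mid x_t]\inner{u}{x_t} - (\tfrac12-\eta)\hat{y}_t \inner{u}{x_t}$ to simplify: writing $\sigma = \sign{\inner{u}{x_t}} = \hat{y}_t$ and using $\sigma \inner{u}{x_t} = \abs{\inner{u}{x_t}}$, the contribution is $\bigl(\tfrac12(1-2\eta(x_t)) - (\tfrac12-\eta)\bigr)\abs{\inner{u}{x_t}} = (\eta - \eta(x_t))\abs{\inner{u}{x_t}} \ge 0$. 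When they disagree, i.e. $\hat{y}_t = -\sign{\inner{u}{x_t}}$, the contribution becomes $\bigl(\tfrac12(1-2\eta(x_t)) + (\tfrac12-\eta)\bigr)\abs{\inner{u}{x_t}} = (1 - \eta - \eta(x_t))\abs{\inner{u}{x_t}} \ge (1-2\eta)\abs{\inner{u}{x_t}}$, since $\eta(x_t) \le \eta$. Combining the two cases, for every $x_t$,
\begin{equation*}
\EE[\inner{u}{-g_t} \mid x_t] \ge (1-2\eta)\, \abs{\inner{u}{x_t}}\, \ind{\sign{\inner{w_t}{x_t}} \neq \sign{\inner{u}{x_t}}}.
\end{equation*}
Taking expectation over $x_t \sim D_{X\mid \hat{w}_t, b}$, and using that the label $y_t$ in the definition of $f_{u,b}$ is irrelevant to the indicator and $\abs{u\cdot x}$ terms (so $\EE_{(x,y)\sim D_{\hat w_t,b}}[\abs{u\cdot x}\ind{\cdots}]$ equals the same expectation over $D_{X\mid \hat w_t,b}$), the right-hand side is exactly $(1-2\eta) f_{u,b}(w_t)$, which is the claim. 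Note the conditioning region $B_{\hat w_t,b}$ is determined by $w_t$ alone, so everything is well-defined.

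The main obstacle, though it is more of a bookkeeping subtlety than a genuine difficulty, is getting the case analysis and the sign conventions exactly right — in particular handling the boundary case $\inner{u}{x_t} = 0$ (where the indicator contributes nothing and $\abs{\inner{u}{x_t}} = 0$, so it is harmless) and making sure that the inequality $\eta(x_t) \le \eta$ is used in the correct direction in each of the two cases. The agree-case relies on $\eta - \eta(x_t) \ge 0$ and the disagree-case relies on $1 - \eta - \eta(x_t) \ge 1 - 2\eta$; both follow from the same bound but are applied to opposite-signed coefficients, so care is needed. Everything else is a direct computation, and no distributional structure beyond Assumption~\ref{a:bn} is used in this lemma (log-concavity enters only later when analyzing $f_{u,b}$ itself).
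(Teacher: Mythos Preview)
Your proposal is correct and follows essentially the same approach as the paper. The only difference is presentational: the paper bounds $\EE[y_t \mid x_t]\inner{u}{x_t} \ge (1-2\eta)\abs{\inner{u}{x_t}}$ first and then uses the identity $1 - \sign{\inner{u}{x_t}}\sign{\inner{w_t}{x_t}} = 2\,\ind{\sign{\inner{w_t}{x_t}} \ne \sign{\inner{u}{x_t}}}$ to collapse the two cases in one line, whereas you keep track of $\eta(x_t)$ and do the agree/disagree case split explicitly; the computations are otherwise identical.
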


\begin{proof}
Throughout this proof, we will abbreviate $\EE_{x_t, y_t \sim D_{\hat{w}_t, b}}$ as $\EE$. By the definition of $g_t$, we have
\begin{equation*}
\EE \sbr{\inner{u}{-g_t}}
= \EE \sbr{\frac12 y_t \inner{u}{x_t} - \del{\frac12 - \eta} \hat{y}_t \inner{u}{x_t}}.
\end{equation*}


We first look at $\EE \sbr{\frac12 y_t \inner{u}{x_t}}$. Observe that
\begin{eqnarray*}
\EE \sbr{\frac12 y_t \inner{u}{x_t}}
= \EE \sbr{\frac12 \EE[y_t \mid x_t ] \inner{u}{x_t}}
\geq \EE \sbr{\frac12 \abs{\inner{u}{x_t}} (1-2\eta)}
\end{eqnarray*}
where the equality uses the tower property of conditional expectation, and the  inequality uses Lemma~\ref{lem:u-ip-x} below.

Therefore, by linearity of expectation, along with the above inequality, we have:
\begin{align*}
&\ \EE \sbr{\frac12 y_t \inner{u}{x_t} - \(\frac12 - \eta\) \hat{y}_t \inner{u}{x_t}} \\
\geq& \(\frac{1}{2} - \eta\) \EE \big[\abs{\inner{u}{x_t}}   (1 - \sign{\inner{u}{x_t}} \sign{\inner{w}{x_t}})\big] \\
=&\ (1-2\eta) \EE \big[\abs{\inner{u}{x}} \ind{\sign{\inner{w}{x}} \neq \sign{\inner{u}{x}}}\big] \; = \; (1-2\eta) f_{u,b}(w).
\end{align*}
The lemma follows.
\end{proof}

\begin{lemma}\label{lem:u-ip-x}
Fix any $x \in \Xcal$. Suppose $y$ is drawn from $D_{Y|X=x}$ that satisfies the $\eta$-bounded noise assumption with respect to $u$. Then,
\begin{equation*}
\inner{u}{x} \EE\sbr{y \mid x} \geq (1-2\eta) \abs{\inner{u}{x}}.
\end{equation*}
\end{lemma}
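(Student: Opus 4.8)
The plan is a direct computation based on a case analysis on the sign of $\inner{u}{x}$. First I would set $\sigma \defeq \sign{\inner{u}{x}}$, so that $\inner{u}{x} = \sigma \abs{\inner{u}{x}}$, and observe that by Assumption~\ref{a:bn} (the $\eta$-bounded noise condition with respect to $u$) we have $\PP(y \neq \sigma \mid X = x) \leq \eta$, hence $\PP(y = \sigma \mid X = x) \geq 1 - \eta$ and $\PP(y = -\sigma \mid X = x) \leq \eta$.

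Next I would compute $\EE[y \mid x] = \PP(y = 1 \mid x) - \PP(y = -1 \mid x)$ and show that $\sigma \cdot \EE[y \mid x] = \PP(y = \sigma \mid x) - \PP(y = -\sigma \mid x) \geq (1-\eta) - \eta = 1 - 2\eta$. Multiplying both sides by $\abs{\inner{u}{x}} \geq 0$ and using $\inner{u}{x}\,\EE[y\mid x] = \sigma\abs{\inner{u}{x}}\,\EE[y\mid x] = \abs{\inner{u}{x}}\cdot\bigl(\sigma\,\EE[y\mid x]\bigr)$ then yields $\inner{u}{x}\,\EE[y \mid x] \geq (1-2\eta)\abs{\inner{u}{x}}$, which is the claim. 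The only degenerate situation is $\inner{u}{x} = 0$, in which case both sides vanish regardless of the convention for $\sign{0}$, so the inequality holds trivially; I would dispatch this case first.

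Since the argument is elementary, there is no real obstacle; the only thing to be careful about is writing the sign bookkeeping cleanly (e.g. making sure the step $\sigma\cdot\EE[y\mid x] = \PP(y=\sigma\mid x) - \PP(y=-\sigma\mid x)$ is handled uniformly over $\sigma \in \{+1,-1\}$ rather than splitting into two nearly identical subcases), and noting explicitly that the bound on $\PP(y=\sigma\mid x)$ is where Assumption~\ref{a:bn} enters.
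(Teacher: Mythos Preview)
Your proposal is correct and follows essentially the same approach as the paper: the paper also does a case analysis on the sign of $\inner{u}{x}$, uses Assumption~\ref{a:bn} to bound $\EE[y\mid x]$ on each side, and multiplies through by $\abs{\inner{u}{x}}$. Your use of $\sigma = \sign{\inner{u}{x}}$ to unify the two cases is a minor cosmetic difference, not a substantive one.
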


\begin{proof}
We do a case analysis. If $\inner{u}{x} \geq 0$, by Assumption~\ref{a:bn}, $\PP(Y = 1|X = x) \geq 1-\eta$, making $\EE\sbr{y \mid x} = \PP(Y = 1|X = x) - \PP(Y = -1|X = x) \geq (1-2\eta)$; symmetrically, if $\inner{u}{x} < 0$, $\EE\sbr{y \mid x} \leq -(1-2\eta)$. In summary, $\inner{u}{x} \EE\sbr{y \mid x} \geq (1-2\eta) \abs{\inner{u}{x}}$.
\end{proof}







We have the following general lemma that provides a characterization of the iterates $\cbr{w_t}_{t=1}^T$ produced by Algorithm~\ref{alg:refine}.
\begin{lemma}
There exists an absolute constant $c > 0$ such that the following holds.
Suppose we are given a vector $w_1$ in $\RR^d$, convex set $\Kcal$, and scalars $r_1, r_2 > 0$ such that:
\begin{enumerate}
\item $\| w_1 - u \|_1 \leq r_1$;
\item Both $w_1$ and $u$ are in $\Kcal$;
\item For all $w$ in $\Kcal$, $\| w - u \|_2 \leq r_2$; in addition, for all $w$ in $\Kcal$, $\| w \|_2 \leq 1$.
\label{item:ell-2}
\end{enumerate}
If Algorithm~\ref{alg:refine} is run with initialization $w_1$, step size $\alpha > 0$, bandwidth $b \in [0,\frac \pi {72}]$, constraint set $\Kcal$, regularizer $R(w) = \Phi_{w_1}(w)$, number of iterations $T$, then, with probability $1-\delta$,
\begin{equation*}
\frac{1}{T} \sum_{t=1}^T f_{u,b}(w_t) \leq c \cdot \del{ \frac{\alpha \del{\ln \frac{T d}{\delta b}}^2}{(1-2\eta)} + \frac{r_1^2 \ln d}{\alpha (1-2\eta) T} + \frac{b}{(1-2\eta)} + \frac{(b + r_2)}{(1-2\eta)} \del{\sqrt{\frac{\ln\frac{1}{\delta}}T} + \frac{\ln\frac1\delta}{T}} }.
\end{equation*}
\label{lem:omd}
\end{lemma}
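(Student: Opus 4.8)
The plan is to run the standard online mirror descent regret bound on the linear loss sequence $\{w \mapsto \langle g_t, w\rangle\}_{t=1}^T$, then convert that regret bound into a bound on $\frac1T\sum_t f_{u,b}(w_t)$ via Lemma~\ref{lem:update-distance}, controlling the stochastic error along the way by a martingale concentration argument. First I would invoke the OMD guarantee quoted in the introduction (with $R = \Phi_{w_1}$, which is $1$-strongly convex w.r.t.\ $\|\cdot\|_p$): for the comparator $u \in \Kcal$,
\begin{equation*}
\sum_{t=1}^T \langle w_t, g_t\rangle - \sum_{t=1}^T \langle u, g_t\rangle \leq \frac{D_R(u, w_1)}{\alpha} + \alpha \sum_{t=1}^T \|g_t\|_q^2.
\end{equation*}
I would bound $D_R(u, w_1) = \Phi_{w_1}(u) = \frac{1}{2(p-1)}\|u - w_1\|_p^2 \leq \frac{1}{2(p-1)}\|u-w_1\|_1^2 \leq \frac{q}{2} r_1^2 = O(r_1^2 \ln d)$ using premise~1 and $p-1 = \frac1{q-1}$, $q = \ln(8d)$. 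This is exactly where attribute-efficiency enters: the $\ell_1$-to-$\ell_p$ comparison costs only a $\ln d$ factor, not a factor of $d$.

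Next I would control the two random ingredients. For $\|g_t\|_q$: since $|{-\frac12 y_t + (\frac12-\eta)\hat y_t}| \le 1$, we have $\|g_t\|_q \le \|x_t\|_q$, and for an isotropic log-concave conditional distribution $D_{X\mid \hat w_t, b}$ one has $\|x_t\|_q = \|x_t\|_\infty \cdot \mathrm{poly}(q) = O((\ln\frac{Td}{\delta b})^2)$ with probability $1-\delta$ over all $t$ (union bound over $t\in[T]$ and coordinates, using the sub-exponential tails of log-concave marginals, plus the fact that conditioning on a band of mass $\Omega(b)$ inflates tails by at most $O(\ln\frac1b)$). This yields the $\frac{\alpha(\ln\frac{Td}{\delta b})^2}{1-2\eta}$ term after dividing by $\alpha T$ and combining with the previous display (the $(1-2\eta)$ in the denominator appears in the final step below). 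For the left-hand side I would lower-bound $\sum_t \langle u, -g_t\rangle$ in expectation: Lemma~\ref{lem:update-distance} gives $\EE[\langle u, -g_t\rangle \mid w_t] \ge (1-2\eta) f_{u,b}(w_t)$, so $\sum_t \langle u, -g_t\rangle \ge (1-2\eta)\sum_t f_{u,b}(w_t) - \big(\text{martingale deviation}\big)$; the martingale $\sum_t (\langle u, -g_t\rangle - \EE[\langle u,-g_t\rangle\mid w_t])$ has increments bounded in terms of $|\langle u, x_t\rangle| \le \|x_t\|$, giving via Freedman/Azuma a deviation of $O\big((b+r_2)(\sqrt{T\ln\frac1\delta} + \ln\frac1\delta)\big)$ — here I use that $x_t$ lies in the band $B_{\hat w_t, b}$ and $\hat w_t$ is close to $u$ (since $w_t \in \Kcal$ and $\|w_t - u\|_2 \le r_2$), so $|\langle u, x_t\rangle| \lesssim b + r_2\|x_t\|$ up to log factors. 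Finally I would handle $\sum_t \langle w_t, g_t\rangle$: I need this to be small (not just bounded); I would show $\langle w_t, g_t\rangle = (-\frac12 y_t + (\frac12-\eta)\hat y_t)\langle w_t, x_t\rangle$ and, since $\hat y_t = \sign\langle w_t, x_t\rangle$ and $|\langle w_t, x_t\rangle| \le b$ in the band, each term is at most $O(b)$ in absolute value (indeed when $y_t = \hat y_t$ it is $-\eta|\langle w_t,x_t\rangle| \le 0$), contributing the $\frac{b}{1-2\eta}$ term.

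Assembling: $(1-2\eta)\sum_t f_{u,b}(w_t) \le \sum_t\langle u, -g_t\rangle + (\text{mart.\ dev.}) \le -\sum_t \langle w_t, g_t\rangle + \frac{D_R(u,w_1)}{\alpha} + \alpha\sum_t\|g_t\|_q^2 + (\text{mart.\ dev.})$, then divide by $(1-2\eta)T$ and substitute the four bounds above to get exactly the claimed inequality. The main obstacle I anticipate is the concentration bookkeeping: getting the high-probability bound on $\max_{t\le T}\|x_t\|_q$ right under the conditioned log-concave law (the conditioning on a thin band needs the observation that $D_X(B_{\hat w_t,b}) = \Theta(b)$ by one-dimensional log-concavity, so the density only blows up by $O(1/b)$), and simultaneously running the Freedman bound for the $\langle u, -g_t\rangle$ martingale with a variance proxy that is genuinely $O((b+r_2)^2)$ rather than $O(1)$ — the latter is what makes the last term decay like $\sqrt{\ln(1/\delta)/T}$ with the small prefactor $(b+r_2)$, which is essential for the telescoping across phases in Theorem~\ref{thm:refine} to close. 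Everything else (the OMD regret inequality, the Bregman/norm comparisons, the deterministic $O(b)$ bounds inside the band) is routine given the machinery already set up.
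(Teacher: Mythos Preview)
Your plan is essentially the paper's proof: OMD regret against comparator $u$, the Bregman bound $D_R(u,w_1)\le O(r_1^2\ln d)$, the deterministic in-band bound $|\langle w_t,g_t\rangle|\le b$, a high-probability bound on $\sum_t\|x_t\|_\infty^2$ via the $\Theta(b)$ band mass, and a martingale deviation of order $(b+r_2)(\sqrt{T\ln\frac1\delta}+\ln\frac1\delta)$ for $\sum_t(\langle u,-g_t\rangle-\EE_{t-1}[\cdot])$, assembled and divided by $(1-2\eta)T$.

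One technical caveat worth flagging: you write ``via Freedman/Azuma'' for the martingale term, but those inequalities require almost-surely bounded increments, whereas $\langle u,x_t\rangle$ is unbounded under the conditioned log-concave law. The paper handles this by first establishing (via \cite{awasthi2017power}) that $|\langle u,x_t\rangle|$ has a subexponential tail with scale $b+r_2$ conditionally on $\Fcal_{t-1}$, and then applying a subexponential martingale concentration (their Lemma~\ref{lem:subexp-tail-hoeff}) rather than Azuma or Freedman directly. Your stated deviation bound is correct, but the tool you name would not deliver it; you would need this subexponential variant, which is exactly the ``concentration bookkeeping'' you flag as the main obstacle.
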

The proof of this lemma is rather technical; we defer it to the end of this section.


We now give an application of this lemma towards our proof of Theorem~\ref{thm:refine}.
\begin{corollary}\label{cor:refine-local}
Suppose we are given an $s$-sparse unit vector $v$ such that $\| v - u \|_2 \leq 2\theta$, where $\theta \leq \frac{\pi}{32}$.
If Algorithm~\ref{alg:refine} is run with initializer $v$, bandwidth $b = \Theta\del{(1-2\eta) \theta}$, step size $\alpha = \Theta\del{ (1-2\eta) \theta / \ln^2(\frac{d}{\delta' \theta (1-2\eta)})}$, constraint set $\Kcal = \cbr{w: \| w\|_2 \leq 1, \| w - v \|_2 \leq 2\theta}$, regularizer $R(w) = \Phi_{v}(w)$, number of iterations $T = \order\del{\frac{s}{(1-2\eta)^2} (\ln \frac{d}{\delta' \theta (1-2\eta)})^3 }$, then, with probability $1-\delta'$,
\begin{equation*}
\frac{1}{T} \sum_{t=1}^T f_{u,b}(w_t) \leq \frac{\theta}{50 \cdot 3^4 \cdot 2^{33}}.
\end{equation*}
\end{corollary}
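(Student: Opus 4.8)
\textbf{Proof proposal for Corollary~\ref{cor:refine-local}.} The plan is to instantiate Lemma~\ref{lem:omd} with the parameters given in the corollary statement and then verify that each of the four terms on the right-hand side of the bound in Lemma~\ref{lem:omd} is at most a quarter (or some fixed fraction) of the target value $\frac{\theta}{50\cdot 3^4 \cdot 2^{33}}$. First I would check that the hypotheses of Lemma~\ref{lem:omd} are met: take $w_1 = v$, which is $s$-sparse and unit norm; since $u$ is also $s$-sparse and unit norm, $\| v - u\|_1 \leq \sqrt{2s}\,\| v - u\|_2 \leq 2\sqrt{2s}\,\theta$ (using sparsity of the difference, $\| v - u\|_0 \leq 2s$), so I can set $r_1 = \Theta(\sqrt{s}\,\theta)$. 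The set $\Kcal = \{w : \|w\|_2 \leq 1, \|w - v\|_2 \leq 2\theta\}$ clearly contains $v$, contains $u$ (because $\|u - v\|_2 \leq 2\theta$ by assumption), forces $\|w\|_2 \leq 1$, and gives $\|w - u\|_2 \leq \|w - v\|_2 + \|v - u\|_2 \leq 4\theta$, so I can set $r_2 = 4\theta$. Finally $b = \Theta((1-2\eta)\theta) \leq \Theta(\theta) \leq \Theta(\pi/32) \leq \pi/72$ provided the hidden constant in the definition of $b$ is chosen small enough, so the bandwidth condition holds.

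Next I would plug $r_1 = \Theta(\sqrt{s}\,\theta)$, $r_2 = 4\theta$, $b = \Theta((1-2\eta)\theta)$, $\alpha = \Theta\big((1-2\eta)\theta / \ln^2(\tfrac{d}{\delta'\theta(1-2\eta)})\big)$, and $T = \Theta\big(\tfrac{s}{(1-2\eta)^2}\ln^3(\tfrac{d}{\delta'\theta(1-2\eta)})\big)$ into the four terms of Lemma~\ref{lem:omd} and bound each one. For the first term, $\frac{\alpha(\ln\frac{Td}{\delta b})^2}{1-2\eta} = \Theta\big(\theta \cdot \frac{(\ln\frac{Td}{\delta b})^2}{\ln^2(\frac{d}{\delta'\theta(1-2\eta)})}\big)$; since $T$ is polynomial in $s, \frac{1}{1-2\eta}$ and $b = \Theta((1-2\eta)\theta)$, the argument of the outer logarithm is polynomially related to $\frac{d}{\delta'\theta(1-2\eta)}$, so the ratio of logarithms is $\Theta(1)$ and this term is $\Theta(\theta)$ — a suitable choice of the hidden constant in $\alpha$ makes it at most $\frac{\theta}{4\cdot 50\cdot 3^4\cdot 2^{33}}$. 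For the second term, $\frac{r_1^2 \ln d}{\alpha(1-2\eta)T} = \Theta\big(\frac{s\theta^2 \ln d}{\alpha(1-2\eta)T}\big)$; substituting $\alpha$ and $T$, the $s$ cancels, one power of $\frac{1}{1-2\eta}$ combines to give the right scaling, and the ln-powers combine to $\Theta(\theta \cdot \frac{\ln d}{\ln(\cdots)})$ which is $\Theta(\theta)$, again controllable by the constant in $T$. For the third term $\frac{b}{1-2\eta} = \Theta(\theta)$ directly, controllable by the constant in $b$. For the fourth term, $\frac{b+r_2}{1-2\eta}(\sqrt{\frac{\ln(1/\delta')}{T}} + \frac{\ln(1/\delta')}{T}) = \Theta\big(\frac{\theta}{1-2\eta}\cdot \sqrt{\frac{\ln(1/\delta')}{T}}\big)$; since $T = \Omega(\frac{s}{(1-2\eta)^2}\ln^3(\cdots)) \geq \Omega(\frac{1}{(1-2\eta)^2}\ln(1/\delta'))$, we get $\sqrt{\frac{\ln(1/\delta')}{T}} \leq O((1-2\eta))$ times a small factor, so this term is $\Theta(\theta)$ and again controllable.

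Then I would collect the four bounds, each at most $\frac{\theta}{4\cdot 50\cdot 3^4\cdot 2^{33}}$, and conclude by the triangle inequality (union of the four events from Lemma~\ref{lem:omd}, which is a single event of probability $1-\delta'$ since Lemma~\ref{lem:omd} already delivers all four bounds simultaneously) that $\frac{1}{T}\sum_{t=1}^T f_{u,b}(w_t) \leq \frac{\theta}{50\cdot 3^4\cdot 2^{33}}$ with probability $1-\delta'$.

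The main obstacle I anticipate is purely bookkeeping rather than conceptual: tracking the hidden constants carefully enough so that the sum of the four contributions really lands below the very specific numerical constant $50\cdot 3^4\cdot 2^{33}$ in the denominator, and making sure the logarithmic factors genuinely telescope — in particular verifying that $\ln\frac{Td}{\delta b}$ and $\ln d$ and $\ln\frac{1}{\delta'}$ are all $O(\ln\frac{d}{\delta'\theta(1-2\eta)})$ (this uses $\delta \le \delta'$, $\theta \le 1$, and the polynomial form of $T$), so that each of the "ratio of logarithms" factors collapses to $\Theta(1)$. No step requires a new idea; the work is in choosing the $\Theta(\cdot)$ constants in $b$, $\alpha$, and $T$ small/large enough, in this fixed order (first fix $b$ to kill term three, then $\alpha$ relative to $b$ to kill term one, then $T$ relative to both to kill terms two and four).
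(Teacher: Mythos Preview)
Your proposal is correct and follows essentially the same route as the paper: verify the hypotheses of Lemma~\ref{lem:omd} with $w_1=v$, $r_1=\sqrt{8s}\,\theta$, $r_2=4\theta$ (using that $v-u$ is $2s$-sparse and the triangle inequality inside $\Kcal$), then choose the hidden constants in $b,\alpha,T$ so that each of the four terms in the bound is at most $\frac{\theta}{c\cdot 50\cdot 3^4\cdot 2^{35}}$, whence $c$ times their sum is at most $\frac{\theta}{50\cdot 3^4\cdot 2^{33}}$. The only slip is that you should remember the leading absolute constant $c$ from Lemma~\ref{lem:omd} when setting the per-term target (so aim for $\frac{\theta}{4c\cdot 50\cdot 3^4\cdot 2^{33}}$ per term, not $\frac{\theta}{4\cdot 50\cdot 3^4\cdot 2^{33}}$), and note that the $\delta$ in Lemma~\ref{lem:omd} is simply instantiated as $\delta'$ here, so no separate ``$\delta\le\delta'$'' assumption is needed.
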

\begin{proof}
We first check that the premises of Lemma~\ref{lem:omd} are satisfied with $w_1 = v$, $r_1 = \sqrt{8s} \theta$ and $r_2 = 4\theta$. To see this, observe that:
\begin{enumerate}
\item As both $v$ and $u$ are $s$-sparse, their difference $v - u$ is $2s$-sparse. Therefore, $\| v - u \|_1 \leq \sqrt{2s} \| v - u \|_2 \leq \sqrt{8s}\theta$; 
\item Both $u$ and $w$ are unit vectors, and have $\ell_2$ distance at most $2\theta$ to $v$, therefore they are both in $\Kcal$;
\item For all $w$ in $\Kcal$, $\| w - u \|_2 \leq \| w - v \| + \| v - u \|_2 \leq 4\theta$. Moreover, every $w$ in $\Kcal$ satisfies the constraint $\| w \|_2 \leq 1$ by the definition of $\Kcal$.
\end{enumerate}
Therefore, applying Lemma~\ref{lem:omd} with our choice of $r_1$, $r_2$, $\alpha$, $b$, and $T$, we have that, the following four terms: $\nicefrac{\alpha \del{\ln \frac{T d}{\delta' b}}^2}{(1-2\eta)}$, $\nicefrac{r_1^2 \ln d}{\alpha (1-2\eta) T}$, $\nicefrac{b}{(1-2\eta)}$, $\nicefrac{(b + r_2)}{(1-2\eta)} \del{\sqrt{\nicefrac{\ln\frac{1}{\delta'}}T} + \nicefrac{\ln\frac1{\delta'}}{T}}$, are all at most $\frac{\theta}{c \cdot 50 \cdot 3^4 \cdot 2^{35}}$. Consequently,
\begin{equation*}
\frac{1}{T} \sum_{t=1}^T f_{u,b}(w_t) \leq
c \cdot 4 \cdot \frac{\theta}{c \cdot 50 \cdot 3^4 \cdot 2^{35}} \leq \frac{\theta}{50 \cdot 3^4 \cdot 2^{33}}.
\end{equation*}
The proof is complete.
\end{proof}





We also need the following useful claim.
\begin{claim}\label{claim:f-angle}
If $\theta(w, u) \leq \frac{\pi}{2}$, and $f_{u,b}(w) \leq \frac{\theta}{5 \cdot 3^4 \cdot 2^{21}}$, then $\theta(w,u) \leq \frac{\theta}{5}$.
\end{claim}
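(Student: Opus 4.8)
\textbf{Proof proposal for Claim~\ref{claim:f-angle}.}

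The plan is to establish the contrapositive-style lower bound on $f_{u,b}(w)$ in terms of $\theta(w,u)$ for vectors $w$ in the regime $\theta(w,u)\le\frac\pi2$, and then read off the claim. Concretely, I would show that there is an absolute constant $C$ such that whenever $\theta(w,u)\in(0,\frac\pi2]$, we have $f_{u,b}(w)\ge C\cdot\theta(w,u)$ (at least once $b$ is small enough compared to $\theta(w,u)$, which is exactly the regime we are in since $b=\Theta((1-2\eta)\theta)$ and $\theta(w,u)\ge\frac\theta5$ is what we are trying to prove — so one has to be a little careful about circularity; see the obstacle paragraph below). Given such a bound, if $\theta(w,u)>\frac\theta5$ then $f_{u,b}(w)>C\cdot\frac\theta5$, and choosing the numerical constants so that $C\cdot\frac\theta5 > \frac{\theta}{5\cdot 3^4\cdot 2^{21}}$ contradicts the hypothesis $f_{u,b}(w)\le\frac{\theta}{5\cdot 3^4\cdot 2^{21}}$; hence $\theta(w,u)\le\frac\theta5$.

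The main work is the lower bound on $f_{u,b}(w)=\EE_{(x,y)\sim D_{w,b}}\sbr{\abs{u\cdot x}\ind{\sign{\inner wx}\ne\sign{\inner ux}}}$. Recall $f_{u,b}$ depends on $w$ only through $\hat w$. I would set up coordinates in the two-dimensional plane spanned by $\hat w$ and $u$, and use the structural properties of $f_{u,b}$ promised in Appendix~\ref{sec:f} (I am allowed to assume these). The geometric picture: the region $\cbr{x:\sign{\inner{\hat w}x}\ne\sign{\inner ux}}$ is a double wedge of half-angle $\theta(w,u)$ between the two hyperplanes; conditioning on the band $B_{\hat w,b}$ restricts to $\abs{\inner{\hat w}x}\le b$. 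Inside this band-and-wedge region, $\abs{\inner ux}$ is bounded below by a quantity proportional to $\sin\theta(w,u)$ away from a thin strip near $\hat w^\perp$, and by a one-dimensional marginal/log-concavity argument (using isotropy, Assumption~\ref{a:ilc}, and standard isotropic log-concave density lower bounds à la \citet{lovasz2007geometry}, already invoked for Lemma~\ref{lem:angle-disag}) the conditional probability mass of the disagreement wedge within the band is $\Omega(\theta(w,u))$ — since $b\le\frac\pi{72}$ and $\theta(w,u)\le\frac\pi2$, the band is thin enough that the wedge subtends a constant fraction. Combining the $\Omega(\sin\theta(w,u))=\Omega(\theta(w,u))$ magnitude of $\abs{\inner ux}$ with the $\Omega(\theta(w,u))$ probability mass should give $f_{u,b}(w)=\Omega(\theta(w,u))$, possibly with an extra correction for the $\abs{\inner ux}$-small strip that is itself controlled by log-concavity. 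I would lean heavily on whatever clean lower-bound lemma for $f_{u,b}$ is proved in Appendix~\ref{sec:f}; in the final writeup this claim is probably a one-line corollary of such a lemma plus arithmetic on the constants.

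The step I expect to be the main obstacle is handling the interaction between $b$ and $\theta(w,u)$ correctly, and making sure the argument is not circular. We want to conclude $\theta(w,u)\le\frac\theta5$, but $b$ is chosen as $\Theta((1-2\eta)\theta)$, which could be comparable to $\frac\theta5$; if $b$ were much larger than $\theta(w,u)$ the band would swamp the wedge and the lower bound $f_{u,b}(w)=\Omega(\theta(w,u))$ could fail. The resolution is that the $f_{u,b}$ lower bound should hold with the correct dependence — roughly $f_{u,b}(w)\gtrsim\theta(w,u)\cdot(\text{something})$ that remains bounded below as long as $b$ is at most a small constant times $\max(\theta(w,u),\text{const})$ — and the numerical constants are arranged so that the threshold $\frac{\theta}{5\cdot 3^4\cdot 2^{21}}$ in the hypothesis is below the guaranteed value of $f_{u,b}$ precisely when $\theta(w,u)>\frac\theta5$; the large powers of $2$ and $3$ in the statement are there to absorb all these constant-factor losses. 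So the real care is bookkeeping: tracking the constant from the $f_{u,b}$ lower-bound lemma, the constant relating $\sin\alpha$ to $\alpha$ on $[0,\frac\pi2]$, and the constant in the band mass estimate, and checking they compose to something strictly better than $\frac1{5\cdot 3^4\cdot 2^{21}}$.
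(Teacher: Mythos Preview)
Your approach is correct and matches the paper's: invoke the structural lower bound on $f_{u,b}$ from Appendix~\ref{sec:f} (Lemma~\ref{lem:f-refined}, item~\ref{item:acute}, which gives $f_{u,b}(w)\ge\frac{\theta(w,u)}{3^4\cdot 2^{21}}$ whenever $\theta(w,u)\in[36b,\frac\pi2]$), and read off the conclusion by arithmetic.

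However, your resolution of the ``$b$ versus $\theta(w,u)$'' obstacle is more tangled than necessary, and the circularity you worry about is illusory. There is no need for a refined $f_{u,b}$ bound that degrades gracefully as $b$ approaches $\theta(w,u)$. The paper simply splits into two cases on whether $\theta(w,u)\le 36b$. In the first case, the choice $b=\Theta((1-2\eta)\theta)$ with a small enough implied constant guarantees $36b\le\frac\theta5$, so the conclusion $\theta(w,u)\le\frac\theta5$ follows immediately---no reference to $f_{u,b}$ at all. In the second case $\theta(w,u)\in[36b,\frac\pi2]$, the precondition of Lemma~\ref{lem:f-refined} is met, so $f_{u,b}(w)\ge\frac{\theta(w,u)}{3^4\cdot 2^{21}}$; combining with the hypothesis $f_{u,b}(w)\le\frac{\theta}{5\cdot 3^4\cdot 2^{21}}$ gives $\theta(w,u)\le\frac\theta5$. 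That is the entire proof. The ``large powers of $2$ and $3$'' in the statement are not there to absorb a cascade of constant-factor losses from several estimates; they simply echo the single constant $3^4\cdot 2^{21}$ coming out of Lemma~\ref{lem:f-refined}.
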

\begin{proof}
We conduct a case analysis:
\begin{enumerate}
\item If $\theta(w,u) \leq 36b$, we are done, because from our choice of $b$, $36b \leq \frac{\theta}{5}$.
\item Otherwise, $\theta(w,u) \in [36b, \frac\pi2]$. In this case, by item~\ref{item:acute} of Lemma~\ref{lem:f-refined} in Appendix~\ref{sec:f}, we have that $f_{u,b}(w) \geq \frac{\theta(w,u)}{3^4 \cdot 2^{21}}$. In conjunction with the premise that $f_{u,b}(w) \leq \frac{\theta}{5 \cdot 3^4 \cdot 2^{21}}$, we get that $\theta(w,u) \leq \frac\theta5$.
\end{enumerate}
In summary, in both cases, we have $\theta(w,u) \leq \frac\theta5$.
\end{proof}

\begin{proof}[Proof of Theorem~\ref{thm:refine}]
First, given a unit vector $\tilde{v}$ such that $\theta(\tilde{v}, u) \leq \theta$, we have that $\| \tilde{v} - u \|_2 = 2\sin\frac{\theta(\tilde{v}, u)}{2} \leq \theta$.
As $u$ is $s$-sparse, and $v = \Hcal_s(\tilde{v})$, by Lemma~\ref{lem:best-s-approx}, we have that $\| v - u \| \leq 2\theta$.

Next, by the definition of $\Kcal$, for all $t$, $\| w_t - u \| \leq r_2 = 4\theta$. By Lemma~\ref{lem:dist-angle}, this implies that $\theta(w_t, u) \leq \pi \cdot 4 \theta \leq 16\theta$. Moreover, by the fact that $\theta \leq \frac{\pi}{32}$, for all $t$, $\theta(w_t, u) \leq \frac{\pi}{2}$.

Now, applying Corollary~\ref{cor:refine-local}, we have that with probability $1-\delta'$, the $\cbr{w_t}_{t=1}^T$ generated by Algorithm~\ref{alg:refine} are such that
\begin{equation*}
\frac{1}{T} \sum_{t=1}^T f_{u,b}(w_t) \leq \frac{\theta}{50 \cdot 3^4 \cdot 2^{33}}.
\end{equation*}
Define $A = \cbr{t \in [T]: f_{u, b}(w_t) \geq \frac{\theta}{5 \cdot 3^4 \cdot 2^{21}} }$.
As $\frac{1}{T} \sum_{t=1}^T f_{u,b}(w_t) \geq \frac{\theta}{5 \cdot 3^4 \cdot 2^{21}} \cdot \frac{1}{T} \sum_{t=1}^T \ind{t \in A} = \frac{\theta}{5 \cdot 3^4 \cdot 2^{21}} \frac{\abs{A}}{T}$,
we have $\frac{\abs{A}}{T} \leq \frac{5 \cdot 3^4 \cdot 2^{21}}{50 \cdot 3^4 \cdot 2^{33}} = \frac{1}{10 \cdot 2^{12}}$.
Therefore, $\frac{\abs{\bar{A}}}{T} \geq 1 - \frac{1}{10 \cdot 2^{12}}$, and for all $t \in \bar{A}$ we have $f_{u, b}(w_t) \leq \frac{\theta}{50 \cdot 3^4 \cdot 2^{21}}$; by Claim~\ref{claim:f-angle} above, we have $\theta(w_t, u) \leq \frac{\theta}{5}$ for these $t$.

Using the fact that for all $t$ in $A$, $\theta(w_t, u) \leq 16\theta$, and the fact that for all $t$ in $\bar{A}$, $\theta(w_t, u) \leq \frac{\theta}{5}$, we have:
\begin{eqnarray*}
\frac1T \sum_{t=1}^T \cos\theta(w_t, u)
&\geq& \cos\frac{\theta}{5} \cdot \del{1 - \frac{1}{20 \cdot 2^{12}}} + \cos(16\theta) \cdot \frac{1}{20 \cdot 2^{12}} \\
&\geq& \del{ 1 - \frac{\theta^2}{40} } \del{1 - \frac{1}{20 \cdot 2^{12}}} + \del{1 - \frac{(16\theta)^2}{2}} \frac{1}{20 \cdot 2^{12}} \\
&\geq& 1 - \frac{\theta^2}{40} - \frac{\theta^2}{40} \; = \; 1 - \frac{\theta^2}{20} \; \geq \; \cos\frac{\theta}{2}.
 \end{eqnarray*}
where the second inequality uses item~\ref{item:0-lb} of Lemma~\ref{lem:cos-quad}, the third inequality is by algebra, and the last inequality uses item~\ref{item:0-ub} of Lemma~\ref{lem:cos-quad}.

The above inequality, in combination with Lemma~\ref{lem:avg-angle} yields the following guarantee for $\tilde{w}$:
\begin{equation*}
\cos \theta(\tilde{w}, u) \geq \frac1T\sum_{t=1}^T \cos\theta(w_t, u) \geq \cos \frac{\theta}{2}.
\end{equation*}
This implies that $\theta(\tilde{v}', u) \leq \frac{\theta}{2}$ since we set $\tilde{v}' = \tilde{w}$.
\end{proof}

\subsection{Proof of Lemma~\ref{lem:omd}}
Throughout this section, we define a filtration $\cbr{\Fcal_t}_{t=0}^{T}$ as follows: $\Fcal_0 = \sigma(w_1)$,
\begin{equation*}
\Fcal_t = \sigma(w_1, x_1, y_1, \ldots, w_t, x_t, y_t, w_{t+1}),
\end{equation*}
for all $t \in [T]$. As a shorthand, we write $\EE_{t-1}[\cdot]$ for $\EE\sbr{\cdot \mid \Fcal_{t-1}}$.

\begin{proof}[Proof of Lemma~\ref{lem:omd}]
From standard analysis of online mirror descent~\cite[see e.g.][Theorem 6.8]{orabona2019modern} with step size $\alpha$, constraint set $\Kcal$ and regularizer $\Phi(w) = \frac{1}{2(p-1)}\| w - w_1 \|_p^2$, we have that for every $u'$ in $\Kcal$,
\begin{equation*}
\alpha \cdot \sbr{ \sum_{t=1}^T \inner{w_t}{g_t} + \sum_{t=1}^T \inner{-u'}{g_t} } \leq D_{\Phi}(u', w_1) - D_{\Phi}(u, w_{T+1}) +  \sum_{t=1}^T \alpha^2 \| g_t \|_q^2.
\end{equation*}

Let $u' = u$ in the above inequality, drop the negative term on the right hand side, and observe that $\|g_t\|_q \leq 2 \| g_t \|_\infty \leq 2 \| x_t \|_\infty$ (see Lemma~\ref{lem:infty-q}), we have
\begin{equation*}
\alpha \cdot \sbr{ \sum_{t=1}^T \inner{w_t}{g_t} + \sum_{t=1}^T \inner{-u}{g_t} } \leq D_{\Phi}(u, w_1) +  \sum_{t=1}^T 4 \alpha^2 \| x_t \|_\infty^2.
\end{equation*}

Moving the first term to the right hand side, and divide both sides by $\alpha$, we get:

\begin{equation}
\sum_{t=1}^T \inner{-u}{g_t}
\leq \frac{D_{\Phi}(u, w_1)}{\alpha} + \sum_{t=1}^T \inner{-w_t}{g_t} + 4 \alpha \sum_{t=1}^T \| x_t \|_\infty^2.
\label{eqn:omd-grt}
\end{equation}

Let us look at each of the terms closely. First, we can easily upper bound $D_{\Phi}(u, w_1)$ by assumption:
\begin{equation}
D_{\Phi}(u, w_1) = \frac{\| u - w_1 \|_p^2}{2(p-1)} \leq \frac{\ln (8d) - 1}{2} r_1^2 \leq \frac{r_1^2 \ln (8d)}{2}.
\label{eqn:bregman-bound}
\end{equation}
where the first inequality uses the observation that as $p \geq 1$, $\| u - w_1 \|_p^2 \leq \| u - w_1 \|_1^2 \leq r_1^2$.


Let $W_t \defeq \inner{-w_t}{g_t}$. First, example $x_t$ is sampled from region $B_{\hat{w}_t, b}$, $\abs{\inner{\hat{w}_t}{x_t}} \leq b$. Moreover, by the assumption that $\Kcal \subset \cbr{w: \| w \|_2 \leq 1}$, we have $\| w_t \|_2 \leq 1$, implying that $\abs{\inner{w_t}{x_t}} \leq b$.
Therefore, $\abs{W_t} = \abs{\frac12 y_t - (\frac12 - \eta) \hat{y}_t} \abs{\inner{w_t}{x_t}} \leq b$. Consequently,
\begin{equation}
\sum_{t=1}^T W_t \leq T \cdot b.
\label{eqn:w-bound}
\end{equation}


Define $U_t \defeq \inner{-u}{g_t}$. By Lemma~\ref{lem:update-distance},
$\EE_{t-1} U_t \geq (1 - 2\eta) f_{u,b}(w_t)$.
Moreover, Lemma~\ref{lem:dev-u} implies that there is a numerical constant $c_1 > 0$, such that with probability $1-\delta / 3$:
$\abs{\sum_{t=1}^T U_t - \EE_{t-1} U_t} \leq c_1 (b + r_2) \del{\sqrt{T \ln\frac1\delta}  + \ln\frac{1}{\delta}}$. Consequently,
\begin{equation}
\sum_{t=1}^T (1-2\eta) f_{u,b}(w_t)
\leq \sum_{t=1}^T \EE_{t-1} U_t
\leq \sum_{t=1}^T U_t + c_1 (b + r_2) \del{\sqrt{T \ln\frac1\delta} + \ln\frac{1}{\delta}}.
\label{eqn:u-conc}
\end{equation}

Moreover, by Lemma~\ref{lem:infty-norm-band}, there exists a constant $c_2 > 0$, such that
with probability $1-\delta / 3$,
\begin{equation}
\sum_{t=1}^T \| x_t \|_\infty^2 \leq c_2 T \cdot \del{\ln \frac{T d}{\delta b}}^2.
\label{eqn:norm-bound}
\end{equation}

Combining Equations~\eqref{eqn:omd-grt},~\eqref{eqn:bregman-bound},~\eqref{eqn:w-bound},~\eqref{eqn:u-conc} and~\eqref{eqn:norm-bound}, along with union bound, we get that there exists a constant $c_3 > 0$, such that with probability $1-\delta$:
\begin{equation*}
(1 - 2\eta) \sum_{t=1}^T f_{u,b}(w_t) \leq c_3 \del{\alpha T \del{\ln \frac{T d}{\delta b}}^2 + \frac{r_1^2 \ln d }{\alpha} + b T + (b + r_2) \del{\sqrt{T \ln\frac{1}{\delta}} + \ln\frac1\delta}}.
\end{equation*}
The theorem follows by dividing both sides by $(1-2\eta)T$.
\end{proof}




\begin{lemma}\label{lem:dev-u}
Recall that $U_t = \inner{u}{-g_t}$.
There is a numerical constant $c$ such that the following holds. We have that with probability $1-\delta$,
\begin{equation}
\abs{\sum_{t=1}^T \del{U_t - \EE_{t-1} U_t}} \leq c (b+r_2) \del{\sqrt{T \ln\frac{1}{\delta}} + \ln\frac1\delta}.
\end{equation}
\end{lemma}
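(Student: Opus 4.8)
The plan is to bound the deviation of the martingale $\sum_{t=1}^T (U_t - \EE_{t-1} U_t)$ by a Freedman-type (Bernstein) concentration inequality for martingale difference sequences. First I would set $Z_t := U_t - \EE_{t-1} U_t$, which is a martingale difference sequence with respect to the filtration $\cbr{\Fcal_t}$, since $U_t = \inner{u}{-g_t}$ and $g_t$ is a measurable function of $(w_t, x_t, y_t)$, hence $\Fcal_t$-measurable, while $\EE_{t-1} U_t = \EE[U_t \mid \Fcal_{t-1}]$. To apply Freedman's inequality I need two ingredients: (i) an almost-sure bound on $\abs{Z_t}$, and (ii) a bound on the conditional variance $\sum_{t=1}^T \EE_{t-1}[Z_t^2]$.

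For (i), recall $U_t = \inner{u}{-g_t} = \del{\frac12 y_t - (\frac12-\eta)\hat y_t}\inner{u}{x_t}$, so $\abs{U_t} \leq \abs{\inner{u}{x_t}}$ since the scalar coefficient lies in $[-1,1]$. Because $x_t$ is drawn from $D_{X\mid \hat w_t, b}$, its projection onto $u$ satisfies $\abs{\inner{u}{x_t}} = O(b + r_2)$ with the appropriate conditioning: the key geometric fact is that on the band $B_{\hat w_t, b}$, and using $\|w_t - u\|_2 \le r_2$ (so $\hat w_t$ is close to $u$), the projection $\inner{u}{x_t}$ cannot be much larger than $b$ plus a term controlled by $r_2$. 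More precisely one decomposes $\inner{u}{x_t} = \inner{\hat w_t}{x_t} + \inner{u - \hat w_t}{x_t}$; the first term is at most $b$ in absolute value, and the second is controlled using that $u - \hat w_t$ has small $\ell_2$ norm together with the sub-exponential tail of $\inner{v}{x}$ for isotropic log-concave $D_X$ and a unit vector $v$ — this yields $\abs{\inner{u}{x_t}} = O\del{(b + r_2)\log\frac{T}{\delta}}$ with probability $1 - \delta/T$ per step, and a union bound over $t \in [T]$ gives a uniform almost-sure (on a high-probability event) bound of the same order; this explains the $(b+r_2)$ scaling in the statement. Hence $\abs{Z_t} = \abs{U_t - \EE_{t-1}U_t} = O(b + r_2)$ up to log factors.

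For (ii), $\EE_{t-1}[Z_t^2] \leq \EE_{t-1}[U_t^2] \leq \EE_{t-1}[\inner{u}{x_t}^2]$, which is again $O((b+r_2)^2)$ by the same isotropic log-concave band estimates (the conditional second moment of a bounded-projection random variable on a thin band). Summing over $t$ gives conditional variance $O(T(b+r_2)^2)$. Feeding range bound $R = O(b+r_2)$ and variance proxy $V = O(T(b+r_2)^2)$ into Freedman's inequality yields, with probability $1 - \delta$,
\begin{equation*}
\abs{\sum_{t=1}^T Z_t} \;\leq\; c\del{\sqrt{V \ln\tfrac1\delta} + R\ln\tfrac1\delta} \;=\; c(b+r_2)\del{\sqrt{T\ln\tfrac1\delta} + \ln\tfrac1\delta},
\end{equation*}
which is exactly the claimed bound. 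The main obstacle I anticipate is not the martingale machinery (which is standard) but carefully establishing the $O(b+r_2)$ control on $\abs{\inner{u}{x_t}}$ and $\EE_{t-1}[\inner{u}{x_t}^2]$ under the time-varying band conditioning: one must combine the log-concave tail/margin lemmas of \citet{lovasz2007geometry,balcan2013active} with the fact that $\hat w_t$ and $u$ are $r_2$-close, being somewhat careful that the band is defined by $\hat w_t$ rather than $u$, so that the ``thin slab'' bound transfers from one direction to the nearby one. The paper almost certainly isolates these estimates as auxiliary lemmas (e.g.\ the same ones used to prove Lemma~\ref{lem:update-distance} and the $\|x_t\|_\infty$ bound), so I would invoke those and then close with Freedman.
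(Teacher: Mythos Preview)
Your overall plan correctly identifies the key geometric ingredient: under the band conditioning $x_t \sim D_{X\mid \hat w_t, b}$ and the closeness $\|u - \hat w_t\|_2 \leq 2r_2$, the random variable $\inner{u}{x_t}$ has scale $O(b+r_2)$. The paper isolates exactly this as Lemma~\ref{lem:abl-tail} (adapted from \citet{awasthi2017power}), which gives the conditional subexponential tail $\PP_{t-1}(|\inner{u}{x_t}| \geq a) \leq \beta \exp(-\beta' a/(b+r_2))$; since $|U_t| \leq |\inner{u}{x_t}|$, the same tail holds for $U_t$.

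Where your route diverges is the martingale step. You propose truncating to get an almost-sure range bound and then applying Freedman. The paper instead develops a subexponential martingale concentration (Lemma~\ref{lem:subexp-tail-hoeff}) that works directly from the conditional tail bound, precisely because---as the paper notes---Freedman and Azuma--Hoeffding require almost-sure boundedness, which $U_t$ does not satisfy. Your truncation-then-Freedman approach is workable, but it does not quite deliver the bound as stated: the high-probability range bound you obtain is $R = O\bigl((b+r_2)\ln\frac{T}{\delta}\bigr)$, not $O(b+r_2)$, and when fed into Freedman the $R\ln\frac1\delta$ term becomes $O\bigl((b+r_2)\ln\frac{T}{\delta}\ln\frac1\delta\bigr)$, i.e.\ you pick up an extra $\ln T$ factor on the lower-order term. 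You silently drop this factor between your step~(i) and the final display. The extra log is harmless for the downstream $\tilde O$ label complexity, but it means your argument proves a slightly weaker lemma than the one stated. The paper's subexponential route avoids this artifact and gives the clean $(b+r_2)(\sqrt{T\ln\frac1\delta} + \ln\frac1\delta)$ directly.
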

\begin{proof}
By item~\ref{item:ell-2} of the premise of Lemma~\ref{lem:omd}, along with the fact that $w_t \in \Kcal$, $\| u - w_t \| \leq r_2$, we hence have $\| u - \hat{w}_t \| \leq 2r_2$ using
Lemma~\ref{lem:normalize}. Therefore, Lemma~\ref{lem:abl-tail} implies the existence of constants $\beta$ and $\beta'$ such that for all $a \geq 0$,
\begin{equation*}
\PP_{x_t \sim D_{\hat{w}_t, b}} \del{ \abs{u \cdot x_t} \geq a } \leq \beta \exp\del{-\beta' \frac{a}{r_2+b}}.
\end{equation*}

Let $M_t = (\frac12 y_t - (\frac12 - \eta) \hat{y}_t)$. Observe that $\abs{M_t} \leq 1$. Therefore, $U_t = \inner{u}{g_t} = M_t u \cdot x_t$ has the exact same tail probability bound, i.e.
\begin{equation*}
\PP_{x_t \sim D_{\hat{w}_t, b}} \del{ \abs{U_t} \geq a } \leq \beta \exp\del{-\beta' \frac{a}{r_2+b}}.
\end{equation*}

The lemma now follows from Lemma~\ref{lem:subexp-tail-hoeff} in Appendix~\ref{sec:conc} with the setting of $Z_t = U_t$.
\end{proof}

Lemma~\ref{lem:dev-u} relies on the following useful lemma from~\cite{awasthi2017power}.
\begin{lemma}\label{lem:abl-tail}
There exist numerical constants $\beta$ and $\beta'$ such that for any isotropic log-concave distribution $D_X$ over $\RR^d$, any unit vector $\hat{w}$ in $\RR^d$ and $u \in \RR^d$ with
$\| u \|_2 \leq 1$, $\| u - \hat{w} \| \leq r$, any scalar $b$ in $[0,1]$, the following holds for all $a \geq 0$:
\begin{equation*}
\PP_{x \sim D_{\hat{w}, b}} \del{ \abs{u \cdot x} \geq a } \leq \beta \exp\del{-\beta' \frac{a}{r+b}}.
\end{equation*}
\end{lemma}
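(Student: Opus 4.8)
The plan is to reduce the tail bound for $|u\cdot x|$ under $D_{\hat w,b}$ to a one-dimensional tail bound for a single projection orthogonal to $\hat w$, and then to control that projection using the fact that conditioning an isotropic log-concave law on a narrow slab preserves log-concavity while keeping the orthogonal second moments of constant order. First I would decompose $u = (\hat w\cdot u)\hat w + u_\perp$ with $u_\perp\perp\hat w$. Since $\|u\|_2\le 1$ we have $|\hat w\cdot u|\le 1$, and the identity $\|u-\hat w\|_2^2 = \|u_\perp\|_2^2 + (\hat w\cdot u - 1)^2$ gives $\|u_\perp\|_2\le r$. On the slab $B_{\hat w,b}$ we have $|\hat w\cdot x|\le b$, whence $|u\cdot x|\le b + |u_\perp\cdot x|$. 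If $u_\perp = 0$ the claim is immediate, so set $v := u_\perp/\|u_\perp\|_2$, a unit vector orthogonal to $\hat w$; it then suffices to prove $\PP_{x\sim D_{\hat w,b}}(|v\cdot x|\ge t)\le \beta_0\exp(-\beta_0' t)$ with absolute constants, since combining this with $|u\cdot x|\le b + r|v\cdot x|$ and a case split on whether $a\le 2b$ yields the stated bound with suitable $\beta,\beta'$.

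Second, $B_{\hat w,b}$ is convex, so $D_{\hat w,b}$ --- the renormalized restriction of a log-concave density to a convex set --- is itself log-concave, hence $Z := v\cdot x$ with $x\sim D_{\hat w,b}$ is a one-dimensional log-concave random variable. Any such $Z$ obeys $\PP(|Z|\ge t)\le C\exp(-c\,t/\sqrt{\EE[Z^2]})$ for absolute $C,c$: a one-dimensional log-concave density with variance $\sigma^2$ has maximal value $\Omega(1/\sigma)$, hence decays exponentially at rate $\Omega(1/\sigma)$ outside an $O(\sigma)$ window around its mode, and $|\EE Z|\le\sqrt{\EE[Z^2]}$. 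So it is enough to establish $\EE_{x\sim D_{\hat w,b}}[(v\cdot x)^2] = O(1)$, uniformly over $b\le 1$ and unit $v\perp\hat w$.

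This second-moment bound is the technical heart. When $b$ is bounded below by an absolute constant it is immediate, since $\EE_{x\sim D_{\hat w,b}}[(v\cdot x)^2]\le \EE_{x\sim D_X}[(v\cdot x)^2]/\PP_{x\sim D_X}(|\hat w\cdot x|\le b) = O(1)$. For small $b$ I would pass to the two-dimensional marginal $(X_1,X_2):=(\hat w\cdot x,v\cdot x)$, which is isotropic log-concave on $\RR^2$, so that $\EE[X_2^2\mid |X_1|\le b]\le\sup_{|t|\le b}\EE[X_2^2\mid X_1=t]$, and bound $\EE[X_2^2\mid X_1=t]=\mathrm{Var}(X_2\mid X_1=t)+(\EE[X_2\mid X_1=t])^2$ for $|t|$ small. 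The variance term is routine: the conditional law of $X_2$ given $X_1=t$ is one-dimensional log-concave, so its variance is at most $1/(\text{its maximal density})^2$, and since a two-dimensional isotropic log-concave density is $\Omega(1)$ near the origin (a standard fact; see \citet{lovasz2007geometry}) while the marginal of $X_1$ is $O(1)$, this maximal density is $\Omega(1)$. The term I expect to be the main obstacle is $|\EE[X_2\mid X_1=t]| = O(1)$: it asserts that the ``ridge'' of a two-dimensional isotropic log-concave density stays within an absolute constant of the origin while $X_1$ ranges over a bounded set. I would either invoke the known near-isotropy of slab-restrictions of isotropic log-concave distributions (\citet{balcan2013active,awasthi2017power,lovasz2007geometry}), or argue directly: if the conditional mean reached height $\approx M$ for some small $|t|$, then, the conditional variance being $O(1)$, the joint density would be $\Omega(1)$ both on a fixed ball around the origin and at a point near $(t,M)$, and convexity of its super-level sets would force the region where the density is $\Omega(1)$ to contribute $\gtrsim M$ to $\EE[X_2^2]$, contradicting $\EE[X_2^2]=1$ once $M$ exceeds an absolute constant.

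Finally, assembling: Steps two and three give $\PP_{x\sim D_{\hat w,b}}(|v\cdot x|\ge t)\le C\exp(-c' t)$ for an absolute $c'$. Plugging into $|u\cdot x|\le b + r|v\cdot x|$, for $a\ge 2b$ the event $\{|u\cdot x|\ge a\}$ forces $|v\cdot x|\ge (a-b)/r\ge a/(2r)\ge a/(2(r+b))$, so its probability is at most $C\exp(-c'a/(2(r+b)))$; for $a<2b$ we have $a/(r+b)<2$, so bounding the probability by $1$ and choosing $\beta,\beta'$ large enough covers this range too, which gives the lemma.
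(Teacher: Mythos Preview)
Your proposal is essentially correct but takes a genuinely different route from the paper. The paper's proof is a two-line black-box application of Lemma~3.3 of \citet{awasthi2017power}: that lemma already states $\PP_{x\sim D_{\hat w,b}}(|u\cdot x|\ge K\sqrt{r^2+b^2})\le c_0\exp(-c_0'K)$ for all $K\ge 4$, so the paper simply substitutes $a=K\sqrt{r^2+b^2}$, uses $\sqrt{r^2+b^2}\le r+b$, and covers the range $a<4(r+b)$ with the trivial bound $1$.

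You instead reprove that external lemma from first principles: orthogonally decompose $u=(\hat w\cdot u)\hat w+u_\perp$, reduce to a subexponential tail for the single orthogonal projection $v\cdot x$, and control $\EE_{D_{\hat w,b}}[(v\cdot x)^2]$ via the two-dimensional marginal on $\operatorname{span}\{\hat w,v\}$. This is precisely the structure underlying the proofs in \citet{balcan2013active,awasthi2017power}, and your outline is sound. The step you correctly flag as the main obstacle, $|\EE[X_2\mid X_1=t]|=O(1)$ for small $|t|$, is indeed where the work lies; your superlevel-set argument can be made rigorous, but in practice one appeals to the near-isotropy of slab restrictions established in \citet{lovasz2007geometry} or \citet{balcan2013active}, as you note. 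What each route buys: the paper's is much shorter and defers all the log-concave geometry to the cited reference, while yours is self-contained and makes the mechanism (the split $|u\cdot x|\le b+r|v\cdot x|$ and the constant-order second moment of the orthogonal coordinate) transparent.
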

\begin{proof}
Using Lemma 3.3 of~\cite{awasthi2017power} with $C = 1$, we have that there exists numerical constants $c_0, c_0' > 0$, such that for any $K \geq 4$,
\begin{equation*}
\PP_{x \sim D_{w, b}} \del{ \abs{u \cdot x} \geq K \sqrt{r^2 + b^2}}
\leq c \exp\del{-c_0' K \sqrt{1 + \frac{b^2}{r^2}}} \leq c_0 \exp\del{-c_0' K}.
\end{equation*}
Therefore, for every $a \geq 4(r+b) \geq 4 \sqrt{r^2 + b^2}$,
\begin{equation*}
\PP_{x \sim D_{w, b}} \del{ \abs{u \cdot x} \geq a}
\leq c_0 \exp\del{-c_0' \frac{a}{\sqrt{r^2 + b^2}}}
\leq c_0 \exp\del{-c_0' \frac{a}{(r+b)}}.
\end{equation*}
In addition, for every $a < 4(r+b)$,
$\PP_{x \sim D_{w, b}} \del{ \abs{u \cdot x} \geq a} \leq 1$ trivially holds, in which case,
\begin{equation*}
\PP_{x \sim D_{w, b}} \del{ \abs{u \cdot x} \geq a} \leq 1 \leq \exp\del{4 c_0'} \exp\del{-c_0' \frac{a}{(r+b)}}.
\end{equation*}

Therefore, we can find new numerical constants $\beta = \max(c_0, \exp\del{4 c_0'})$ and $\beta' = c_0'$, such that
\begin{equation*}
\PP_{x \sim D_{w, b}} \del{ \abs{u \cdot x} \geq a } \leq \beta \exp\del{-\beta' \frac{a}{r+b}}
\end{equation*}
holds.
\end{proof}

The lemma below provides a coarse bound on the last term in the regret guarantee~\eqref{eqn:omd-grt}.
\begin{lemma}\label{lem:infty-norm-band}
With probability $1-\delta$,
$\sum_{t=1}^T \| x_t \|_\infty^2 \leq T \cdot \del{17 + \ln \frac{T d}{\delta b}}^2$.
\end{lemma}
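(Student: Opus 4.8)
The plan is to bound $\sum_{t=1}^T \|x_t\|_\infty^2$ by controlling each term $\|x_t\|_\infty$ with high probability and then taking a union bound over $t \in [T]$. The key point is that each $x_t$ is drawn from $D_{X \mid \hat w_t, b}$, which is $D_X$ conditioned on a band $B_{\hat w_t, b}$ of probability mass $\Omega(b)$; since $D_X$ is isotropic log-concave, each coordinate $(x_t)_j$ has a subexponential tail under $D_X$, and conditioning on an event of mass $\Omega(b)$ only inflates tail probabilities by a factor $O(1/b)$. Hence $\Pr_{x_t \sim D_{X\mid \hat w_t, b}}(|(x_t)_j| \ge a) \le \frac{C}{b}\exp(-c a)$ for absolute constants $C, c$, and a union bound over the $d$ coordinates gives $\Pr(\|x_t\|_\infty \ge a) \le \frac{Cd}{b}\exp(-ca)$.

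**Carrying out the steps.** First I would recall the standard one-dimensional concentration for isotropic log-concave distributions (e.g., from Lovász–Vempala / Lovász–Simonovits, as used via \citet{lovasz2007geometry}): for any unit vector $v$ and any $a \ge 0$, $\Pr_{x\sim D_X}(|v\cdot x| \ge a) \le \exp(1-a)$ (up to absolute constants in the exponent). Applying this to $v = e_j$ for each coordinate, and using that $\Pr_{x\sim D_X}(x \in B_{\hat w_t,b}) \ge c_0 b$ for an absolute constant $c_0$ (a standard fact about log-concave densities restricted to a band through near the mean — the marginal along $\hat w_t$ is one-dimensional isotropic log-concave, hence has density bounded below near $0$), I get for $x_t \sim D_{X\mid \hat w_t, b}$ that $\Pr(|(x_t)_j|\ge a) \le \frac{1}{c_0 b}\exp(1-a)$. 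Union-bounding over the $d$ coordinates, $\Pr(\|x_t\|_\infty \ge a) \le \frac{d}{c_0 b}\exp(1-a)$. Setting $a = 17 + \ln\frac{Td}{\delta b}$ (the constant $17$ absorbing $1 - \ln c_0$ and any slack), this probability is at most $\delta/T$. A union bound over $t \in [T]$ then gives that, with probability $1-\delta$, $\|x_t\|_\infty \le 17 + \ln\frac{Td}{\delta b}$ for every $t$, and squaring and summing yields $\sum_{t=1}^T \|x_t\|_\infty^2 \le T\bigl(17 + \ln\frac{Td}{\delta b}\bigr)^2$.

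**Expected main obstacle.** The only delicate point is pinning down the $\Omega(b)$ lower bound on the band mass $\Pr_{x\sim D_X}(x\in B_{\hat w_t, b})$ and making sure the conditioning inflation factor is exactly $O(1/b)$ with clean absolute constants, since the band is centered at $0$ (the mean of the isotropic log-concave distribution along $\hat w_t$), where the one-dimensional marginal density is bounded below by an absolute constant on an interval of length $\Omega(1)$; for $b \le \frac{\pi}{72} \le 1$ this gives mass $\ge c_0 b$. Everything else is a routine combination of the one-dimensional log-concave tail bound, a coordinate union bound, and a union bound over the $T$ rounds; tracking the absolute constants so that $17$ suffices is bookkeeping rather than a real difficulty. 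I would therefore isolate the band-mass and conditioning facts as a cited consequence of the isotropic log-concave toolkit (as the paper does elsewhere, via \citet{lovasz2007geometry} and the results of \citet{awasthi2017power}) and present the union-bound argument explicitly.
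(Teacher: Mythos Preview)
Your proposal is correct and follows essentially the same approach as the paper: the paper also uses the one-dimensional isotropic log-concave tail $\PP_{x\sim D_X}(|x^{(j)}|\geq a)\leq e^{-a+1}$, union-bounds over the $d$ coordinates, inflates by the reciprocal of the band mass (the paper uses the explicit lower bound $\PP_{x\sim D_X}(x\in B_{\hat w_t,b})\geq b/2^{16}$ from Lemma~\ref{lem:ilc-density-lb}, which is exactly your $c_0 b$), sets $a=17+\ln\frac{Td}{\delta b}$ so that the conditional tail is at most $\delta/T$, and finishes with a union bound over $t\in[T]$. The only cosmetic difference is that the paper takes the coordinate union bound before the conditioning step rather than after; the two orders give the same bound.
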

\begin{proof}
Given $x \in \RR^d$ and $j \in [d]$, let $x^{(j)}$ be the $j$-th coordinate of $x$.
As $D_X$ is isotropic log-concave, for $x \sim D_X$, from Lemma~\ref{lem:log-concave-tail} we have that for all coordinates $j$ in $\cbr{1,\ldots, d}$ and every $a > 0$,
\begin{equation}
\PP_{x \sim D_X} \del{ \abs{x^{(j)}} \geq a } \leq \exp(-a+1).
\end{equation}
Therefore, using union bound, we have
\begin{equation*}
\PP_{x \sim D_X} \del{ \| x \|_\infty \geq a } \leq  d \exp(-a+1).
\end{equation*}
In addition, as $b \in [0,\frac{\pi}{72}] \subset [0, \frac19]$, we have by Lemma~\ref{lem:ilc-density-lb}, $\PP_{x \sim D_X}\del{x \in R_{\hat{w}, b}} \geq \frac{b}{2^{16}}$.

Now, by the simple fact that $\PP(A | B) \leq \frac{\PP(A)}{\PP(B)}$, we have that
\begin{align*}
\PP_{x \sim D_{\hat{w}, b}} \del{ \| x \|_\infty \geq a }
= \PP_{x \sim D_X} \del{ \| x \|_\infty \geq a \vert x \in R_{\hat{w}, b} }
&\leq  \frac{\PP_{x \sim D_X} \del{ \| x \|_\infty \geq a }}{\PP_{x \sim D_X}\del{x \in R_{\hat{w}, b}}}\\
& \leq \frac{2^{16} d}{b} \exp(-a+1).
\end{align*}
Therefore, taking $a = 17 + \ln \frac{T d}{\delta b}$ in the above inequality, we get that, the above event happens with probability at most $\frac{\delta}{T}$. In other words,
with probability $1-\frac{\delta}{T}$,
\begin{equation}
\| x_t \|_\infty \leq 17 + \ln \frac{T d}{\delta b}.
\label{eqn:l-inf-bound}
\end{equation}

Thus, taking a union bound, we get that with probability $1-\delta$, for every $t$, Equation~\eqref{eqn:l-inf-bound} holds. As a result, $\sum_{t=1}^T \| x_t \|_\infty^2 \leq T \cdot \del{17 + \ln \frac{T d}{\delta b}}^2$.
\end{proof}

\section{Analysis of Initialization: Proof of Theorem~\ref{thm:init}}
\label{sec:init}

\subsection{Obtaining a halfspace constraint on $u$}




Before going into the proof of Theorem~\ref{thm:init}, we introduce a few notations.
Throughout this section, we use $\EE$ to denote $\EE_{(x,y) \sim D}$ as a shorthand.
Denote by $\bar{w} \defeq \EE \sbr{x y}$, and denote $\hat{\EE}$ as the empirical expectation over $(x_i, y_i)_{i=1}^m$; with this notation,
$w_{\avg} = \hat{\EE}\sbr{xy}$. Denote by $w_{\tilde{s}} = \calH_{\tilde{s}}(w_{\avg})$; in this notation, $w^\sharp = \frac{w_{\tilde{s}}}{\| w_{\tilde{s}} \|}$.

\begin{lemma}\label{lem:w-sharp}
If Algorithm~\ref{alg:init} is run with hard-thresholding parameter $\tilde{s} = 81 \cdot 2^{38} \cdot \frac{s}{(1-2\eta)^2}$, number of labeled examples $m = 81 \cdot 2^{51} \cdot \frac{s}{(1-2\eta)^2} \ln\frac{8d}{\delta'}$, then with probability $1-\delta'/2$, the unit vector $w^\sharp$ obtained at line~\ref{line:init-ht} is such that
\begin{equation}
\inner{w^\sharp}{u} \geq \frac{(1-2\eta)}{9 \cdot 2^{19}}.
\end{equation}
\end{lemma}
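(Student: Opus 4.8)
The plan is to analyze the empirical averaging estimator $w_{\avg} = \hat{\EE}\sbr{xy}$ through its population counterpart $\bar{w} = \EE\sbr{xy}$, to transfer from one to the other via a coordinate-wise concentration bound, and then to control the hard-thresholding step. At the population level I would first show that $\inner{\bar{w}}{u} \geq c_0 (1-2\eta)$ for an absolute constant $c_0 > 0$ and that $\twonorm{\bar{w}} \leq 1$. For the lower bound, write $\EE\sbr{y \mid x} = (1 - 2\eta(x)) \sign{\inner{u}{x}}$ with $\eta(x) \leq \eta$; the tower rule gives $\inner{\bar{w}}{u} = \EE\sbr{\abs{\inner{u}{x}}(1-2\eta(x))} \geq (1-2\eta)\,\EE\sbr{\abs{\inner{u}{x}}}$, and since $D_X$ is isotropic log-concave the marginal $\inner{u}{x}$ is an isotropic log-concave random variable on $\RR$, so $\EE\sbr{\abs{\inner{u}{x}}} \geq c_0$ by a standard property of log-concave measures (cf.\ \citet{lovasz2007geometry}). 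For the upper bound, for any unit $z$, $\inner{z}{\bar{w}} = \EE\sbr{\inner{z}{x}\,\EE\sbr{y \mid x}} \leq \EE\sbr{\abs{\inner{z}{x}}} \leq \sqrt{\EE\sbr{\inner{z}{x}^2}} = 1$ by isotropy, hence $\twonorm{\bar{w}} \leq 1$.

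Next I would establish a coordinate-wise deviation bound $\infnorm{w_{\avg} - \bar{w}} \leq \epsilon_0$. Each summand $x_i^{(j)} y_i$ has absolute value $\abs{x_i^{(j)}}$, which is subexponential with an absolute-constant parameter by Lemma~\ref{lem:log-concave-tail}; applying a Bernstein-type tail inequality to the centered i.i.d.\ average (in the spirit of Lemma~\ref{lem:subexp-tail-hoeff}) and a union bound over the $d$ coordinates gives, with probability $1-\delta'/2$, that $\epsilon_0 = O\bigl(\sqrt{\ln(d/\delta')/m} + \ln(d/\delta')/m\bigr)$. For $m = \Theta\bigl(s\ln(8d/\delta')/(1-2\eta)^2\bigr)$ we have $m \gtrsim \ln(d/\delta')$, so the linear term is dominated and $\epsilon_0 = O\bigl((1-2\eta)/\sqrt{s}\bigr)$; with the absolute constant in $m$ taken large enough we may further ensure $\epsilon_0 \leq c_0(1-2\eta)/(16\sqrt{s})$.

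Finally I would bound the effect of $\Hcal_{\tilde{s}}$. Write $w_{\tilde{s}} \defeq \Hcal_{\tilde{s}}(w_{\avg})$, let $S^\sharp = \supp{w_{\tilde{s}}}$, and let $\tau$ denote the $(\tilde{s}+1)$-st largest absolute entry of $w_{\avg}$. Since $u$ is $s$-sparse, $\onenorm{u} \leq \sqrt{s}$, and
\begin{equation*}
\inner{w_{\tilde{s}}}{u} \;=\; \inner{w_{\avg}}{u} - \sum_{j \in \supp{u} \setminus S^\sharp} w_{\avg}^{(j)} u^{(j)} \;\geq\; \inner{\bar{w}}{u} - \sqrt{s}\,\infnorm{w_{\avg} - \bar{w}} - \sqrt{s}\,\tau .
\end{equation*}
Entrywise $\abs{w_{\avg}^{(j)}} \leq \abs{\bar{w}^{(j)}} + \epsilon_0$, so by monotonicity of order statistics $\tau \leq \bar{a}_{\tilde{s}+1} + \epsilon_0 \leq 1/\sqrt{\tilde{s}} + \epsilon_0$, where $\bar{a}_{\tilde{s}+1}$ is the $(\tilde{s}+1)$-st largest absolute entry of $\bar{w}$ and the last inequality uses $\twonorm{\bar{w}} \leq 1$. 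Hence $\inner{w_{\tilde{s}}}{u} \geq c_0(1-2\eta) - 2\sqrt{s}\,\epsilon_0 - \sqrt{s/\tilde{s}}$, so taking $\tilde{s} = \Theta\bigl(s/(1-2\eta)^2\bigr)$ with a large enough constant makes each of the last two terms at most $c_0(1-2\eta)/4$, yielding $\inner{w_{\tilde{s}}}{u} \geq c_0(1-2\eta)/2$. For the normalization, $\twonorm{w_{\tilde{s}}} \leq \twonorm{(\bar{w})_{S^\sharp}} + \sqrt{\tilde{s}}\,\infnorm{w_{\avg} - \bar{w}} \leq 1 + \sqrt{\tilde{s}}\,\epsilon_0 = O(1)$, since $\sqrt{\tilde{s}}\,\epsilon_0 = O(1)$ under the above choices of $m$ and $\tilde{s}$. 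Dividing, $\inner{w^\sharp}{u} = \inner{w_{\tilde{s}}}{u}/\twonorm{w_{\tilde{s}}} = \Omega(1-2\eta)$, and propagating the explicit numerical constants ($\tilde{s} = 81 \cdot 2^{38}\, s/(1-2\eta)^2$, $m = 81 \cdot 2^{51}\, s\ln(8d/\delta')/(1-2\eta)^2$) yields the stated bound $\inner{w^\sharp}{u} \geq (1-2\eta)/(9 \cdot 2^{19})$.

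The main obstacle is conceptual rather than computational: unlike the uniform distribution on the sphere, under a general isotropic log-concave $D_X$ the vector $\bar{w} = \EE\sbr{xy}$ is not a scalar multiple of $u$ --- it typically has a nonnegligible component orthogonal to $u$ --- so one cannot directly invoke the best-$s$-sparse-approximation property of $\Hcal_{\tilde{s}}$ to conclude that $w_{\tilde{s}}$ is close to $u$. The workaround is the one-sided bookkeeping above, lower-bounding $\inner{w_{\tilde{s}}}{u}$ via $\ell_\infty$ concentration together with an order-statistic bound on the discarded coordinates, while \emph{separately} bounding $\twonorm{w_{\tilde{s}}} = O(1)$; this is what forces us to calibrate $m$ and $\tilde{s}$ (both of order $\cdot/(1-2\eta)^2$) jointly, so that $\sqrt{s}\,\epsilon_0$, $\sqrt{s/\tilde{s}}$, and $\sqrt{\tilde{s}}\,\epsilon_0$ all land at the correct scale.
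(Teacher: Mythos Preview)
Your proposal is correct and arrives at the same conclusion, but the decomposition differs from the paper's in a couple of places that are worth noting.

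First, for the inner product $\inner{w_{\avg}}{u}$, the paper applies a single one-dimensional subexponential concentration bound directly to the scalar average $\frac{1}{m}\sum_i y_i \inner{u}{x_i}$, rather than going through a coordinate-wise $\ell_\infty$ bound as you do. Both work; yours reuses the same $\ell_\infty$ deviation for multiple purposes, while the paper's avoids the factor $\sqrt{s}$ at that step.

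Second, and more substantively, for the hard-thresholding step the paper invokes a clean deterministic lemma (their Lemma~\ref{lem:ht-ip-2}, inspired by \citet{yuan2013truncated}): for any vector $a$ and any $s$-sparse unit vector $u$, $\abs{\inner{\Hcal_{\tilde s}(a)}{u} - \inner{a}{u}} \leq \sqrt{s/\tilde s}\,\twonorm{\Hcal_{\tilde s}(a)}$. They then only need the bound $\twonorm{w_{\tilde s}} \leq 2$ (their Lemma~\ref{lem:subset-norm}, proved via $\twonorm{\bar w}\leq 1$ through Bessel's inequality plus coordinate-wise concentration). Your route instead controls the threshold value $\tau$ via an order-statistic argument on $\bar{w}$, bounding the discarded coordinates by $\sqrt{s}\,\tau \leq \sqrt{s}(1/\sqrt{\tilde s} + \epsilon_0)$. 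The two routes are equivalent in strength; the paper's packaging separates out a reusable deterministic inequality, while yours keeps everything in terms of $\ell_\infty$ perturbation and is arguably more self-contained. Your bound $\twonorm{\bar w}\leq 1$ via $\sup_{\|z\|=1}\EE[\inner{z}{x}\EE[y|x]] \leq \sqrt{\EE[\inner{z}{x}^2]}=1$ is the same fact the paper proves via Bessel's inequality in $L^2(D_X)$.
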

\begin{proof}
First, Lemma~\ref{lem:corr} below implies that
\begin{equation}
\inner{ \bar{w} }{u} \geq  \frac{\del{1-2\eta}}{9 \cdot 2^{16}}.
\end{equation}

Moreover, as $u$ is a unit vector, and $D_X$ is isotropic log-concave, $\inner{u}{x}$ comes from a one-dimensional isotropic log-concave distribution. In addition, $y$ is a random variable that takes values in $\cbr{\pm 1}$. Therefore, by Lemma~\ref{lem:log-concave-subexp}, $y\inner{u}{x}$ is $(32, 16)$-subexponential. Lemma~\ref{lem:subexp-tail}, in allusion to the choice of $m$, implies that with probability $1-\delta'/4$,
\begin{equation}
\abs{ \frac{1}{m}\sum_{i=1}^{m} \sbr{y_i \inner{u}{x_i}} - \EE \sbr{y \inner{u}{x}} }
\leq 32 \sqrt{\frac{2 \ln\frac 8 \delta}{m}} + 32 \frac{\ln\frac 8 \delta}{m}
\leq \frac{(1-2\eta)}{9 \cdot 2^{17}}.
\end{equation}
Thus,
\begin{equation}
\inner{w_{\avg}}{u} = \inner{\frac{1}{m}\sum_{i=1}^{m}y_i x_i }{u} \geq \frac{(1-2\eta)}{9 \cdot 2^{16}} - \frac{(1-2\eta)}{9 \cdot 2^{17}}  \geq \frac{(1-2\eta)}{9 \cdot 2^{17}}.
\label{eqn:ip-lb}
\end{equation}

Now, consider $w_{\tilde{s}} = \calH_{\tilde{s}}(w_{\avg})$. By Lemma~\ref{lem:subset-norm} shown below, with the choice of $m$, we have that with probability $1-\delta'/4$, $\| w_{\tilde{s}} \|_2 \leq 2$.
Hence, by union bound, with probability $1-\delta'/2$, both Equation~\eqref{eqn:ip-lb} and $\| w_{\tilde{s}} \|_2 \leq 2$ hold.

In this event, Lemma~\ref{lem:ht-ip-2} (also shown below), in combination with the fact that $\tilde{s} = \frac{81 \cdot 2^{38} s}{(1-2\eta)^2}$, implies that
\begin{equation}
\inner{w_{\tilde{s}}}{u} \geq \inner{w_{\avg}}{u} - \sqrt{\frac{s}{\tilde{s}}} \| w_{\tilde{s}} \| \geq
\frac{(1-2\eta)}{9 \cdot 2^{17}} - \frac{(1-2\eta)}{9 \cdot 2^{19}} \cdot 2 =
\frac{(1-2\eta)}{9 \cdot 2^{18}}.
\end{equation}
By the fact that $w^\sharp = \frac{w_{\tilde{s}}}{\| w_{\tilde{s}} \|}$ and using again $\| w_{\tilde{s}} \| \leq 2$, we have
\begin{equation*}
\inner{w^\sharp}{u} \geq \frac12 \inner{w_{\tilde{s}}}{u} \geq \frac{(1-2\eta)}{9 \cdot 2^{19}},
\end{equation*}
which is the desired result.
\end{proof}


\begin{lemma}\label{lem:corr}
Suppose that Assumption~\ref{a:bn} is satisfied. Then
\begin{equation*}
\inner{\bar{w}}{u} \geq \frac{(1-2\eta)}{9 \cdot 2^{16}}.
\end{equation*}
\end{lemma}
\begin{proof}
Recall that $\bar{w} = \EE[xy]$. We have
\begin{eqnarray*}
\inner{\bar{w}}{u} &=& \EE[y (u \cdot x)] \\
&=& \EE\sbr{\EE[y \mid x] (u \cdot x)} \\
&\geq& (1 - 2\eta) \EE \sbr{ \abs{u \cdot x} } \geq \frac{(1-2\eta)}{9 \cdot 2^{16}},
\end{eqnarray*}
where the first equality is by the linearity of inner product and expectation, the second equality is by the tower property of conditional expectation.
The first inequality uses Lemma~\ref{lem:u-ip-x}.
For the last inequality, we use the fact that $z = u \cdot x$ can be seen as drawn from a one-dimensional isotropic log-concave distribution with density $f_Z$, along with Lemma~\ref{lem:ilc-density-lb} with $d = 1$ with states that for every $z \in [0, 1/9]$, $f_Z(z) \geq 2^{-16}$, making $\EE_{z \sim f_Z}[{\abs{z}}]$ bounded from below by $\frac{1}{9 \cdot 2^{16}}$.
\end{proof}

The following lemma is inspired by Lemma 12 of~\cite{yuan2013truncated}.
\begin{lemma}\label{lem:ht-ip-2}
For any vector $a$ and any $s$-sparse unit vector $u$, we have
\begin{equation*}
\abs{ \inner{\calH_{\tilde{s}}(a)}{u} - \inner{a}{u} } \leq \sqrt{\frac{s}{{\tilde{s}}}} \twonorm{ \calH_{\tilde{s}}(a) }.
\end{equation*}
\end{lemma}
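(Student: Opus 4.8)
The plan is to decompose both $\calH_{\tilde s}(a)$ and $a$ relative to the support of $u$ and the support chosen by hard thresholding, and then bound the mismatch coordinate-by-coordinate. Write $S = \supp{u}$, so $|S| \le s$, and write $\Lambda = \supp{\calH_{\tilde s}(a)}$, the set of $\tilde s$ coordinates of $a$ largest in absolute value. Since $\inner{a}{u} = \sum_{j \in S} a^{(j)} u^{(j)}$ and $\inner{\calH_{\tilde s}(a)}{u} = \sum_{j \in S \cap \Lambda} a^{(j)} u^{(j)}$, the difference is $\inner{a}{u} - \inner{\calH_{\tilde s}(a)}{u} = \sum_{j \in S \setminus \Lambda} a^{(j)} u^{(j)}$. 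By Cauchy--Schwarz, $\bigl| \sum_{j \in S \setminus \Lambda} a^{(j)} u^{(j)} \bigr| \le \| a_{S \setminus \Lambda} \|_2 \cdot \| u_{S \setminus \Lambda} \|_2 \le \| a_{S \setminus \Lambda} \|_2$, using that $u$ is a unit vector. So it remains to show $\| a_{S \setminus \Lambda} \|_2 \le \sqrt{s/\tilde s}\, \twonorm{\calH_{\tilde s}(a)}$.

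The key combinatorial step is this: every coordinate $j \in S \setminus \Lambda$ was \emph{not} selected by hard thresholding, so $|a^{(j)}| \le |a^{(\ell)}|$ for every $\ell \in \Lambda$ (each selected coordinate is at least as large in magnitude as each unselected one). In particular $|a^{(j)}| \le \min_{\ell \in \Lambda} |a^{(\ell)}|$, and hence $|a^{(j)}|^2 \le \frac{1}{\tilde s} \sum_{\ell \in \Lambda} |a^{(\ell)}|^2 = \frac{1}{\tilde s} \twonorm{\calH_{\tilde s}(a)}^2$, since the average of the $\tilde s$ values $\{|a^{(\ell)}|^2 : \ell \in \Lambda\}$ is at least their minimum. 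Summing this over all $j \in S \setminus \Lambda$, of which there are at most $|S| \le s$, gives $\| a_{S \setminus \Lambda} \|_2^2 = \sum_{j \in S \setminus \Lambda} |a^{(j)}|^2 \le \frac{s}{\tilde s} \twonorm{\calH_{\tilde s}(a)}^2$. Taking square roots and combining with the Cauchy--Schwarz bound above yields the claim.

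I do not expect a genuine obstacle here; the only point requiring a little care is the edge case where $\Lambda$ does not have full size $\tilde s$ (e.g.\ if $a$ has fewer than $\tilde s$ nonzero coordinates), in which case $S \setminus \Lambda$ contains only coordinates where $a^{(j)} = 0$, so the left-hand side is zero and the inequality is trivial; alternatively one adopts the convention that $\calH_{\tilde s}(a)$ retains exactly $\tilde s$ coordinates (padding with zeros), under which the averaging argument goes through verbatim since the extra retained coordinates have value $0 \ge |a^{(j)}|$ forces $a^{(j)} = 0$ for unselected $j$ as well. Either way the bound holds, so the argument is essentially the three lines above: reduce to $S \setminus \Lambda$, bound each unselected entry by the RMS of the selected entries, and sum over at most $s$ terms.
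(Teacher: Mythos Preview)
Your proof is correct and follows essentially the same approach as the paper's: both reduce the difference to the inner product over $S\setminus\Lambda$ (the paper's $\Omega'\setminus\Omega$), apply Cauchy--Schwarz, and then use that every unselected coordinate is dominated by the minimum selected one, hence by the RMS of the selected ones, summed over at most $s$ indices. Your treatment of the edge case where $\Lambda$ has fewer than $\tilde s$ nonzero entries is if anything slightly more explicit than the paper's, but the core argument is identical.
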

\begin{proof}
Let $\Omega$ be the support of $\calH_{\tilde{s}}(a)$, and $\Omega'$ be the support of $u$.
Given any vector $v$, denote by $v_1$ (resp. $v_2$, $v_3$) the vector obtained by zeroing out all elements outside $\Omega \setminus \Omega'$ (resp. $\Omega \cap \Omega'$, $\Omega' \setminus \Omega$) from $v$. With this notation, it can be seen that $\calH_{\tilde{s}}(a) = a_2 + a_3$,
$\inner{\calH_k(a)}{u} = \inner{a_2}{u_2}$, $\inner{a}{u} = \inner{a_2}{u_2} + \inner{a_3}{u_3}$. Thus, it suffices to prove that $\abs{\inner{a_3}{u_3}} \leq \sqrt{\frac{s}{{\tilde{s}}}} \twonorm{ \calH_{\tilde{s}}(a) }$.


First, this holds in the trivial case that $a_3$ is a zero-vector. Now suppose that $a_3$ is non-zero. By the definition of $\calH_{\tilde{s}}$, this implies that all the elements of $\calH_{\tilde{s}}(a)$ is non-zero, and hence $\zeronorm{\calH_{\tilde{s}}(a)} = {\tilde{s}}$. In addition, every element of $a_3$ has absolute value smaller than that of $\calH_{\tilde{s}}(a)$. Consequently, the average squared element of $a_3$ is larger than that of $\calH_{\tilde{s}}(a)$, namely
\begin{equation}
\frac{\twonorm{a_3}^2}{\zeronorm{a_3}}
\leq \frac{\twonorm{\calH_{\tilde{s}}(a)}^2}{\zeronorm{\calH_{\tilde{s}}(a)}}.
\end{equation}
Since $\zeronorm{a_3} = \abs{\Omega' \backslash \Omega} \leq \abs{\Omega'} = s$, and $\zeronorm{\calH_{\tilde{s}}(a)} = {\tilde{s}}$, we obtain $\twonorm{a_3} \leq \sqrt{\frac{s}{{\tilde{s}}}} \twonorm{a_1}$. The result follows by observing that $\abs{\inner{a_3}{u_3}} \leq \twonorm{a_3} \cdot \twonorm{u_3} \leq \twonorm{a_3}$ where the first inequality is by Cauchy-Schwarz and the second one is from the premise that $\twonorm{u}=1$.
\end{proof}

Recall that $w_{\avg} = \hat{\EE} \sbr{xy}$ is the vector obtained by empirical average all $x_i y_i$'s. In the lemma below, we argue that
the $\ell_2$ norm of $w_{\tilde{s}} = \calH_{\tilde{s}}(w_{\avg})$ is small. As a matter of fact, we show a stronger result that, keeping any $\tilde{s}$ elements of vector $w$ (and zeroing out the rest) makes the resulting vector have a small norm.
\begin{lemma}
Suppose $\tilde{s} \in [d]$ is a natural number.
With probability $1-\delta' / 4$ over the draw of $m = 2^{13} \cdot \tilde{s} \ln\frac{8d}{\delta'}$ examples, the following holds:
For any subset $\Omega \subset [d]$ of size $\tilde{s}$, we have that
$\| (w_{\avg})_{\Omega} \| \leq 2$, where $(w_{\avg})_{\Omega}$ is obtained by zeroing out all but the elements in $\Omega$.
\label{lem:subset-norm}
\end{lemma}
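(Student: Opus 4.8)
\textbf{Proof proposal for Lemma~\ref{lem:subset-norm}.}

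The plan is to control the $\ell_2$ norm of any $\tilde{s}$-coordinate restriction of $w_{\avg}$ by a uniform concentration argument over coordinates, rather than over all $\binom{d}{\tilde{s}}$ subsets. First I would observe that for any subset $\Omega$ of size $\tilde{s}$, $\| (w_{\avg})_\Omega \|_2^2 = \sum_{j \in \Omega} (w_{\avg}^{(j)})^2 \leq \tilde{s} \cdot \max_{j \in [d]} (w_{\avg}^{(j)})^2 = \tilde{s} \cdot \| w_{\avg} \|_\infty^2$. So it suffices to show that with probability $1-\delta'/4$, $\| w_{\avg} \|_\infty \leq \frac{2}{\sqrt{\tilde{s}}}$, i.e.\ every coordinate of $w_{\avg}$ is small. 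Actually, given the numbers, what we really want is $\|w_{\avg}\|_\infty$ of order $(1-2\eta)/\sqrt{s} \cdot \text{const}$, so that $\tilde{s} \|w_{\avg}\|_\infty^2 = \order(1)$; I would track constants carefully to land at the stated bound $2$.

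Next I would fix a coordinate $j \in [d]$ and analyze $w_{\avg}^{(j)} = \frac1m \sum_{i=1}^m y_i x_i^{(j)}$. Since $D_X$ is isotropic log-concave, the marginal $x^{(j)}$ is a one-dimensional isotropic log-concave random variable, hence (by Lemma~\ref{lem:log-concave-subexp}, as used elsewhere in the paper for $y \inner{u}{x}$) the product $y_i x_i^{(j)}$ is $(32,16)$-subexponential with mean $\EE[y x^{(j)}] = \bar{w}^{(j)}$. Applying the subexponential concentration inequality (Lemma~\ref{lem:subexp-tail}) to the i.i.d.\ sum, with probability $1 - \frac{\delta'}{4d}$,
\begin{equation*}
\abr{ w_{\avg}^{(j)} - \bar{w}^{(j)} } \leq 32\sqrt{\frac{2 \ln \frac{8d}{\delta'}}{m}} + 32 \frac{\ln\frac{8d}{\delta'}}{m}.
\end{equation*}
With $m = 81 \cdot 2^{51} \frac{s}{(1-2\eta)^2} \ln\frac{8d}{\delta'}$, the right-hand side is at most something like $\frac{1-2\eta}{\sqrt{s}} \cdot \frac{1}{9 \cdot 2^{19}}$ (up to the routine arithmetic of plugging in $m$ and simplifying). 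Then a union bound over all $d$ coordinates gives that with probability $1-\delta'/4$, $\|w_{\avg} - \bar w\|_\infty$ is at most that quantity simultaneously for every $j$.

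Finally I would bound $\| \bar{w} \|_\infty$. Since $\bar{w} = \EE[xy]$, for each $j$ we have $|\bar{w}^{(j)}| = |\EE[y x^{(j)}]| \leq \EE[|x^{(j)}|] \leq \sqrt{\EE[(x^{(j)})^2]} = 1$ by isotropy and Cauchy–Schwarz — but this $\order(1)$ bound is too weak; since $\bar w$ is close to a scaling of the $s$-sparse vector $u$, its mass is essentially concentrated on $s$ coordinates, and the per-coordinate magnitudes outside a small set are tiny. The cleanest route is probably to note we do not actually need to reason about $\bar w$ being sparse: it suffices that $\|\bar w\|_\infty \le \|\bar w\|_2 \le \|\EE[xy]\|_2$, and $\|\EE[xy]\|_2 \le \sqrt{\EE\|x\|_2^2 \cdot \EE[y^2]}$ is not bounded either. \emph{The main obstacle} is exactly this: obtaining a strong enough bound on the tail coordinates of $\bar w$ (or directly of $w_{\avg}$) so that $\tilde s \|w_{\avg}\|_\infty^2 \le 4$. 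I expect the intended argument is that $\bar w = \EE[y x]$ with $y$ determined up to noise by $\sign{\inner u x}$, so $\bar w = \EE[(2p(x)-1) x]$ where $p(x) = \Pr(y=1\mid x)$; writing $2p(x)-1 = (1-2\eta(x))\sign{\inner u x}$ and using that $\EE[\sign{\inner u x} x]$ is, by symmetry of the isotropic log-concave marginal in the $u$-direction and near-independence of the orthogonal complement, close to a vector supported on $\supp(u)$. I would formalize this by decomposing $x = (\inner u x) u + x^\perp$ and bounding $\|\EE[\sign{\inner u x}\, x^\perp]\|_\infty$ using isotropic log-concave tail/moment bounds (Lemma~\ref{lem:log-concave-tail}) — coordinates of $x^\perp$ have mean zero and are only weakly correlated with $\sign{\inner u x}$, giving $\|\bar w\|_\infty = \order((1-2\eta)/\sqrt{s})$ or better after the hard-thresholding-motivated choice of constants. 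Once $\|\bar w\|_\infty$ and the deviation $\|w_{\avg}-\bar w\|_\infty$ are both bounded by (a constant times) $(1-2\eta)/\sqrt{\tilde s}$, the triangle inequality yields $\|w_{\avg}\|_\infty \le 2/\sqrt{\tilde s}$ and hence $\|(w_{\avg})_\Omega\|_2 \le \sqrt{\tilde s}\,\|w_{\avg}\|_\infty \le 2$ for every $\Omega$ of size $\tilde s$, completing the proof.
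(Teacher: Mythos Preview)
Your coordinate-wise concentration step, $\|w_{\avg}-\bar w\|_\infty \le \order(1/\sqrt{\tilde s})$ via Lemma~\ref{lem:log-concave-subexp}, Lemma~\ref{lem:subexp-tail}, and a union bound over $d$ coordinates, is correct and is exactly what the paper does. The gap is in how you handle $\bar w$.

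The reduction $\|(w_{\avg})_\Omega\|_2 \le \sqrt{\tilde s}\,\|w_{\avg}\|_\infty$ is too lossy, and your target $\|\bar w\|_\infty = \order((1-2\eta)/\sqrt{\tilde s})$ is in general false. Take $u = e_1$: then $\bar w^{(1)} = \EE[y\, x^{(1)}] \ge (1-2\eta)\,\EE|x^{(1)}| = \Theta(1-2\eta)$, so $\sqrt{\tilde s}\,\|\bar w\|_\infty = \Theta\bigl(\sqrt{s/(1-2\eta)^2}\cdot(1-2\eta)\bigr) = \Theta(\sqrt s)$, which is unbounded. Your fallback argument that $\EE[\sign{\inner u x}\, x^\perp]$ has small coordinates by ``near-independence'' is not justified for general isotropic log-concave $D_X$, which need not be a product measure or symmetric; the only moment information isotropy gives you is $\EE[x^{(j)} x^{(k)}]=\delta_{jk}$, not independence of $x^{(j)}$ from $\sign{\inner u x}$.

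The paper's fix is to control $\bar w$ in $\ell_2$ rather than $\ell_\infty$. Isotropy says the coordinate maps $x\mapsto x^{(j)}$ are orthonormal in $L^2(D_X)$, and $\bar w^{(j)} = \inner{\zeta}{x^{(j)}}_{L^2(D_X)}$ where $\zeta(x)=\EE[y\mid x]\in[-1,1]$; Bessel's inequality then gives $\|\bar w\|_2^2 = \sum_j (\bar w^{(j)})^2 \le \|\zeta\|_{L^2}^2 \le 1$ (this is the paper's Lemma~\ref{lem:corr-bound}). With this in hand one splits
\[
\|(w_{\avg})_\Omega\|_2^2 \;\le\; 2\|(\bar w)_\Omega\|_2^2 + 2\,\tilde s\,\|w_{\avg}-\bar w\|_\infty^2 \;\le\; 2\cdot 1 + 2\cdot \tilde s\cdot \tfrac{1}{\tilde s} \;=\; 4,
\]
which is uniform over $\Omega$. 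So the missing idea is precisely the Bessel bound $\|\bar w\|_2\le 1$; once you have it, you do not need any sparsity or structural claim about $\bar w$ at all.
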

\begin{proof}
We prove the lemma in two steps: first, we show that $\bar{w} = \EE\sbr{xy}$ must have a small $\ell_2$ norm -- specifically, this implies that $\| \bar{w}_{\Omega} \|_2$ is small;
second, we show that $\bar{w}$ and $w_{\avg}$ are close to each other entrywise. Then we combine these two observations
to show that $(w_{\avg})_\Omega$ has a small $\ell_2$ norm. Write the vector $\bar{w} = (\bar{w}^{(1)}, \bar{w}^{(2)}, \dots, \bar{w}^{(d)})$ and the vector $x = (x^{(1)}, x^{(2)}, \dots, x^{(d)})$.

For the first step, by Lemma~\ref{lem:corr-bound} shown below, we have
\begin{equation}
\sum_{i \in \Omega} \(\bar{w}^{(j)}\)^2 \leq \sum_{j=1}^d \(\bar{w}^{(j)}\)^2 = \sum_{j=1}^d (\EE\[ x^{(j)} y \])^2 \leq 1.
\end{equation}
For the second step, we know that as $x^{(j)}$ is drawn from an isotropic log-concave and $y$ take values in $\cbr{\pm 1}$, by Lemma~\ref{lem:log-concave-subexp} in Appendix~\ref{sec:conc}, $x^{(j)} y$ is $(32, 16)$-subexponential.
Therefore, by Lemma~\ref{lem:subexp-tail}, along with union bound, we have that with probability $1-\delta$, for all  coordinates $j$ in $[d]$,
\begin{equation}\label{eq:tmp}
\abs{w_{\avg}^{(j)} - \bar{w}^{(j)}} = \abs{ \hat{\EE}[x^{(j)} y] - \EE[x^{(j)} y]} \leq 32 \sqrt{2 \frac{\ln\frac {2d} \delta}{m}} + 32 \frac{\ln\frac {2d} \delta}{m} \leq \frac{1}{\sqrt{\tilde{s}}},
\end{equation}
where the last inequality is from our setting of $m$.

The above two items together imply that,
\begin{equation}
\sum_{j \in \Omega} \(w_{\avg}^{(j)}\)^2 \leq \sum_{j \in \Omega} 2 \(\bar{w}^{(j)}\)^2 + 2 \(w_{\avg}^{(j)} - \bar{w}^{(j)}\)^2 \leq 2 + 2 \frac{\tilde{s}}{\tilde{s}} \leq 4.
\end{equation}
The lemma is concluded by recognizing that the left hand side is $\|(w_{\avg})_{\Omega} \|^2$ and by setting $\delta = \delta'/4$ in \eqref{eq:tmp}.
\end{proof}

\begin{lemma}\label{lem:corr-bound}
Given a vector $x \in \Xcal$, we write $x = (x^{(1)}, x^{(2)}, \dots, x^{(d)})$. We have
\begin{equation}
\sum_{j=1}^d \big(\EE\big[ x^{(j)} y \big] \big)^2 \leq 1.
\end{equation}
\end{lemma}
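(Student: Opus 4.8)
The plan is to recognize the left-hand side as the squared $\ell_2$-norm of $\bar w \defeq \EE_{(x,y)\sim D}\sbr{xy}$, and to bound $\twonorm{\bar w}$ by a short variational argument that leans on the isotropy half of Assumption~\ref{a:ilc} together with the fact that $y$ takes values in $\{+1,-1\}$. Concretely, $\sum_{j=1}^d (\EE\sbr{x^{(j)}y})^2 = \twonorm{\bar w}^2$, so it suffices to show $\twonorm{\bar w} \leq 1$.

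First I would dispose of the trivial case $\bar w = \bzero$, and otherwise set $v \defeq \bar w/\twonorm{\bar w}$, a unit vector. Then, by linearity of expectation and of the inner product, $\twonorm{\bar w} = \inner{v}{\bar w} = \EE\sbr{y\,\inner{v}{x}}$. Since $\abs{y}=1$ almost surely, $y\inner{v}{x} \leq \abs{\inner{v}{x}}$ pointwise, so $\EE\sbr{y\,\inner{v}{x}} \leq \EE\sbr{\abs{\inner{v}{x}}} \leq \sqrt{\EE\sbr{\inner{v}{x}^2}}$, where the last step is Jensen's inequality (equivalently Cauchy--Schwarz).

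Finally I would invoke isotropy: $\EE_{x\sim D_X}\sbr{\inner{v}{x}^2} = v\trans\,\EE_{x\sim D_X}\sbr{xx\trans}\,v = v\trans I_{d\times d}\, v = \twonorm{v}^2 = 1$ by Assumption~\ref{a:ilc}. Chaining the displayed inequalities yields $\twonorm{\bar w} \leq 1$, hence $\sum_{j=1}^d (\EE\sbr{x^{(j)}y})^2 = \twonorm{\bar w}^2 \leq 1$, as claimed.

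There is essentially no obstacle in this argument; the only point worth flagging is that one must use the full isotropy condition $\EE\sbr{xx\trans} = I$ rather than trying to control each coordinate $\EE\sbr{x^{(j)}y}$ in isolation, since bounding $\abs{\EE\sbr{x^{(j)}y}} \leq \sqrt{\EE\sbr{(x^{(j)})^2}} = 1$ coordinate-by-coordinate would only give the far weaker estimate $\leq d$.
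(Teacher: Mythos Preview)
Your argument is correct. It differs from the paper's proof, which introduces $\zeta(x)\defeq\EE[y\mid x]$, observes that the coordinate functions $x^{(1)},\dots,x^{(d)}$ form an orthonormal family in $L^2(D_X)$ by isotropy, and then applies Bessel's inequality to get $\sum_j \inner{\zeta}{x^{(j)}}_{L^2(D_X)}^2 \leq \inner{\zeta}{\zeta}_{L^2(D_X)} \leq 1$. Your variational route---picking $v=\bar w/\twonorm{\bar w}$ and bounding $\EE[y\inner{v}{x}]\leq \sqrt{\EE[\inner{v}{x}^2]}=1$---is more elementary in that it avoids both the conditional expectation and the explicit $L^2$ framework, while still using exactly the same two ingredients (isotropy and $\abs{y}\leq 1$). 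The paper's version, on the other hand, makes transparent \emph{why} isotropy is the right hypothesis: it is precisely what makes the coordinate projections orthonormal, so Bessel immediately gives the sum-of-squares bound. Both proofs are short and neither generalizes meaningfully beyond the other.
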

\begin{proof}
Denote by function $\zeta(x) \defeq \EE[y | x]$. As $y \in \cbr{\pm 1}$, we have that for every $x$, $\zeta(x) \in [-1, +1]$.
In this notation, by the tower property of expectation, $\EE\[ x^{(j)} y \] =
\EE \sbr{x^{(j)} \zeta(x)}$.

For $f, g$ in $L^2(D_X)$, we denote by $\inner{f}{g}_{L^2(D_X)} = \EE_{x \sim D_X}\sbr{f(x) g(x)}$ their inner product in $L^2(D_X)$.
As $D_X$ is isotropic,
\begin{equation*}
\inner{x^{(j)}}{x^{(j)}}_{L^2(D_X)} =
\EE_{x \sim D_X}\sbr{x^{(j)} x^{(j)}} =
\begin{cases}
1,& i = j, \\
0,& i \neq j.
\end{cases}
\end{equation*}
Therefore, $x^{(1)}, \ldots, x^{(d)}$ is a set of orthonormal functions in $L^2(D_X)$. This implies
\begin{equation}
\sum_{j=1}^d \del{\EE\[ x^{(j)} \zeta(x) \]}^2 = \sum_{j=1}^d \inner{\zeta}{x^{(j)}}_{L^2(D_X)}^2 \leq \inner{\zeta}{\zeta}_{L^2(D_X)} \leq 1.
\end{equation}
where the equality is from the definition of $\inner{f}{g}_{L^2(D_X)}$, the first inequality is from Bessel's inequality, and the second inequality uses the fact that $\zeta(x)^2 \in [0, 1]$ and $D_X$ is a probability measure. This completes the proof.
\end{proof}

\subsection{Obtaining a vector that has a small angle with $u$}

One technical challenge in directly applying the same analysis of Theorem~\ref{thm:refine} to the initialization phase is that, some of the $w_t$'s obtained may have large obtuse angles with $u$ (e.g. $\theta(w_t,u)$ is close to $\pi$), making their corresponding $f_{u,b}(w_t)$ value small. To prevent this undesirable behavior, Algorithm~\ref{alg:init} add a linear constraint $\inner{w}{w^\sharp} \geq \frac{(1-2\eta)}{9 \cdot 2^{19}}$ on the set $\Kcal$ when applying \refine, which ensures that all vectors in $\Kcal$ will have angle with $u$ bounded away from $\pi$. The lemma below formalizes this intuition.

Recall that Algorithm~\ref{alg:init} sets $\Kcal = \cbr{w: \| w \|_2 \leq 1, \| w \|_1 \leq \sqrt{s}, \inner{w}{w^\sharp} \geq \frac{(1-2\eta)}{9 \cdot 2^{19}}}$.

\begin{lemma}\label{lem:angle-not-flat}
For any two vectors $w_1, w_2 \in \Kcal$, the angle between them, $\theta(w_1, w_2)$, is such that
\begin{equation*}
\theta(w_1, w_2) \leq \pi - \frac{\del{1-2\eta}}{9 \cdot 2^{19}}.
\end{equation*}
\end{lemma}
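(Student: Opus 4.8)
The plan is to exploit the definition of $\Kcal$, which forces every $w \in \Kcal$ to have a substantial positive inner product with the fixed unit direction $w^\sharp$. First I would normalize: since both $\theta(w_1, w_2)$ and the constraint only care about directions, write $\hat{w}_1, \hat{w}_2, w^\sharp$ for the normalized vectors (all unit vectors). From the defining inequality $\inner{w_i}{w^\sharp} \geq \frac{(1-2\eta)}{9 \cdot 2^{19}}$ and the fact that $\| w_i \|_2 \leq 1$, I get $\inner{\hat{w}_i}{w^\sharp} = \frac{\inner{w_i}{w^\sharp}}{\| w_i \|_2} \geq \inner{w_i}{w^\sharp} \geq \frac{(1-2\eta)}{9 \cdot 2^{19}}$ for $i = 1, 2$. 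So both $\hat{w}_1$ and $\hat{w}_2$ lie in the spherical cap around $w^\sharp$ of angular radius $\theta_0 := \arccos\!\big(\tfrac{(1-2\eta)}{9 \cdot 2^{19}}\big)$.

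The key geometric step is the triangle inequality for angles on the sphere: $\theta(\hat{w}_1, \hat{w}_2) \leq \theta(\hat{w}_1, w^\sharp) + \theta(w^\sharp, \hat{w}_2) \leq 2\theta_0$. It then remains to show $2\theta_0 \leq \pi - \frac{(1-2\eta)}{9 \cdot 2^{19}}$, i.e. $\theta_0 \leq \frac{\pi}{2} - \frac{(1-2\eta)}{2 \cdot 9 \cdot 2^{19}}$, i.e. $\arccos\!\big(\tfrac{(1-2\eta)}{9 \cdot 2^{19}}\big) \leq \frac{\pi}{2} - \frac{(1-2\eta)}{9 \cdot 2^{20}}$. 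Taking cosines of both sides (cosine is decreasing on $[0,\pi]$), this is equivalent to $\frac{(1-2\eta)}{9 \cdot 2^{19}} \geq \cos\!\big(\frac{\pi}{2} - \frac{(1-2\eta)}{9 \cdot 2^{20}}\big) = \sin\!\big(\frac{(1-2\eta)}{9 \cdot 2^{20}}\big)$. Since $\sin t \leq t$ for $t \geq 0$, we have $\sin\!\big(\frac{(1-2\eta)}{9 \cdot 2^{20}}\big) \leq \frac{(1-2\eta)}{9 \cdot 2^{20}} \leq \frac{(1-2\eta)}{9 \cdot 2^{19}}$, which closes the argument.

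The only genuinely substantive point is verifying the angular triangle inequality $\theta(a,b) \leq \theta(a,c) + \theta(c,b)$ for unit vectors; this is standard (it follows from the spherical law of cosines, or from the fact that geodesic distance on the sphere is a metric), and I would either cite it or dispatch it in one line. I do not anticipate a real obstacle here — the inequalities are all slack by design, so the main care is bookkeeping the constants correctly (in particular the factor-of-two loss from going $2^{19} \to 2^{20}$, which the $\sin t \leq t$ bound absorbs comfortably).
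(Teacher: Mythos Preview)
Your argument is correct, but it takes a different route from the paper. The paper does not invoke the spherical triangle inequality; instead, it lower bounds $\cos\theta(w_1,w_2)$ directly via an orthogonal decomposition of $\hat w_1,\hat w_2$ along $w^\sharp$: writing $\hat w_i = \hat w_{i,\parallel} + \hat w_{i,\perp}$, one has $\inner{\hat w_{1,\parallel}}{\hat w_{2,\parallel}} \geq c^2$ (with $c = \frac{1-2\eta}{9\cdot 2^{19}}$) and $\inner{\hat w_{1,\perp}}{\hat w_{2,\perp}} \geq -1$, so $\cos\theta(w_1,w_2) \geq -1 + c^2$. It then applies the quadratic estimate $\cos\theta \leq -1 + \tfrac12(\theta-\pi)^2$ from Lemma~\ref{lem:cos-quad} to conclude $\pi - \theta(w_1,w_2) \geq \sqrt{2}\,c \geq c$.

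Your approach is arguably more geometric and transparent: once both unit vectors lie in the cap of angular radius $\theta_0 = \arccos c$ around $w^\sharp$, the metric property of angular distance immediately gives $\theta(w_1,w_2) \leq 2\theta_0$, and the remaining inequality $2\theta_0 \leq \pi - c$ reduces to $\sin(c/2) \leq c$, which is immediate. The paper's approach trades the (standard but external) spherical triangle inequality for an explicit two-line algebraic computation plus one of its own stock estimates on $\cos$; it also picks up a harmless extra $\sqrt{2}$ of slack. Either argument is fine and the constants line up in both.
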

\begin{proof}
First, by the definition of $\Kcal$, for $w_1, w_2$ in $\Kcal$, we have
$\inner{w_i}{w^\sharp} \geq \frac{\del{1-2\eta}}{9 \cdot 2^{19}}$
for $i = 1,2$.
In addition, by the definition of $\Kcal$, both $w_1$ and $w_2$ have norms at most 1. This implies that their normalized version, $\hat{w}_1$ and $\hat{w}_2$, satisfies,
$\inner{\hat{w}_i}{w^\sharp} \geq \frac{\del{1-2\eta}}{9 \cdot 2^{19}}$
for $i = 1,2$.

For $i = 1,2$, let $\hat{w}_i = \hat{w}_{i,\parallel} + \hat{w}_{i,\perp}$ be an orthogonal decomposition, where $\hat{w}_{i,\parallel}$ (resp. $\hat{w}_{i,\perp}$) denotes the component of $\hat{w}_i$ parallel to (resp. orthogonal to) $w^\sharp$.
As $\| \hat{w}_i \| \leq 1$,  we have that $\| \hat{w}_{i,\perp} \| \leq 1$, implying that $\abs{\inner{\hat{w}_{1,\perp}}{\hat{w}_{2,\perp}}} \leq \| \hat{w}_{1,\perp} \| \cdot \| \hat{w}_{2,\perp} \| \leq 1$.
In addition,
$\inner{\hat{w}_{1,\parallel}}{\hat{w}_{2,\parallel}} = \inner{\hat{w}_1}{w^\sharp} \cdot \inner{\hat{w}_2}{w^\sharp} \geq \del{\frac{\del{1-2\eta}}{9 \cdot 2^{19}}}^2$.
Therefore,
\begin{equation*}
\cos\theta(w_1, w_2) = \inner{\hat{w}_1}{\hat{w}_2} = \inner{\hat{w}_{1,\parallel}}{\hat{w}_{2,\parallel}} + \inner{\hat{w}_{1,\perp}}{\hat{w}_{2,\perp}} \geq -1 + \del{\frac{\del{1-2\eta}}{9 \cdot 2^{19}}}^2.
\end{equation*}
By item~\ref{item:pi-ub} of Lemma~\ref{lem:cos-quad}, we get that
\begin{equation*}
-1 + \frac{1}{2} \del{\theta(w_1, w_2) - \pi}^2 \geq -1 + \del{\frac{\del{1-2\eta}}{9 \cdot 2^{19}}}^2,
\end{equation*}
The above inequality, in combination with the basic fact that $\theta(w_1, w_2) \in [0,\pi]$, implies that $\theta(w_1, w_2) \leq \pi - \frac{\del{1-2\eta}}{9 \cdot 2^{19}}$.
\end{proof}




The following lemma is the main result of this subsection, which shows that by using the new constraint set $\Kcal$ in Algorithm~\ref{alg:init}, \refine obtains a vector with constant angle with $u$ with $\tilde{\order}\del{\frac{s}{(1-2\eta)^4}}$ labels.

\begin{lemma}\label{lem:halfspace-init}
Suppose we are given a unit vector $w^\sharp$ such that $\inner{w^\sharp}{u} \geq \frac{(1-2\eta)}{9 \cdot 2^{19}}$.
If Algorithm~\ref{alg:refine} is run with initialization $w_1 = 0$, bandwidth $b = \Theta\del{(1-2\eta)^2}$, step size $\alpha = \Theta\del{(1-2\eta)^2 /\del{\ln\frac{d}{\delta' (1-2\eta)}}^2}$, constraint set $\Kcal = \cbr{w: \| w \|_2 \leq 1, \inner{w}{w^\sharp} \geq \frac{(1-2\eta)}{9 \cdot 2^{19}}}$, regularizer $\Phi(w) = \frac{1}{2(p-1)}\| w\|_p^2$, number of iterations $T = \order\del{\frac{s }{(1-2\eta)^4}\del{\ln\frac{d}{\delta' (1-2\eta)}}^3}$, then with probability $1-\frac{\delta'}{2}$, it returns a vector $\tilde{v}_0$ such that  $\theta(\tilde{v}_0, u) \leq \frac \pi {32}$.
\end{lemma}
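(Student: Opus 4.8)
The plan is to mirror the proof of Theorem~\ref{thm:refine}, the essential new wrinkle being that the iterates $w_t$ of Algorithm~\ref{alg:refine} are no longer confined to a small ball around $u$ and may form an obtuse angle with it, so we must use the halfspace constraint built into $\Kcal$ to keep them from collapsing toward $-u$. First I would invoke the regret-to-average machinery of Lemma~\ref{lem:omd} (in the minor variant where the regularizer $\Phi(w)=\frac{1}{2(p-1)}\|w\|_p^2$ is centered at the origin rather than at the first iterate; the only change is that the Bregman term is now bounded by $\Phi(u)=\frac{\|u\|_p^2}{2(p-1)}\le\frac{\|u\|_1^2\ln(8d)}{2}\le\frac{s\ln(8d)}{2}$, using that $u$ is an $s$-sparse unit vector and hence $\|u\|_1\le\sqrt{s}$, i.e.\ $r_1=\sqrt{s}$). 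We take $r_2=2$, since $\Kcal$ is contained in the unit ball and so $\|w-u\|_2\le2$ for all $w\in\Kcal$. The remaining hypotheses hold: $u\in\Kcal$ exactly because $\inner{u}{w^\sharp}\ge\frac{(1-2\eta)}{9\cdot 2^{19}}$ is the premise of the lemma; the first OMD iterate $w_1=\arg\min_{w\in\Kcal}\Phi(w)$ lies in $\Kcal$; and $b=\Theta((1-2\eta)^2)\le\frac{\pi}{72}$ once its constant is fixed small. This delivers, with probability $1-\frac{\delta'}{2}$, a bound on $\frac{1}{T}\sum_{t=1}^{T}f_{u,b}(w_t)$ in terms of the usual four terms.

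Next I would plug in the stated scalings $b=\Theta((1-2\eta)^2)$, $\alpha=\Theta((1-2\eta)^2/L^2)$ with $L=\ln\frac{d}{\delta'(1-2\eta)}$, and $T=\Theta(\frac{s}{(1-2\eta)^4}L^3)$, and verify that each of the four terms is $O(1-2\eta)$: the first is $\Theta(\alpha L^2/(1-2\eta))=\Theta(1-2\eta)$; the second is $\Theta(s\ln d/(\alpha(1-2\eta)T))=\Theta((1-2\eta)\ln d/L)=O(1-2\eta)$; the third is $\Theta(b/(1-2\eta))=\Theta(1-2\eta)$; and the fourth, with $b+r_2=O(1)$, is $O((1-2\eta)/(\sqrt{s}\,L))=O(1-2\eta)$. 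Tuning the hidden constants (take the constant in $T$ large to shrink terms one, two, and four, balance $\alpha$ between terms one and two, take the constant in $b$ small to shrink term three) makes $\frac{1}{T}\sum_{t=1}^{T}f_{u,b}(w_t)\le c_0(1-2\eta)$ for an arbitrarily small absolute constant $c_0$.

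The step that departs from Theorem~\ref{thm:refine} is the following. Since each $w_t$ and also $u$ lie in $\Kcal$, Lemma~\ref{lem:angle-not-flat} gives $\theta(w_t,u)\le\pi-\frac{(1-2\eta)}{9\cdot 2^{19}}$ for every $t$. Fix a small absolute constant $\theta_\star<\frac{\pi}{32}$ (say $\theta_\star=\frac{\pi}{100}$); since $36b\ll\theta_\star$, any iterate with $\theta(w_t,u)>\theta_\star$ lies in the non-degenerate band $[36b,\pi-36b]$, so the obtuse-regime lower bound on $f_{u,b}$ from Lemma~\ref{lem:f-refined} — the counterpart of item~\ref{item:acute} valid for angles up to $\pi-36b$, which lower-bounds $f_{u,b}(w)$ by a constant times $\min(\theta(w,u),\pi-\theta(w,u))$ — gives $f_{u,b}(w_t)\ge c_1\min(\theta_\star,\frac{(1-2\eta)}{9\cdot 2^{19}})=c_2(1-2\eta)$. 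Markov's inequality combined with the average bound then shows that the fraction of $t$ with $\theta(w_t,u)>\theta_\star$ is at most $c_0/c_2$, which we have arranged to be below any prescribed threshold, say $\frac{1}{600}$. From here the argument closes as in Theorem~\ref{thm:refine}: for the $\le\frac{1}{600}$ fraction of bad iterates use $\cos\theta(w_t,u)\ge-1$, for the rest use $\cos\theta(w_t,u)\ge\cos\theta_\star$, invoke Lemma~\ref{lem:cos-quad} for the quadratic estimates, and conclude $\frac{1}{T}\sum_{t=1}^{T}\cos\theta(w_t,u)\ge\cos\frac{\pi}{32}$. Applying the online-to-batch bound (Lemma~\ref{lem:avg-angle}) to $\tilde{v}_0=\bar{w}/\|\bar{w}\|$ gives $\cos\theta(\tilde{v}_0,u)\ge\frac{1}{T}\sum_{t=1}^{T}\cos\theta(w_t,u)\ge\cos\frac{\pi}{32}$, i.e.\ $\theta(\tilde{v}_0,u)\le\frac{\pi}{32}$, and the whole argument succeeds with probability $1-\frac{\delta'}{2}$, the only failure event coming from the single invocation of Lemma~\ref{lem:omd}.

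I expect the main obstacle to be exactly the handling of nearly antipodal iterates. The function $f_{u,b}$ degenerates near $\theta(w,u)=\pi$ (roughly $f_{u,b}(-u)=O(b)$), so a small value of $\frac{1}{T}\sum_{t}f_{u,b}(w_t)$ alone does not preclude iterates pointing almost opposite to $u$ — and a single such iterate would wreck the averaging step. The role of the linear constraint $\inner{w}{w^\sharp}\ge\frac{(1-2\eta)}{9\cdot 2^{19}}$ in $\Kcal$, feasible precisely because the first stage guarantees $\inner{w^\sharp}{u}=\Omega(1-2\eta)$, is to force (through Lemma~\ref{lem:angle-not-flat}) every iterate to remain at angle at most $\pi-\Omega(1-2\eta)$ from $u$, so that the non-degenerate lower bound on $f_{u,b}$ always applies and a bad iterate carries $\Omega(1-2\eta)$ worth of $f_{u,b}$-mass. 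The quantitatively delicate point is that both this per-bad-iterate cost and the average bound scale as $\Theta(1-2\eta)$, so the Markov step produces only a small constant fraction of bad iterates once one checks that the scalings $b=\Theta((1-2\eta)^2)$, $\alpha=\tilde{\Theta}((1-2\eta)^2)$, $T=\tilde{\Theta}(s/(1-2\eta)^4)$ drive the average all the way down to $c_0(1-2\eta)$ with $c_0$ tunable; this is exactly where the extra factor of $(1-2\eta)^{-2}$ over the refinement phase — and hence the $(1-2\eta)^{-4}$ label complexity of the initialization — enters.
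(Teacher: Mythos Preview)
Your proposal is correct and follows essentially the same route as the paper: verify the premises of Lemma~\ref{lem:omd} with $r_2=2$, tune $b,\alpha,T$ so that the average of $f_{u,b}(w_t)$ is a small multiple of $(1-2\eta)$, use Lemma~\ref{lem:angle-not-flat} together with both items of Lemma~\ref{lem:f-refined} to show that any iterate with angle bounded away from $0$ contributes $\Omega(1-2\eta)$ to $f_{u,b}$, apply Markov, and finish with Lemma~\ref{lem:avg-angle}. Your handling of the Bregman term via $D_{\Phi_0}(u,w_1)\le\Phi_0(u)\le\frac{s\ln(8d)}{2}$ (taking $w_1$ to be the Bregman projection of $0$ onto $\Kcal$) is a clean alternative to the paper's device of adding an $\ell_1$ constraint $\|w\|_1\le\sqrt s$ to $\Kcal$ and bounding $\|w_1-u\|_1\le 2\sqrt s$.

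Two small slips to tighten. First, increasing the constant in $T$ does not shrink term one of Lemma~\ref{lem:omd}; that term depends on $T$ only through $\ln T$. You shrink term one by taking the constant in $\alpha$ small, then take $T$ large to shrink terms two and four. Second, your illustrative constants $\theta_\star=\pi/100$ and bad-fraction $\rho=1/600$ do not quite close the averaging inequality: one gets $(1-\rho)\cos\theta_\star-\rho\approx 0.9962$, whereas $\cos(\pi/32)\approx 0.9952$ but the quadratic bound you invoke needs the stronger $1-\tfrac{1}{5}(\pi/32)^2\approx 0.9981$. Choosing, e.g., $\theta_\star=\pi/128$ and $\rho\le 1/2560$ (as the paper does) fixes this; your text already notes the threshold is tunable, so only the numerical example needs adjusting.
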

\begin{proof}
We first check the premises of Lemma~\ref{lem:omd} with the chosen $w_1 \in \Kcal$, constraint set
\begin{equation*}
\Kcal = \cbr{w: \| w \|_2 \leq 1, \| w \|_1 \leq \sqrt{s}, \inner{w}{w^\sharp} \geq \frac{(1-2\eta)}{9 \cdot 2^{19}}},
\end{equation*}
$r_1 = 2\sqrt{s}$, $r_2 = 2$:
\begin{enumerate}
\item Observe that $\| u \|_1 \leq \sqrt{\| u \|_0} \| u \|_2 \leq \sqrt{s}$; in addition, by the definition
of $\Kcal$, $\| w_1 \|_1 \leq \sqrt{s}$. Therefore, $\| w_1 - u \|_1 \leq \| u \|_1 + \| w_1 \|_1 \leq 2\sqrt{s} = r_1$;
\item $w_1$ is in $\Kcal$ by definition; for $u$, we have $\| u \|_2 = 1$ by definition; $\| u \|_1 \leq \sqrt{s}$ by the argument above; $\inner{u}{w^\sharp} \geq \frac{(1-2\eta)}{9 \cdot 2^{19}}$. Therefore, $u$ is also in $\Kcal$.
\item For every $w$ in $\Kcal$, as $\| w \|_2 \leq 1$, we have $\| w - u \| \leq \| w \|_2 + \| u \|_2 = r_2$; in addition, by the definition of $\Kcal$, every $w$ in $\Kcal$ satisfies that $\| w \| \leq 1$.
\end{enumerate}

Therefore, applying Lemma~\ref{lem:omd}, we have that with probability $1-\frac{\delta'}{2}$,
\begin{equation*}
\frac{1}{T} \sum_{t=1}^T f_{u,b}(w_t) \leq c \cdot \del{\frac{\alpha \del{\ln \frac{2 T d}{\delta' b}}^2}{(1-2\eta)} + \frac{4 s \ln d}{\alpha (1-2\eta) T} + \frac{b}{(1-2\eta)} + \frac{(b + 2)}{(1-2\eta)} \del{\sqrt{\frac{\ln\frac{1}{\delta'}}T} + \frac{\ln\frac1{\delta'}}{T}}}.
\end{equation*}

Specifically, with the choice of $\alpha = \Theta\del{(1-2\eta)^2 /\del{\ln\frac{d}{\delta' (1-2\eta)}}^2}$, $b = \order\del{(1-2\eta)^2}$, $T = \order\del{\frac{s }{(1-2\eta)^4}\del{\ln\frac{d}{\delta' (1-2\eta)}}^3}$, we have that all four terms $\nicefrac{\alpha \del{\ln \frac{2 T d}{\delta' b}}^2}{(1-2\eta)}$, $\nicefrac{4 s \ln d}{\alpha (1-2\eta) T}$, $\nicefrac{b}{(1-2\eta)}$, $\nicefrac{(b + 2) \cdot \del{\sqrt{\frac{\ln\frac{1}{\delta'}}T} + \frac{\ln\frac1{\delta'}}{T}}}{(1-2\eta)}$ are all at most $\frac{(1-2\eta)}{c \cdot 5 \cdot 3^6 \cdot 2^{51}}$, implying that
\begin{equation*}
\frac{1}{T} \sum_{t=1}^T f_{u,b}(w_t) \leq 4c \cdot \frac{(1-2\eta)}{c \cdot 5 \cdot 3^6 \cdot 2^{51}} \leq \frac{(1-2\eta)}{5 \cdot 3^6 \cdot 2^{49}}.
\end{equation*}

Define $A = \cbr{t \in [T]: f_{u,b}(w_t) \geq \frac{(1-2\eta)}{3^6 \cdot 2^{40}} }$.
As $\frac{1}{T} \sum_{t=1}^T f_{u,b}(w_t) \geq \frac{(1-2\eta)}{3^6 \cdot 2^{40}} \cdot \frac{1}{T} \sum_{t=1}^T \ind{t \in A} = \frac{(1-2\eta)}{3^6 \cdot 2^{40}} \frac{\abs{A}}{T}$, we have $\frac{\abs{A}}{T} \leq \frac{1}{5 \cdot 2^9}$. Therefore, $\frac{\abs{\bar{A}}}{T} \geq 1 - \frac{1}{5 \cdot 2^9}$, and for every $t$ in $\bar{A}$, $w_t$ is such that $f_{u,b}(w_t) < \frac{(1-2\eta)}{3^6 \cdot 2^{40}}$.

We establish the following claim that characterizes the iterates $w_t$ where $t \in \bar{A}$.
\begin{claim}\label{claim:f-to-angle}
If $w \in \Kcal$ and $f_{u,b}(w) < \frac{(1-2\eta)}{3^6 \cdot 2^{40}}$, then $\theta(w, u) < \frac{(1-2\eta)}{9 \cdot 2^{19}}$.
\end{claim}
\begin{proof}
First, we show that it is impossible for $\theta(w, u) \geq \frac{\pi}{2}$.
By Lemma~\ref{lem:angle-not-flat}, for all $w$ in $\Kcal$, we have that
$\theta(w, u) \leq \pi - \frac{(1-2\eta)}{9 \cdot 2^{19}}$. By the choice of $b$, we know that $\theta(w, u) \leq \pi - 36 b$. By item~\ref{item:obtuse} of Lemma~\ref{lem:f-refined}, we have
\begin{equation}
f_{u,b}(w) \geq \frac{\pi - \theta(w, u)}{3^4 \cdot 2^{21}} \geq \frac{(1-2\eta)}{3^6 \cdot 2^{40}},
\end{equation}
which contradicts with the premise that $f_{u,b}(w) < \frac{(1-2\eta)}{3^6 \cdot 2^{40}}$.

Therefore, $\theta(w, u) \in [0, \frac \pi 2]$. We now conduct a case analysis.
\begin{enumerate}
\item If $\theta(w, u) \leq 36 b$, then by the definition of $b$, we automatically have $\theta(w, u) < \frac{(1-2\eta)}{9 \cdot 2^{19}}$.
\item Otherwise, $\theta(w, u) \in [36b, \frac \pi 2]$. In this case, by item~\ref{item:acute} of Lemma~\ref{lem:f-refined}, we have
\begin{equation*}
f_{u,b}(w) \geq \frac{\theta(w, u)}{3^4 \cdot 2^{21}}.
\end{equation*}
This inequality, in conjunction with the assumption that $f_{u,b}(w) < \frac{(1-2\eta)}{3^6 \cdot 2^{40}}$, implies that $\theta(w, u) \leq \frac{(1-2\eta)}{9 \cdot 2^{19}}$.
\end{enumerate}
In summary, in both cases, we have $\theta(w, u) \leq \frac{(1-2\eta)}{9 \cdot 2^{19}}$. This completes the proof.
\end{proof}

Claim~\ref{claim:f-to-angle} above implies that, for all $t$ in $\bar{A}$, $\theta(w_t, u) \leq \frac{(1-2\eta)}{9 \cdot 2^{19}} \leq \frac{\pi}{128}$. In addition, $\frac{\abs{\bar{A}}}{T} \geq 1 - \frac{1}{5 \cdot 2^9}$. Combining the above facts with the simple fact that $\cos\theta(w_t, w) \geq -1$ for all $t$ in $A$, we have:
\begin{eqnarray*}
 \frac{1}{T} \sum_{t=1}^T \cos \theta(w_t, u)  &\geq& \cos\frac{\pi}{128} \cdot \del{1 - \frac{1}{5 \cdot 2^9}} - 1 \cdot \del{\frac{1}{5 \cdot 2^9}}  \\
&\geq& \del{1 - \frac12 \del{\frac{\pi}{128}}^2} \cdot (1 - \frac{1}{5 \cdot 2^9}) - \frac{1}{5 \cdot 2^9} \\
&\geq& 1 - \frac15 \del{\frac{\pi}{32}}^2 \\
&\geq& \cos\frac{\pi}{32}
\end{eqnarray*}
where the first inequality is from the above conditions on $A$ and $\bar{A}$ we obtained; the second inequality uses item~\ref{item:0-lb} of Lemma~\ref{lem:cos-quad}; the third inequality is by algebra; the last inequality uses item~\ref{item:0-ub} of Lemma~\ref{lem:cos-quad}.

Combining the above result with Lemma~\ref{lem:avg-angle}, we have the following for $\tilde{v}_0 = \tilde{w}$:
\begin{equation*}
\cos \theta(\tilde{v}_0, u) = \cos \theta(\tilde{w}, u) \geq \frac1T\sum_{t=1}^T \cos \theta(w_t, u) \geq \cos \frac{\pi}{32}.
\end{equation*}
This implies that $\theta(\tilde{v}_0, u) \leq \frac{\pi}{32}$.
\end{proof}

Theorem~\ref{thm:init} is now a direct consequence of Lemmas~\ref{lem:w-sharp} and~\ref{lem:halfspace-init}.
\begin{proof}[Proof of Theorem~\ref{thm:init}]
First, by Lemma~\ref{lem:w-sharp}, we have that there exists an event $E_1$ that happens with probability $1-\delta'/2$, in which the unit vector $w^\sharp$ obtained is such that $\inner{w^\sharp}{u} \geq \frac{(1-2\eta)}{9 \cdot 2^{19}}$.
In addition, Lemma~\ref{lem:halfspace-init} states that there exists an event $E_2$ with probability $1-\delta'/2$, in which if $\inner{w^\sharp}{u} \geq \frac{(1-2\eta)}{9 \cdot 2^{19}}$, it returns $\tilde{v}_0$ such that $\theta(\tilde{v}_0, u) \leq \frac{\pi}{32}$. The theorem follows from considering the event $E_1 \cap E_2$, which happens with probability $1-\delta'$, in which $\tilde{v}_0$, the final output of Algorithm~\ref{alg:init}, satisfies that $\theta(\tilde{v}_0, u) \leq \frac{\pi}{32}$.
The total number of label queries made by Algorithm~\ref{alg:init} is:
\begin{equation*}
n = \order\del{\frac{s \ln d}{(1-2\eta)^2} + \frac{s}{(1-2\eta)^4}\del{\ln\frac{d}{\delta' (1-2\eta)}}^3}
= \order\del{\frac{s}{(1-2\eta)^4}\del{\ln\frac{d}{\delta' (1-2\eta)}}^3}.
\end{equation*}
\end{proof}

\section{The Structure of Function $f_{u,b}$}
\label{sec:f}
Recall that
\begin{equation*}
f_{u,b}(w) = \EE_{(x,y) \sim D_{\hat{w},b}} \sbr{\abs{u \cdot x} \ind{\sign{\inner{w}{x}} \neq \sign{\inner{u}{x}}}}.
\end{equation*}
Note that for all $w$, $f_{u,b}(w) \geq 0$. In this section, we show a few key properties of $f_{u,b}$, in that if $w$ has an acute angle with $u$, $f_{u,b}(w)$ behaves similar to the $\theta(w,u)$; if $w$ has an obtuse angle with $u$, $f_{u,b}(w)$ behaves similar to $\pi - \theta(w,u)$.

\begin{lemma}\label{lem:f-refined}
Suppose $w$ and $u$ are two unit vectors; in addition, suppose $b \leq \frac{\pi}{72}$. We have:
\begin{enumerate}
\item If $\theta(u, w) \in [36 b, \frac \pi 2]$, then $f_{u,b}(w) \geq \frac{ \theta(w,u) }{3^4 \cdot 2^{21}}$.
\label{item:acute}
\item If $\theta(u, w) \in [\frac\pi 2, \pi - 36 b]$, then $f_{u,b}(w) \geq \frac{ \pi - \theta(w,u) }{3^4 \cdot 2^{21}}$.
\label{item:obtuse}
\end{enumerate}
\end{lemma}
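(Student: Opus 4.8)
The plan is to reduce the statement to two dimensions, rewrite $f_{u,b}(w)$ as a ratio of a disagreement-band integral to the band probability, and then lower bound it by exhibiting an explicit sub-region of the disagreement-band region on which $\abs{u\cdot x}$ is of order $\sin\theta(u,w)$ and whose probability mass is of order $b$.

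First I would note that $\abs{u\cdot x}$, the disagreement indicator $\ind{\sign{\inner{w}{x}}\neq\sign{\inner{u}{x}}}$, and the event $x\in B_{w,b}$ all depend on $x$ only through its projection onto $P=\mathrm{span}(u,w)$, which is genuinely two-dimensional since $\theta(u,w)\in(0,\pi)$. Because the marginal of an isotropic log-concave distribution onto a subspace is again isotropic log-concave, I may assume $d=2$. Writing $\theta=\theta(u,w)$, we then have $f_{u,b}(w)=\EE_{x\sim D_X}\sbr{\abs{u\cdot x}\ind{x\in D}}\big/\PP_{x\sim D_X}\del{x\in B_{w,b}}$, where $D$ is the disagreement region intersected with the band. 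The denominator is easy: $\inner{w}{x}$ is a one-dimensional isotropic log-concave random variable whose density is bounded above by a universal constant, so $\PP\del{\abs{\inner{w}{x}}\le b}\le Cb$.

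For the numerator I would use the orthonormal basis $(w,w^\perp)$ of $P$ with $u=\cos\theta\,w+\sin\theta\,w^\perp$, $\sin\theta>0$, and write $x=\xi w+\zeta w^\perp$, so that $x\in B_{w,b}$ iff $\abs{\xi}\le b$ and $x$ disagrees iff $\xi(\xi\cos\theta+\zeta\sin\theta)<0$. On the branch $\xi\in(0,b]$ the disagreement condition is $\zeta<-\xi\cot\theta$; substituting $\zeta=-\xi\cot\theta-t$ with $t>0$ yields the clean identity $u\cdot x=-t\sin\theta$, hence $\abs{u\cdot x}=t\sin\theta$, and the change of variables $(\xi,t)\mapsto x$ has unit Jacobian. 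I then take $S=\cbr{\xi\in(0,b],\ t\in[\tfrac14,\tfrac12]}$. Every point of $S$ lies in $D$; on $S$ we have $\abs{u\cdot x}=t\sin\theta\ge\tfrac14\sin\theta$; and — this is exactly where $\theta\ge 36b$ enters — since $\xi\cot\theta\le b\cot(36b)\le\tfrac1{36}$, every point of $S$ satisfies $\twonorm{x}^2=\xi^2+\zeta^2\le b^2+\del{\tfrac1{36}+\tfrac12}^2$, a universal constant, so the isotropic log-concave density lower bound (cf.\ Lemma~\ref{lem:ilc-density-lb} in dimension $2$) gives that $D_X$ has density at least a universal $c_0>0$ on $S$. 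Hence $\PP(x\in S)\ge c_0\abs{S}=c_0 b/4$ and $\EE\sbr{\abs{u\cdot x}\ind{x\in D}}\ge\tfrac14\sin\theta\cdot c_0 b/4$; combining this with the denominator bound and with $\sin\theta\ge\tfrac2\pi\theta$ (valid for $\theta\le\tfrac\pi2$) gives $f_{u,b}(w)\ge c'\theta$ for a universal $c'$, and careful bookkeeping of $C,c_0,c'$ drives the bound below $1/(3^4\cdot 2^{21})$, which is item~\ref{item:acute}.

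Item~\ref{item:obtuse} follows from the same computation: for $\theta\in[\tfrac\pi2,\pi-36b]$ we have $\cos\theta\le 0$, but on $\xi\in(0,b]$ the disagreement condition still reads $\zeta<-\xi\cot\theta$ (now $-\xi\cot\theta\ge 0$), the identity $\abs{u\cdot x}=t\sin\theta$ is unchanged, the bound $\abs{\xi\cot\theta}\le b\cot(\pi-36b)\le\tfrac1{36}$ again keeps $S$ inside a constant-radius ball, and $\sin\theta=\sin(\pi-\theta)\ge\tfrac2\pi(\pi-\theta)$ converts the estimate into $f_{u,b}(w)\ge(\pi-\theta)/(3^4\cdot 2^{21})$; equivalently, one can prove the single inequality $f_{u,b}(w)\ge\sin\theta/(3^4\cdot 2^{21})$ for $\theta\in[36b,\pi-36b]$ and split at the end. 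I expect the main obstacle to be choosing $S$ so that all four requirements hold simultaneously — it lies inside the disagreement band, $\abs{u\cdot x}\gtrsim\sin\theta$ on it, it stays within a constant-radius ball so the density lower bound applies, and its measure is $\gtrsim b$ — and then threading the explicit constants through to beat $3^4\cdot 2^{21}$; the tension is that enlarging $t$ boosts $\abs{u\cdot x}$ but pushes $x$ out of the ball, and it is precisely the hypothesis $\theta\ge 36b$ (resp.\ $\theta\le\pi-36b$) that caps $\xi\cot\theta$ and resolves it.
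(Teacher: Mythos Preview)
Your approach is essentially the paper's: reduce to the two-dimensional span of $u$ and $w$, exhibit a parallelogram inside the disagreement band on which $\abs{u\cdot x}\gtrsim\sin\theta$ and the isotropic log-concave density is bounded below, then divide by the band-probability upper bound. Your $(\xi,t)$ parameterization is equivalent to the paper's description of the region via the pair $(\inner{w}{x},\inner{u}{x})$, since $t=-\inner{u}{x}/\sin\theta$ on the branch $\xi>0$.

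There is one concrete quantitative slip. With $t\in[\tfrac14,\tfrac12]$ you get $\abs{\zeta}\le \tfrac{1}{36}+\tfrac12$ and hence $\twonorm{x}$ up to roughly $0.53$, but Lemma~\ref{lem:ilc-density-lb} only guarantees the density lower bound $2^{-16}$ on the ball of radius $\tfrac19$; as stated it does not apply to your $S$, and the underlying Lov\'asz--Vempala bound the paper cites is also restricted to $\twonorm{x}\le\tfrac19$. The paper resolves this by taking, in your coordinates, $t\in[\tfrac{1}{36},\tfrac{1}{18}]$ (equivalently $\inner{u}{x}\in[-\tfrac{\sin\theta}{18},-\tfrac{\sin\theta}{36}]$): together with your correct estimate $\xi\cot\theta\le b\cot(36b)\le\tfrac{1}{36}$ and $b\le\tfrac{\pi}{72}$, this keeps every point of the region inside the $\tfrac19$-ball. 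With that smaller window the area is $b/36$, the density bound $2^{-16}$ applies, $\abs{u\cdot x}\ge\tfrac{\sin\theta}{36}\ge\tfrac{\theta}{72}$, and the constants thread through to exactly $1/(3^4\cdot 2^{21})$. Your closing remark about the tension between enlarging $t$ and staying inside the ball is precisely the issue---you just need to resolve it with a smaller $t$-interval than $[\tfrac14,\tfrac12]$.
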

\begin{proof}
We prove the two items respectively.
\begin{enumerate}
\item For the first item, we denote by $\phi \defeq \theta(u, w)$.
Define region
\begin{equation*}
R_1 = \cbr{x: \inner{w}{x} \in [0, b], \inner{u}{x} \in \sbr{-\frac{\sin\phi}{36}, -\frac{\sin\phi}{18}}}.
\end{equation*}
It can be easily seen that $R_1$ is a subset of the disagreement region between $w$ and $u$. In other words,
\begin{equation*}
\ind{x \in R_1} \leq \ind{\sign{\inner{w}{x}} \neq \sign{\inner{u}{x}}}.
\end{equation*}

It suffices to show that, region $R_1$ has probability mass at least $\frac{b}{9 \cdot 2^{18}}$ wrt $D_X$. To see why it completes the proof, observe that
\begin{align*}
&\ \EE_{x \sim D_X} \sbr{\abs{u \cdot x} \ind{\abs{\inner{w}{x}} \leq b} \ind{\sign{\inner{w}{x}} \neq \sign{\inner{u}{x}}}} \\
\geq&\ \EE_{x \sim D_X} \sbr{\abs{u \cdot x} \ind{x \in R_1}} \\
\geq&\ \frac{\sin\phi}{36} \cdot \EE_{x \sim D_X} \ind{x \in R_1} \\
\geq&\ \frac{\phi}{72} \cdot \PP_{x \sim D_X}\del{x \in R_1}
\geq \frac{\phi \cdot b}{3^4 \cdot 2^{21}},
\end{align*}
where the first inequality uses the fact that $R_1$ is a subset of both $\cbr{x: \abs{\inner{w}{x}} \leq b}$ and $\cbr{x: \sign{\inner{w}{x}} \neq \sign{\inner{u}{x}}}$; the second inequality uses the fact that for all $x$ in $R_1$, $\abs{u \cdot x} \geq \frac{\sin \phi}{36}$; the third inequality uses the elementary fact that $\sin\phi \geq \frac{\phi}{2}$.

As $\PP_{x \sim D_X} \del{ \abs{\inner{w}{x}} \leq b } \leq b$ by Lemma~\ref{lem:ilc-density-ub}, this implies that
\begin{align*}
f_{u,b}(w) = \frac{\EE_{x \sim D_X} \sbr{\abs{u \cdot x} \ind{\abs{\inner{w}{x}} \leq b} \ind{\sign{\inner{w}{x}} \neq \sign{\inner{u}{x}}}}}{\PP_{x \sim D_X} \del{ \abs{\inner{w}{x}} \leq b }}
&\geq \frac{\phi \cdot b}{9 \cdot 2^{18} \cdot b} \\
&= \frac{\phi}{9 \cdot 2^{18}}.
\end{align*}


Now we turn to lower bounding the probability mass of $R_1$ wrt $D_X$. We first project $x$ down to the subspace spanned by $\cbr{w, u}$ - call the projected value $z = (z_1, z_2) \in \RR^2$. Observe that $z$ can also be seen as drawn from an isotropic log-concave distribution in $\RR^2$; denote by $f_Z$ its probability density function.

Without loss of generality, suppose $w = (0,1)$ and $u = (\sin\phi, \cos\phi)$. It can be now seen that $x \in R_1$ iff $z$ lies in the parallelogram $ABDC$, denoted as $\tilde{R}_1$, where $A = (\frac{1}{36} + \frac{b}{\tan\phi}, b)$, $B = (\frac{1}{18} + \frac{b}{\tan\phi}, b)$, $C = (\frac{1}{36}, 0)$, $D = (\frac{1}{18}, 0)$.
See Figure~\ref{fig:f-acute} for an illustration.
Crucially, $\| \overline{OC} \| = \| \overline{CD} \| = \frac{1}{36}$, $\| \overline{AC}\| = \| \overline{BD} \| = \frac{b}{\sin\phi} \leq \frac{1}{18}$, as $b \leq \frac{\phi}{36} \leq \frac{\sin\phi}{18}$. Therefore, by triangle inequality, all four vectices, $A, B, C, D$ have distance at most $\frac19$ to the origin. Therefore, for all $z \in \tilde{R}_1$, $\| z \| \leq \frac19$. By  Lemma~\ref{lem:ilc-density-lb}, this implies that $f_Z(z) \geq 2^{-16}$ for all $z$ in $\tilde{R}_1$. Moreover, the area of parallelogram $\tilde{R}_1$ is equal to $b \cdot \frac{1}{36} = \frac{b}{36}$.

Therefore,
\begin{equation*}
\PP_{x \sim D_X}\del{x \in R_1} = \PP_{z \sim D_Z}\del{z \in \tilde{R}_1} = \int_{\tilde{R}_1} f_Z(z) dz \geq 2^{-16} \cdot \frac{b}{36} = \frac{b}{9 \cdot 2^{18}}.
\end{equation*}
This completes the proof of the claim.

\item The proof of the second item uses similar lines of reasoning as the first. We denote by $\phi \defeq \pi - \theta(u, w)$.
Define region
\begin{equation*}
R_2 = \cbr{x: \inner{w}{x} \in [-b, 0], \inner{u}{x} \in \sbr{\frac{\sin\phi}{36}, \frac{\sin\phi}{18}}}.
\end{equation*}
It can be easily seen that $R_2$ is a subset of the disagreement region between $w$ and $u$. In other words,
\begin{equation*}
\ind{x \in R_2} \leq \ind{\sign{\inner{w}{x}} \neq \sign{\inner{u}{x}}}.
\end{equation*}

It suffices to show that, region $R_2$ has probability mass at least $\frac{b}{9 \cdot 2^{18}}$ wrt $D_X$. To see why it completes the proof, observe that
\begin{align*}
&\ \EE_{x \sim D_X} \sbr{\abs{u \cdot x} \ind{\abs{\inner{w}{x}} \leq b} \ind{\sign{\inner{w}{x}} \neq \sign{\inner{u}{x}}}} \\
\geq&\  \EE_{x \sim D_X} \sbr{\abs{u \cdot x} \ind{x \in R_2}} \\
\geq&\ \frac{\sin\phi}{36} \cdot \EE_{x \sim D_X} \ind{x \in R_2} \\
=&\ \frac{\phi}{72} \PP_{x \sim D_X}\del{x \in R_2}
\geq  \frac{b \cdot \phi}{3^4 \cdot 2^{21}},
\end{align*}
where the first inequality uses the fact that $R_2$ is a subset of both $\cbr{x: \abs{\inner{w}{x}} \leq b}$ and $\cbr{x: \sign{\inner{w}{x}} \neq \sign{\inner{u}{x}}}$; the second inequality uses the fact that for all $x$ in $R_2$, $\abs{u \cdot x} \geq \frac{\sin \phi}{36}$; the third inequality uses the elementary fact that $\sin\phi \geq \frac{\phi}{2}$.

As $\PP_{x \sim D_X} \del{ \abs{\inner{w}{x}} \leq b } \leq b$ by Lemma~\ref{lem:ilc-density-ub}, this implies that
\begin{align*}
f_{u,b}(w) = \frac{\EE_{x \sim D_X} \sbr{\abs{u \cdot x} \ind{\abs{\inner{w}{x}} \leq b} \ind{\sign{\inner{w}{x}} \neq \sign{\inner{u}{x}}}}}{\PP_{x \sim D_X} \del{ \abs{\inner{w}{x}} \leq b }}
&\geq \frac{\phi \cdot b}{3^4 \cdot 2^{21} \cdot b} \\
&= \frac{\phi}{3^4 \cdot 2^{21}}.
\end{align*}

Now we lower bound the probability mass of $R_2$ wrt $D_X$. We first project $x$ down to the subspace spanned by $\cbr{w, u}$ - call the projected value $z = (z_1, z_2) \in \RR^2$. Observe that $z$ can also be seen as drawn from an isotropic log-concave distribution on $\RR^2$; denote by its density $f_Z(z)$.

Without loss of generality, suppose $w = (0,1)$ and $u = (\sin\phi, -\cos\phi)$. It can be now seen that $x \in R_2$ iff $z$ lies in the parallelogram $CDBA $, denoted as $\tilde{R}_2$, where $A = (\frac{1}{36} - \frac{b}{\tan\phi}, -b) $, $B = (\frac{1}{18} - \frac{b}{\tan\phi}, -b)$, $C = (\frac{1}{36}, 0)$, $D = (\frac{1}{18}, 0)$. See Figure~\ref{fig:f-obtuse} for an illustration.
Crucially, $\| \overline{OC} \| = \| \overline{CD} \| = \frac{1}{36}$, $\| \overline{AC}\| = \| \overline{BD} \| = \frac{b}{\sin\phi} \leq \frac{1}{18}$, as $b \leq \frac{\phi}{36} \leq \frac{\sin\phi}{18}$.
Therefore, by triangle inequality, all four vertices $A, B, C, D$ have distance at most $\frac19$ to the origin. Therefore, for all $z \in \tilde{R}_1$, $\| z \| \leq \frac19$. This implies that $f_Z(z) \geq 2^{-16}$ for all $z$ in $\tilde{R}_2$.
Moreover, the area of parallelogram $\tilde{R}_2$ is equal to $b \cdot \frac{1}{36} = \frac{b}{36}$.

Therefore,
\begin{equation*}
\PP_{x \sim D_X}\del{x \in R_2} = \PP_{z \sim D_Z}\del{z \in \tilde{R}_2} = \int_{\tilde{R}_2} f_Z(z) dz \geq 2^{-16} \cdot \frac{b}{36} = \frac{b}{9 \cdot 2^{18}}.
\end{equation*}
\end{enumerate}
This completes the proof of the claim.
\end{proof}

\begin{figure}
\hspace{3cm}
\includegraphics[trim={0cm 4.5cm 0cm 0cm},clip,width=0.7\textwidth]{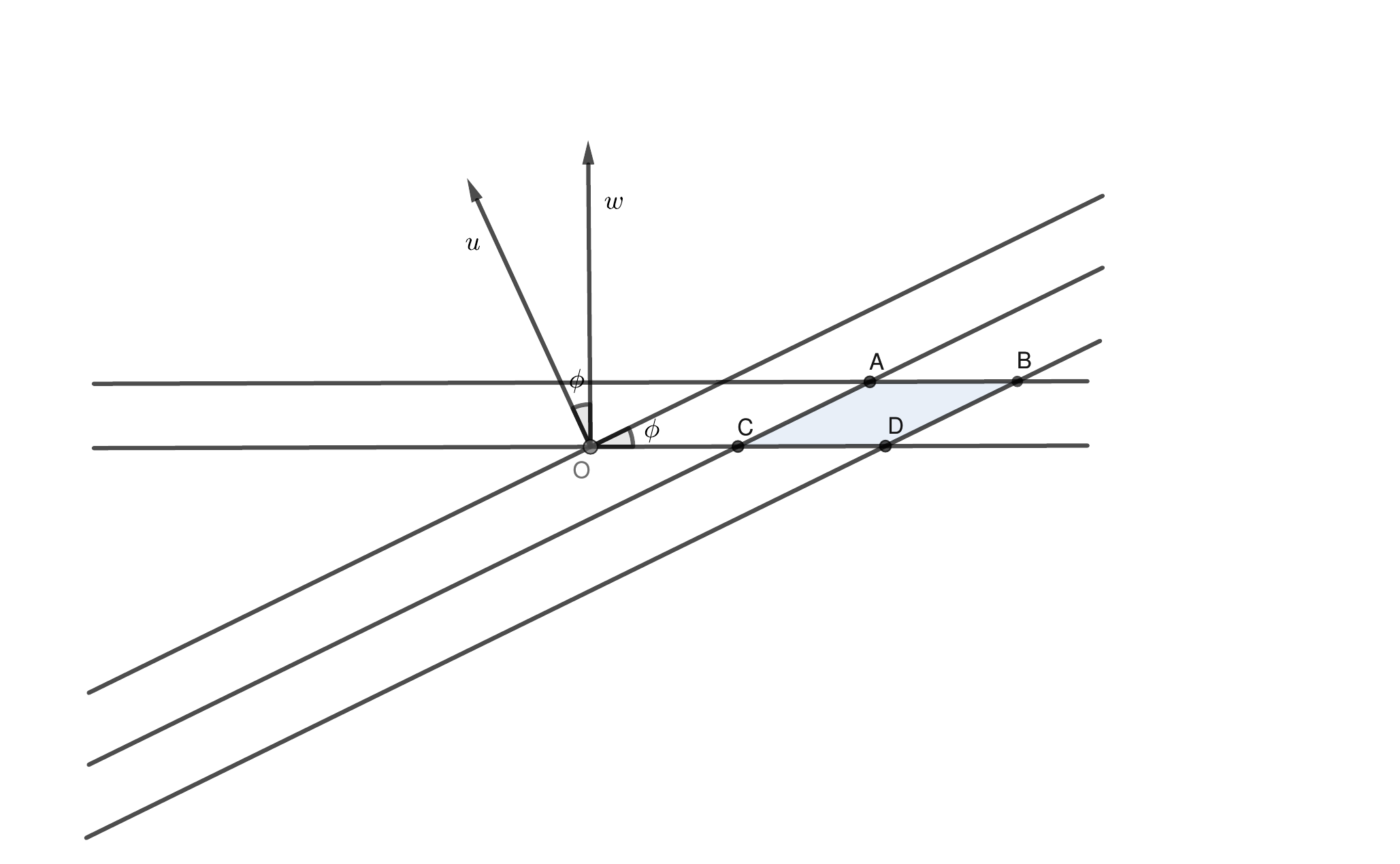}
\caption{An illustration of parallelogram region $\tilde{R}_1$ (the shaded region). Its four boundaries are: lines $AB$ and $CD$, which are $\cbr{z: \inner{w}{z} = b}$ and $\cbr{z: \inner{w}{z} = 0}$; lines $AC$ and $BD$, which are $\cbr{z: \inner{u}{z} = -\frac{\sin\phi}{36}}$ and $\cbr{z: \inner{u}{z} = -\frac{\sin\phi}{18}}$ respectively.}
\label{fig:f-acute}
\end{figure}

\begin{figure}
\includegraphics[trim={0cm 10cm 0cm 0cm},clip,width=\textwidth]{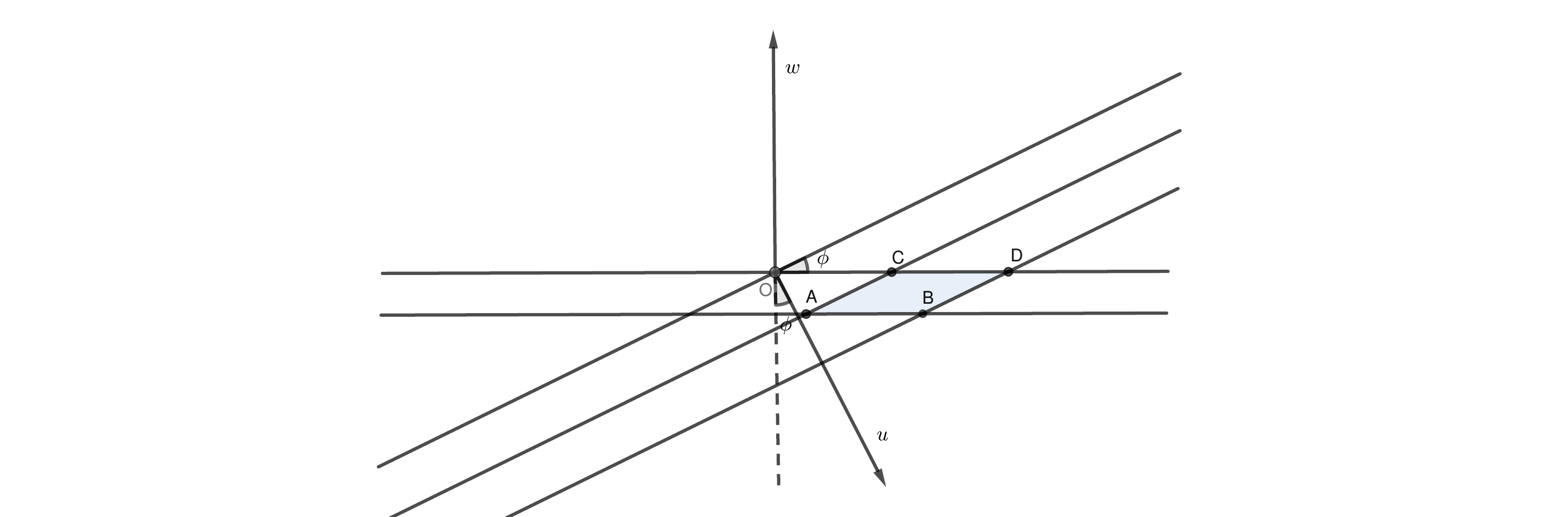}
\caption{An illustration of parallelogram region $\tilde{R}_2$ (the shaded region). Its four boundaries are: lines $AB$ and $CD$, which are $\cbr{z: \inner{w}{z} = -b}$ and $\cbr{z: \inner{w}{z} = 0}$; lines $AC$ and $BD$, which are $\cbr{z: \inner{u}{z} = \frac{\sin\phi}{36}}$ and $\cbr{z: \inner{u}{z} = \frac{\sin\phi}{18}}$ respectively.}
\label{fig:f-obtuse}
\end{figure}



\section{Basic Inequalities}

\begin{lemma}\label{lem:cos-quad}
If $\theta \in [0,\pi]$, then:
\begin{enumerate}
\item $\cos \theta \leq 1 - \frac{\theta^2}{5}$. \label{item:0-ub}
\item $\cos \theta \geq 1 - \frac{\theta^2}{2}$. \label{item:0-lb}
\item $\cos \theta \leq -1 + \frac{1}{2}(\theta - \pi)^2$. \label{item:pi-ub}
\item $\cos \theta \geq -1 + \frac{1}{5}(\theta - \pi)^2$. \label{item:pi-lb}
\end{enumerate}
\end{lemma}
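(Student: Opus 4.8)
The plan is to reduce all four inequalities to two elementary facts about the sine function, combined with the half-angle identity $\cos\theta = 1 - 2\sin^2(\theta/2)$, and then obtain items~\ref{item:pi-ub} and~\ref{item:pi-lb} from items~\ref{item:0-ub} and~\ref{item:0-lb} by the substitution $\theta \mapsto \pi-\theta$ (which preserves the interval $[0,\pi]$ and flips the sign of the cosine).

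For item~\ref{item:0-lb}, I would write $\cos\theta = 1 - 2\sin^2(\theta/2)$ and apply the standard bound $\abs{\sin x} \le \abs{x}$ with $x = \theta/2$: this gives $\cos\theta = 1 - 2\sin^2(\theta/2) \ge 1 - 2(\theta/2)^2 = 1 - \theta^2/2$, valid for all real $\theta$ and in particular on $[0,\pi]$. For item~\ref{item:0-ub}, I again start from $\cos\theta = 1 - 2\sin^2(\theta/2)$, but now I need a lower bound on $\sin(\theta/2)$. Since $\theta \in [0,\pi]$ we have $\theta/2 \in [0,\pi/2]$, an interval on which $\sin$ is concave; hence $\sin$ lies above the chord through $(0,0)$ and $(\pi/2,1)$, i.e. $\sin x \ge \tfrac{2}{\pi} x$ for $x \in [0,\pi/2]$. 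Taking $x = \theta/2$ yields $\sin(\theta/2) \ge \theta/\pi \ge 0$, so $\sin^2(\theta/2) \ge \theta^2/\pi^2$ and therefore $\cos\theta \le 1 - 2\theta^2/\pi^2$. The proof is then completed by the numerical observation that $2/\pi^2 \ge 1/5$, equivalently $\pi^2 \le 10$, so $\cos\theta \le 1 - 2\theta^2/\pi^2 \le 1 - \theta^2/5$.

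Finally, for items~\ref{item:pi-ub} and~\ref{item:pi-lb}, set $\phi := \pi - \theta$, which also ranges over $[0,\pi]$, and use $\cos\theta = -\cos\phi$. Applying item~\ref{item:0-lb} to $\phi$ gives $\cos\phi \ge 1 - \phi^2/2$, hence $\cos\theta = -\cos\phi \le -1 + \phi^2/2 = -1 + \tfrac12(\theta-\pi)^2$, which is item~\ref{item:pi-ub}; applying item~\ref{item:0-ub} to $\phi$ gives $\cos\phi \le 1 - \phi^2/5$, hence $\cos\theta = -\cos\phi \ge -1 + \phi^2/5 = -1 + \tfrac15(\theta-\pi)^2$, which is item~\ref{item:pi-lb}.

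There is no genuine obstacle here; the only subtlety is the sharp constant $1/5$ in items~\ref{item:0-ub} and~\ref{item:pi-lb}, which fails globally for $\cos$ (the quadratic upper bound $1-\theta^2/5$ eventually dips below $-1$) but holds throughout $[0,\pi]$ precisely because $\pi^2 < 10$; the concavity-of-sine chord bound is exactly the device that converts this numerical inequality into the claimed estimate.
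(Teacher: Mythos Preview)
Your proof is correct. The paper actually states Lemma~\ref{lem:cos-quad} without proof (it is listed under ``Basic inequalities''), so there is no argument to compare against; your half-angle plus chord-bound approach is a clean way to supply the missing details, and the reduction of items~\ref{item:pi-ub} and~\ref{item:pi-lb} to items~\ref{item:0-lb} and~\ref{item:0-ub} via $\theta\mapsto\pi-\theta$ is exactly the right symmetry.
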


\begin{lemma}[Averaging effects on angle]\label{lem:avg-angle}
Suppose we have a sequence of unit vectors $w_1, \ldots, w_T$. Let $\tilde{w} = {\frac1T \sum_{t=1}^T w_t}$ be their average.
Suppose $\frac1T \sum_{t=1}^T \cos\theta(w_t, u) \geq 0$.
Then,
$\cos\theta(\tilde{w}, u) \geq \frac1T \sum_{t=1}^T \cos\theta(w_t, u)$.
\end{lemma}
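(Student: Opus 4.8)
The plan is to reduce the statement to the triangle inequality for the Euclidean norm. First I would record the elementary identity that, since $u$ is a unit vector, for any nonzero vector $w$ one has $\cos\theta(w,u) = \inner{\hat w}{u} = \inner{w}{u}/\twonorm{w}$. Applying this with $w = \tilde w = \frac1T\sum_{t=1}^T w_t$, and using linearity of the inner product together with the assumption that each $w_t$ is a unit vector (so that $\inner{w_t}{u} = \cos\theta(w_t,u)$), gives
\begin{equation*}
\cos\theta(\tilde w, u) \;=\; \frac{\inner{\tilde w}{u}}{\twonorm{\tilde w}} \;=\; \frac{1}{\twonorm{\tilde w}}\cdot\frac1T\sum_{t=1}^T \cos\theta(w_t,u).
\end{equation*}
Writing $S \defeq \frac1T\sum_{t=1}^T \cos\theta(w_t,u)$ for the right-hand side of the claimed inequality, which is nonnegative by hypothesis, this says $\cos\theta(\tilde w, u) = S/\twonorm{\tilde w}$.

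Next I would upper bound $\twonorm{\tilde w}$: by the triangle inequality (equivalently, convexity of $\twonorm{\cdot}$) and $\twonorm{w_t} = 1$ for all $t$, we have $\twonorm{\tilde w} = \twonorm{\frac1T\sum_{t=1}^T w_t} \le \frac1T\sum_{t=1}^T \twonorm{w_t} = 1$. Since $S \ge 0$ and $\twonorm{\tilde w} \le 1$, dividing a nonnegative number by $\twonorm{\tilde w}$ can only increase it, so $\cos\theta(\tilde w, u) = S/\twonorm{\tilde w} \ge S$, which is exactly the claim. (When $S > 0$ we have $\inner{\tilde w}{u} = S > 0$, so $\tilde w \ne 0$ and $\theta(\tilde w, u)$ is well defined; when $S = 0$ both sides are $0$ whenever $\tilde w \ne 0$.)

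I do not expect any real obstacle here: the argument is two lines once the identity $\cos\theta(\tilde w, u) = \inner{\tilde w}{u}/\twonorm{\tilde w}$ and the triangle bound $\twonorm{\tilde w} \le 1$ are in place. The only point requiring a word of care is the degenerate case $\tilde w = 0$, which can occur only when $S \le 0$; in every place the lemma is invoked in the analysis one actually has $S \ge \cos\frac{\pi}{32} > 0$, so $\tilde w \ne 0$ and the normalization $\hat{\tilde w}$ is meaningful. If one prefers, one can simply impose $\tilde w \ne 0$ as a standing hypothesis of the lemma.
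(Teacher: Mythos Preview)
Your proof is correct and follows essentially the same approach as the paper: compute $\inner{\tilde w}{u}=\frac1T\sum_t\cos\theta(w_t,u)$ by linearity, bound $\twonorm{\tilde w}\le 1$ via convexity of the norm, and use nonnegativity of the average to conclude $\cos\theta(\tilde w,u)=\inner{\tilde w}{u}/\twonorm{\tilde w}\ge\inner{\tilde w}{u}$. Your additional remark on the degenerate case $\tilde w=0$ is a nice touch that the paper does not explicitly address.
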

\begin{proof}
We note that
\begin{equation*}
\inner{\tilde{w}}{u} = \frac{1}{T} \sum_{t=1}^T \inner{w_t}{u}
 = \frac1T \sum_{T=1}^T \cos\theta(w_T, u) \geq 0.
\end{equation*}
In addition, by the convexity of $\ell_2$ norm, $\| \tilde{w} \|_2 = \| \frac{1}{T} \sum_{t=1}^T w_t \|_2 \leq \frac{1}{T} \sum_{t=1}^T \| w_t \| \leq 1$. This implies that
\begin{equation*}
\cos \theta(\tilde{w}, u) = \inner{\frac{\tilde{w}}{\| \tilde{w} \|}}{u}
\geq \inner{\tilde{w}}{u} = \frac1T \sum_{t=1}^T \cos\theta(w_t, u).
\end{equation*}
\end{proof}

\begin{lemma}\label{lem:infty-q}
Recall that $q = \ln(8d)$. Then for every $x$ in $\RR^d$, $\| x \|_q \leq 2 \| x \|_\infty$.
\end{lemma}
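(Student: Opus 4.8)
The plan is to reduce the vector inequality to a one-line scalar estimate. Bounding every coordinate of $x$ by $\|x\|_\infty$ gives
\[
\|x\|_q^{\,q} \;=\; \sum_{i=1}^d |x_i|^q \;\le\; d\,\|x\|_\infty^{\,q},
\]
so that $\|x\|_q \le d^{1/q}\,\|x\|_\infty$; this is just the textbook monotone comparison $\|\cdot\|_\infty \le \|\cdot\|_q \le d^{1/q}\|\cdot\|_\infty$ between the $\ell_q$ and $\ell_\infty$ norms on $\mathbb{R}^d$. Since $q = \ln(8d)$, it then remains only to establish the scalar bound $d^{1/\ln(8d)} \le 2$.

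For that last step I would rewrite $d^{1/\ln(8d)} = \exp\!\big(\tfrac{\ln d}{\ln(8d)}\big)$ and use $\ln(8d) = \ln d + \ln 8 > \ln d$, so the exponent lies in $[0,1)$; substituting $\ln 8 = 3\ln 2$ then turns $d^{1/\ln(8d)} \le 2$ into an explicit inequality in $\ln d$, which is the only thing to verify. An alternative route that avoids writing down the factor $d^{1/q}$ by hand is to invoke log-convexity of $p \mapsto \|x\|_p$: from $\|x\|_q \le \|x\|_2^{2/q}\,\|x\|_\infty^{1-2/q}$ and $\|x\|_2 \le \sqrt{d}\,\|x\|_\infty$ one recovers exactly the same $d^{1/q}$ factor, after which the same scalar estimate finishes the argument.

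There is essentially no obstacle here: the content is entirely the elementary bookkeeping on $d^{1/\ln(8d)}$, and the vector step is just the standard embedding of $\ell_q$ into $\ell_\infty$ on a $d$-dimensional space. I note that the lemma is used downstream only to absorb $\|g_t\|_q$ into $\|g_t\|_\infty \le \|x_t\|_\infty$ up to an absolute constant (e.g.\ inside Lemma~\ref{lem:omd}), so only the \emph{order} of the bound is relevant and the precise value of the constant is immaterial.
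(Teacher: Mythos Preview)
Your approach is exactly the paper's: bound $\sum_i |x_i|^q \le d\,\|x\|_\infty^q$ and then appeal to $d^{1/q}\le 2$. The paper's one-line proof does precisely this.

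One caveat worth flagging: the scalar step $d^{1/\ln(8d)}\le 2$ that you say ``is the only thing to verify'' is in fact \emph{not} true for all $d$. Writing $d^{1/\ln(8d)}=\exp\!\big(\tfrac{\ln d}{\ln 8+\ln d}\big)$, the exponent tends to $1$ as $d\to\infty$, so $d^{1/\ln(8d)}\to e$; concretely, the value already exceeds $2$ once $d$ is a few hundred. What \emph{is} always true is $d^{1/\ln(8d)}<(8d)^{1/\ln(8d)}=e$, so the correct universal constant is $e$ rather than $2$. The paper's proof has the same slip. As you rightly observe at the end, only the order of the bound matters downstream (it is used once, in the derivation of Lemma~\ref{lem:omd}, to replace $\|g_t\|_q$ by a constant times $\|x_t\|_\infty$), so replacing $2$ by $e$ changes nothing of substance.
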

\begin{proof}
By algebra, $
\| x \|_q = \del{\sum_{i=1}^d \abs{x_i}^q }^{\frac1q} \leq (d \| x \|_\infty^q)^{\frac{1}{q}} \leq 2 \| x \|_\infty.
$
\end{proof}

We need the following elementary lemmas in our proofs. See e.g. \cite{zhang2018efficient} for the proof.

\begin{lemma}\label{lem:best-s-approx}
If $v, u$ are two vectors in $\RR^d$, and $u$ is $s$-sparse, then, $\| \Hcal_s(v) - u \|_2 \leq 2 \| v - u \|_2$.
\end{lemma}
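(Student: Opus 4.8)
The plan is to reduce the claim to a comparison between the ``tail'' of $v$ lying outside the hard-thresholded support and the ``tail'' of $v$ lying outside the support of $u$, both of which are dominated by $\| v - u \|_2$. Concretely, I would write $\hat v \defeq \Hcal_s(v)$ and let $S \defeq \supp{\hat v}$ be the set of (at most $s$) coordinates retained by $\Hcal_s$. By the triangle inequality,
\[
\| \hat v - u \|_2 \leq \| \hat v - v \|_2 + \| v - u \|_2,
\]
so it suffices to prove $\| \hat v - v \|_2 \leq \| v - u \|_2$.

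The first observation is that $\hat v - v$ is simply $-v$ restricted to the complement $\bar S$, hence $\| \hat v - v \|_2 = \| v_{\bar S} \|_2$, where $v_A$ denotes $v$ with all coordinates outside $A$ zeroed out. The second observation is that, since $\Hcal_s$ keeps the $s$ largest-magnitude entries, $S$ maximizes $\| v_{S'} \|_2$ over all index sets $S'$ with $|S'| \leq s$; in particular, letting $T \defeq \supp{u}$ (which has $|T| \leq s$ since $u$ is $s$-sparse), this gives $\| v_S \|_2 \geq \| v_T \|_2$. Using $\| v \|_2^2 = \| v_A \|_2^2 + \| v_{\bar A} \|_2^2$ for any $A$, I then get
\[
\| v_{\bar S} \|_2^2 = \| v \|_2^2 - \| v_S \|_2^2 \leq \| v \|_2^2 - \| v_T \|_2^2 = \| v_{\bar T} \|_2^2 .
\]
Finally, because $u$ is supported on $T$, the vectors $v$ and $v - u$ agree on $\bar T$, so $\| v_{\bar T} \|_2 = \| (v-u)_{\bar T} \|_2 \leq \| v - u \|_2$. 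Chaining the three displays yields $\| \hat v - v \|_2 = \| v_{\bar S} \|_2 \leq \| v_{\bar T} \|_2 \leq \| v - u \|_2$, and combining with the triangle inequality above finishes the proof.

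There is no substantive obstacle here; the only points requiring a little care are (i) the degenerate case $\| v \|_0 < s$, where $\Hcal_s(v) = v$ and the bound is trivial — though the argument above in fact still goes through verbatim, since then $v_{\bar S} = 0$ — and (ii) being precise about the optimality property of hard thresholding when $v$ has ties among its coordinate magnitudes: any valid choice of $S$ made by $\Hcal_s$ still satisfies $\| v_S \|_2 = \max_{|S'| \le s} \| v_{S'} \|_2$, which is all that is used. Hence the ``main obstacle'' is purely bookkeeping with supports and restrictions rather than any genuine difficulty.
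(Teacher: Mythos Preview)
Your proof is correct. The paper itself does not prove this lemma; it merely cites \citet{zhang2018efficient} for the proof, so there is no in-paper argument to compare against in detail. That said, your argument can be shortened: the paper's preliminaries already state that $\Hcal_s(v)$ is a best $s$-sparse approximation of $v$ in $\ell_2$, which immediately gives $\|v - \Hcal_s(v)\|_2 \leq \|v - u\|_2$ since $u$ is $s$-sparse; combined with the triangle inequality this yields the claim in one line. Your steps 3--6 are precisely a self-contained proof of this optimality property, so the extra work is not wasted, just redundant given what the paper takes as known.
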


\begin{lemma}\label{lem:normalize}
Suppose $v$ is a unit vector in $\RR^d$. Then for any $w$ in $\RR^d$, $\| \hat{w} - v \|_2 \leq 2 \| w - v \|_2$.
\end{lemma}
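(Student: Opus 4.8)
The plan is to reduce to the equivalent inequality between squared norms, $\|\hat w-v\|_2^2\le\|w-v\|_2^2$, and to establish it by a single algebraic identity. Assume $w\neq 0$ (otherwise $\hat w$ is undefined) and set $r\defeq\|w\|_2$. Using $\|\hat w\|_2=\|v\|_2=1$ and $\inner{w}{v}=r\inner{\hat w}{v}$, I would expand both squares to obtain
\begin{equation*}
\|w-v\|_2^2-\|\hat w-v\|_2^2=\del{r^2+1-2r\inner{\hat w}{v}}-\del{2-2\inner{\hat w}{v}}=(r-1)\del{r+1-2\inner{\hat w}{v}}.
\end{equation*}
Since $\inner{\hat w}{v}\le\|\hat w\|_2\,\|v\|_2=1$, the second factor is at least $r-1$; hence when $r=\|w\|_2\ge 1$ both factors are nonnegative, and in fact $\|w-v\|_2^2-\|\hat w-v\|_2^2\ge(r-1)^2\ge 0$, which is exactly the claim (with a small improvement). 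A conceptual restatement of this case: when $\|w\|_2\ge1$, $\hat w$ is the Euclidean projection of $w$ onto the closed unit ball $B=\cbr{z:\|z\|_2\le 1}$, $v$ is its own projection since $v\in B$, and projection onto a convex set is nonexpansive.

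The step I expect to be the real obstacle is the regime $\|w\|_2<1$. There $r-1<0$ while $r+1-2\inner{\hat w}{v}$ can be strictly positive (for instance whenever $\hat w\perp v$), so the identity above shows the asserted bound can in fact \emph{fail}: for $d=2$, $v=(1,0)$ and $w=(0,\tfrac12)$ one gets $\hat w=(0,1)$ with $\|\hat w-v\|_2=\sqrt 2>\tfrac{\sqrt5}{2}=\|w-v\|_2$. Consequently, as worded the lemma is really only valid with the extra hypothesis $\|w\|_2\ge 1$ (i.e.\ $w$ outside the open unit ball), under which the computation above completes the proof.

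Finally, I would record that this caveat is immaterial to how the lemma is used in the excerpt: it is invoked only inside the proof of Lemma~\ref{lem:dev-u}, to conclude $\|u-\hat w_t\|_2\le 2r_2$, and that estimate holds for \emph{every} nonzero $w_t$ from
$\|u-\hat w_t\|_2\le\|u-w_t\|_2+\|w_t-\hat w_t\|_2=\|u-w_t\|_2+\bigl|\,\|u\|_2-\|w_t\|_2\,\bigr|\le 2\,\|u-w_t\|_2\le 2r_2$,
using the triangle and reverse-triangle inequalities together with $\|u\|_2=1$. So the downstream conclusion is unaffected whether or not one adds the hypothesis $\|w\|_2\ge1$ to the lemma.
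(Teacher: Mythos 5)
Your analysis is correct, and the counterexample is valid: for $v=(1,0)$ and $w=(0,\tfrac12)$ one indeed has $\|\hat{w}-v\|_2=\sqrt{2}>\sqrt{5}/2=\|w-v\|_2$, so the lemma as printed is false whenever $\|w\|_2<1$ and $\hat{w}$ is far from $v$ in angle. Your identity $\|w-v\|_2^2-\|\hat{w}-v\|_2^2=(\|w\|_2-1)(\|w\|_2+1-2\inner{\hat{w}}{v})$ cleanly isolates the failure mode and proves the stated inequality under the extra hypothesis $\|w\|_2\geq 1$. The paper does not supply its own proof (it defers to the cited reference), so there is nothing to compare line by line; but two observations confirm that what you propose is the \emph{intended} statement. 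First, the only invocation, inside the proof of Lemma~\ref{lem:dev-u}, concludes $\|u-\hat{w}_t\|\leq 2r_2$ from $\|u-w_t\|\leq r_2$ --- i.e.\ it already carries the factor $2$ that your triangle/reverse-triangle argument produces, whereas the lemma as literally stated would give the stronger (and unused) bound $r_2$. Second, the failing regime $\|w\|_2<1$ is exactly the relevant one: all iterates $w_t$ lie in constraint sets $\Kcal$ contained in the unit ball, so one cannot simply append the hypothesis $\|w\|_2\geq1$ and move on; the factor-$2$ version $\|\hat{w}-v\|_2\leq\|\hat{w}-w\|_2+\|w-v\|_2=\bigl|\|w\|_2-\|v\|_2\bigr|+\|w-v\|_2\leq 2\|w-v\|_2$ is what is actually needed and is valid for every nonzero $w$.

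In short: you have found a genuine error in the lemma statement (a missing factor of $2$), repaired it correctly, and verified that the repair suffices for the one place the lemma is used. The only thing I would add is to state the corrected lemma explicitly as ``for any unit vector $v$ and any nonzero $w$, $\|\hat{w}-v\|_2\leq 2\|w-v\|_2$'' so that the downstream citation in Lemma~\ref{lem:dev-u} matches it verbatim.
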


\begin{lemma}\label{lem:dist-angle}
If $v$ is a unit vector in $\RR^d$, and $w$ is a vector in $\RR^d$, then $\theta(w, v) \leq \pi \| w - v\|_2$.
\end{lemma}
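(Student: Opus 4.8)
\textbf{Proof proposal for Lemma~\ref{lem:dist-angle}.} The plan is to reduce to the case of two unit vectors and then invoke an elementary trigonometric inequality. First, note that by the definition of $\theta(\cdot,\cdot)$, the angle depends only on the normalized vectors, so $\theta(w,v) = \theta(\hat w, v)$ (here we implicitly assume $w \neq 0$, which is needed for $\theta(w,v)$ to be defined in the first place). Moreover, since $v$ is a unit vector, Lemma~\ref{lem:normalize} gives $\| \hat w - v \|_2 \le \| w - v \|_2$. Hence it suffices to prove the bound $\theta(\hat w, v) \le \pi \| \hat w - v \|_2$, i.e. we may assume without loss of generality that $w$ itself is a unit vector.

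So suppose $w$ and $v$ are both unit vectors and write $\theta \defeq \theta(w, v) \in [0,\pi]$. A direct computation gives
\begin{equation*}
\| w - v \|_2^2 = \| w \|_2^2 + \| v \|_2^2 - 2 \inner{w}{v} = 2 - 2\cos\theta = 4 \sin^2\!\del{\tfrac{\theta}{2}},
\end{equation*}
so that $\| w - v \|_2 = 2\sin(\theta/2)$. Since $\theta/2 \in [0, \pi/2]$, Jordan's inequality $\sin x \ge \frac{2}{\pi} x$ for $x \in [0,\pi/2]$ yields $\sin(\theta/2) \ge \frac{\theta}{\pi}$, and therefore $\| w - v \|_2 = 2\sin(\theta/2) \ge \frac{2\theta}{\pi} \ge \frac{\theta}{\pi}$. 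Rearranging gives $\theta \le \pi \| w - v \|_2$, which combined with the reduction in the first paragraph completes the proof.

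There is no substantive obstacle here: the statement is an elementary geometric fact, and the only points requiring mild care are the implicit nonvanishing of $w$ (so the angle is well defined) and the reduction to unit vectors via Lemma~\ref{lem:normalize}; after that the claim is a one-line consequence of Jordan's inequality. (Any other sublinear lower bound on $\sin$ near the origin, such as $\sin x \ge x - x^3/6$, would also suffice, but Jordan's inequality is cleanest here.)
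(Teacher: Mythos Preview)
Your proof is correct. Note, however, that the paper does not actually give its own proof of Lemma~\ref{lem:dist-angle}: it is listed together with Lemmas~\ref{lem:best-s-approx} and~\ref{lem:normalize} as elementary facts whose proofs are deferred to \citet{zhang2018efficient}. So there is no in-paper argument to compare against; your reduction to unit vectors via Lemma~\ref{lem:normalize}, followed by the identity $\|\hat w - v\|_2 = 2\sin(\theta/2)$ and Jordan's inequality, supplies precisely the short self-contained proof the paper omits. As a minor observation, you in fact establish the sharper bound $\theta \le \tfrac{\pi}{2}\|\hat w - v\|_2$ before discarding a factor of two; that slack is not needed here, but it does mean your argument would still close even if Lemma~\ref{lem:normalize} carried a constant factor (as the paper's own invocation of it in the proof of Lemma~\ref{lem:dev-u}, where $\|u - w_t\|\le r_2$ is turned into $\|u - \hat w_t\|\le 2r_2$, seems to suggest).
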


\section{Probability Tail Bounds}
\label{sec:conc}

In this section we present a few well-known results about concentrations of random variables and martingales that are instrumental in our proofs. We include the proofs of some of the results here because we would like to explicitly track dependencies on relevant parameters.

We start by recalling a few  facts about subexponential random variables;
see e.g.~\cite{vershynin2018high} for a more thorough treatment on this topic.

\begin{definition}
A random variable $X$ with is called $(\sigma, b)$-subexponential, if
for all $\lambda \in [-\frac1b, \frac1b]$,
\begin{equation}
\EE e^{\lambda (X - \EE[X])} \leq e^{\frac{\sigma^2 \lambda^2}{2}}.
\end{equation}
\end{definition}


\begin{lemma}
Suppose $Z$ is $(\sigma, b)$-subexponential, then with probability $1-\delta$,
\begin{equation*}
\abs{Z - \EE Z} \leq \sqrt{2 \sigma^2 \ln\frac{2}{\delta}} + 2b \ln\frac{2}{\delta}.
\end{equation*}
\end{lemma}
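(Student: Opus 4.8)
The plan is to apply the standard Chernoff (exponential Markov) method, exploiting the subexponential moment generating function bound, and then take a union bound over the two tails.

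First I would control the upper tail. Fix $t > 0$. For any $\lambda \in [0, \tfrac1b]$, Markov's inequality applied to the nonnegative random variable $e^{\lambda(Z - \EE Z)}$, combined with the defining inequality of $(\sigma,b)$-subexponentiality, gives
\[
\PP\del{Z - \EE Z \geq t} \leq e^{-\lambda t}\, \EE e^{\lambda(Z - \EE Z)} \leq \exp\del{-\lambda t + \tfrac{\sigma^2 \lambda^2}{2}}.
\]
Next I would optimize the exponent over the constrained range $[0,\tfrac1b]$. The unconstrained minimizer of $-\lambda t + \tfrac{\sigma^2\lambda^2}{2}$ is $\lambda = t/\sigma^2$; if $t \leq \sigma^2/b$ this choice is feasible and yields $\PP(Z - \EE Z \geq t) \leq \exp(-t^2/(2\sigma^2))$, whereas if $t > \sigma^2/b$ I would instead take $\lambda = \tfrac1b$ and use $\sigma^2/(2b^2) \leq t/(2b)$ to get $\PP(Z - \EE Z \geq t) \leq \exp(-t/(2b))$. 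In either regime,
\[
\PP\del{Z - \EE Z \geq t} \leq \exp\del{-\min\del{\tfrac{t^2}{2\sigma^2},\, \tfrac{t}{2b}}}.
\]

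Then I would pick $t$ making the right-hand side at most $\delta/2$. Setting $t = \sqrt{2\sigma^2 \ln(2/\delta)} + 2b\ln(2/\delta)$, each summand is nonnegative, so $t$ dominates both of them: from $t \geq \sqrt{2\sigma^2\ln(2/\delta)}$ we get $t^2/(2\sigma^2) \geq \ln(2/\delta)$, and from $t \geq 2b\ln(2/\delta)$ we get $t/(2b) \geq \ln(2/\delta)$. Hence the minimum in the exponent is at least $\ln(2/\delta)$, giving $\PP(Z - \EE Z \geq t) \leq \delta/2$. Finally, $-Z$ is again $(\sigma,b)$-subexponential (replace $\lambda$ by $-\lambda$ in the definition), so applying the same argument to $-Z$ gives $\PP(Z - \EE Z \leq -t) \leq \delta/2$ for the same $t$; a union bound yields $\PP(\abs{Z - \EE Z} \geq t) \leq \delta$, which is exactly the claimed inequality.

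The only mildly delicate point is the constrained optimization over $\lambda \in [0,\tfrac1b]$ — handling the two cases $t \leq \sigma^2/b$ and $t > \sigma^2/b$ — and then checking that the ``sum of two terms'' choice of $t$ simultaneously clears both thresholds in $\min(t^2/(2\sigma^2), t/(2b)) \geq \ln(2/\delta)$. Everything else is routine, and no distributional structure beyond the moment generating function bound is used.
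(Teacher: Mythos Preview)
Your proof is correct and follows the standard Chernoff--case-analysis route; the paper does not give a separate proof of this single-variable lemma, but its proof of the martingale generalization (Lemma~\ref{lem:almost-hoeff}) uses exactly the same method (Chernoff bound, split into the cases $\lambda=t/\sigma^2$ versus $\lambda=1/b$, then combine the two deviation thresholds into a sum). The only cosmetic difference is that the paper first writes the deviation threshold as a $\max$ of the two terms and then passes to the sum via $\max(x,y)\leq x+y$, whereas you set $t$ to the sum directly and verify both thresholds are cleared.
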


\begin{lemma}\label{lem:subexp-tail}
Suppose $X_1, \ldots, X_n$ are iid $(\sigma, b)$-subexponential random variables, then $\frac1n \sum_{i=1}^n X_i$ is $(\frac{\sigma}{\sqrt{n}}$, $\frac{b}n)$-subexponential. Consequently, with probability $1-\delta$,
\begin{equation*}
\abs{\frac1n \sum_{i=1}^n \del{X_i - \EE \sbr{X_i}} } \leq \sqrt{\frac{2 \sigma^2}{n} \ln\frac{2}{\delta}} + \frac{2b}{n} \ln\frac{2}{\delta}.
\end{equation*}
\end{lemma}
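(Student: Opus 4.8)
The plan is to prove the two assertions in order: first that the empirical average $\bar X \defeq \frac1n\sum_{i=1}^n X_i$ is $(\sigma/\sqrt n, b/n)$-subexponential, via the standard moment-generating-function tensorization argument, and then to deduce the tail bound by feeding $\bar X$ into the scalar subexponential tail bound stated immediately above it.

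First I would verify the subexponentiality claim. Note $\EE[\bar X] = \frac1n\sum_{i=1}^n \EE[X_i]$, and fix any $\lambda$ with $\abs{\lambda}\le n/b$, so that $\abs{\lambda/n}\le 1/b$ lies in the admissible range appearing in the definition of $(\sigma,b)$-subexponentiality. Using independence of the $X_i$,
\begin{equation*}
\EE\, e^{\lambda(\bar X - \EE[\bar X])} = \prod_{i=1}^n \EE\, e^{(\lambda/n)(X_i - \EE[X_i])} \le \prod_{i=1}^n e^{\sigma^2(\lambda/n)^2/2} = e^{(\sigma^2/n)\lambda^2/2},
\end{equation*}
where the inequality applies the $(\sigma,b)$-subexponential property of each $X_i$ at the point $\lambda/n$. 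Since $\sigma^2/n = (\sigma/\sqrt n)^2$ and the bound holds for all $\abs{\lambda}\le n/b = 1/(b/n)$, this is exactly the statement that $\bar X$ is $(\sigma/\sqrt n, b/n)$-subexponential.

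Next I would apply the preceding lemma (the scalar tail bound for a single subexponential variable) to $Z = \bar X$, with $\sigma$ replaced by $\sigma/\sqrt n$ and $b$ replaced by $b/n$: with probability $1-\delta$,
\begin{equation*}
\abs{\bar X - \EE[\bar X]} \le \sqrt{2(\sigma^2/n)\ln\frac2\delta} + 2(b/n)\ln\frac2\delta = \sqrt{\frac{2\sigma^2}{n}\ln\frac2\delta} + \frac{2b}{n}\ln\frac2\delta.
\end{equation*}
Substituting $\bar X - \EE[\bar X] = \frac1n\sum_{i=1}^n(X_i - \EE[X_i])$ gives the claimed inequality.

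There is essentially no obstacle here: the only point deserving care is bookkeeping of the admissible range of $\lambda$ under the substitution $\lambda\mapsto\lambda/n$, which is precisely what produces the improved scale parameter $b/n$ (rather than $b$) for the average. The remainder is the textbook computation (see, e.g., \citet{vershynin2018high}), reproduced here only to keep the dependence on $\sigma$, $b$, $n$, and $\delta$ explicit.
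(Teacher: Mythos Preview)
Your proof is correct and follows the standard moment-generating-function tensorization argument. The paper itself does not give a proof of this lemma, treating it as a well-known fact (with an implicit reference to \citet{vershynin2018high}); your argument is precisely the routine derivation one would supply, so nothing further is needed.
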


We next show the following fact: if a random variable has a subexponential tail probability, then it is subexponential.  

\begin{lemma}\label{lem:mgf-from-tail}
Suppose $Z$ is a random variable such that $\PP( \abs{Z} > a ) \leq C \exp\del{-\frac{a}{\sigma}}$ for some $C \geq 1$. Then,
\begin{equation*}
\EE e^{\frac{\abs{Z}}{2\sigma(\ln C + 1)}} \leq 4.
\end{equation*}
\end{lemma}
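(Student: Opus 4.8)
The plan is to control the moment generating function of $\abs{Z}$ directly from its tail bound, using the layer-cake identity. Write $\lambda \defeq \frac{1}{2\sigma(\ln C+1)}$, so the claim is $\EE e^{\lambda \abs{Z}} \leq 4$. First I would invoke the elementary identity $\EE e^{\lambda\abs{Z}} = 1 + \lambda \int_0^\infty e^{\lambda a}\, \PP(\abs{Z} > a)\, da$, which holds for any nonnegative random variable by Tonelli's theorem applied to $e^{\lambda a} - 1 = \lambda\int_0^a e^{\lambda t}\, dt$. Then I would substitute the pointwise bound $\PP(\abs{Z} > a) \leq \min\del{1,\, C e^{-a/\sigma}}$ and split the integral at $a^\star \defeq \sigma \ln C$, the point where the two arguments of the minimum agree.

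On $[0,a^\star]$, bounding the tail by $1$, the contribution is $\lambda \int_0^{a^\star} e^{\lambda a}\, da = e^{\lambda a^\star} - 1 = C^{\lambda\sigma} - 1$. The crucial arithmetic fact is that $\lambda \sigma = \frac{1}{2(\ln C + 1)} \leq \frac12$ for every $C \geq 1$, hence $C^{\lambda\sigma} = \exp\del{\frac{\ln C}{2(\ln C+1)}} \leq e^{1/2}$, so this piece is at most $\sqrt{e} - 1$. On $[a^\star,\infty)$, bounding the tail by $C e^{-a/\sigma}$, the contribution is $\lambda C \int_{a^\star}^\infty e^{(\lambda - 1/\sigma)a}\, da$. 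Since $\lambda - \tfrac1\sigma = -\frac{2\ln C + 1}{2\sigma(\ln C + 1)} < 0$, this integral converges and, after simplifying $e^{(\lambda-1/\sigma)a^\star} = C^{\lambda\sigma}/C$, evaluates to $\frac{\lambda\sigma}{1 - \lambda\sigma}\, C^{\lambda\sigma}$. Now $\frac{\lambda\sigma}{1-\lambda\sigma} = \frac{1}{2\ln C + 1} \leq 1$ (again using $C \geq 1$) and $C^{\lambda\sigma} \leq \sqrt e$, so this piece is at most $\sqrt e$. Adding the three pieces gives $\EE e^{\lambda\abs{Z}} \leq 1 + (\sqrt e - 1) + \sqrt e = 2\sqrt e < 4$.

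This argument has no genuine obstacle; the only thing requiring care is the exact constant baked into $\lambda$. The summand $+1$ inside $\ln C + 1$ ensures $\lambda\sigma \leq \frac12$ uniformly (in particular at $C = 1$), and the factor $2$ in the denominator ensures $\frac{\lambda\sigma}{1-\lambda\sigma} \leq 1$; without both, one of $C^{\lambda\sigma}$ or $\frac{\lambda\sigma}{1-\lambda\sigma}$ would blow up as $C \to \infty$, which is exactly the failure mode to watch for. One should also record the edge case $C = 1$, where $a^\star = 0$ and the first integral is empty, leaving the clean bound $\EE e^{\abs{Z}/(2\sigma)} \leq 2$, consistent with the statement.
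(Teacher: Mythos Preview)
Your proof is correct and follows the same high-level strategy as the paper's: represent $\EE e^{\lambda\abs{Z}}$ as an integral via a layer-cake identity, plug in the tail bound $\min(1, Ce^{-a/\sigma})$, and split the integral at a convenient point. The executions differ in parametrization: the paper applies the identity $\EE Y = \int_0^\infty \PP(Y \geq s)\,ds$ directly to $Y = e^{\lambda\abs{Z}}$, obtaining $\int_0^\infty \min\bigl(1,\, C s^{-2(\ln C + 1)}\bigr)\,ds$ and splitting at $s = e$, whereas you first integrate by parts to get $1 + \lambda\int_0^\infty e^{\lambda a}\,\PP(\abs{Z} > a)\,da$ and split at $a^\star = \sigma\ln C$, the crossover of the two branches of the minimum. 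Your split is arguably more principled (it is exactly where the tail bound saturates), and your write-up makes explicit why the particular constants $2$ and $\ln C + 1$ in the denominator are needed; the paper's version is slightly shorter and yields the marginally tighter numerical constant $e + 1/e$ versus your $2\sqrt{e}$, but both comfortably sit below $4$.
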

\begin{proof}
We bound the left hand side as follows:
\begin{eqnarray*}
\EE e^{\frac{\abs{Z}}{2\sigma(\ln C + 1)}} &=& \int_{0}^\infty \PP \del{ e^{\frac{\abs{Z}}{2\sigma(\ln C + 1)}} \geq s} ds \\
&=& \int_0^\infty \PP\del{ \abs{Z} \geq 2\sigma(\ln C + 1) \ln s} ds \\
&\leq& \int_0^\infty \min\del{1, \frac{C}{s^{2(\ln C + 1)}}} ds \\
&=& \int_0^e \min\del{1, \frac{C}{s^{2(\ln C + 1)}}} ds + \int_e^\infty \min\del{1, \frac{C}{s^{2(\ln C + 1)}}} ds \\
&\leq& e + \int_e^\infty C e^{-2\ln C} s^{-2} ds \leq 4.
\end{eqnarray*}
where the first equality is from a basic equality for nonnegative random variable $Y$: $\EE\sbr{Y} = \int_0^\infty \PP(Y \geq t) dt$; the second equality is by rewriting the event in terms of $\abs{Z}$; the first inequality is from the assumption on $\abs{Z}$'s tail probability and the simple fact that the probability of an event is always at most $1$; the third equality is by decomposing the integration to integration on two intervals; the second inequality uses the fact that the first integral is at most $e$, and the integrand in the second integral is at most $C e^{-2\ln C} s^{-2}$ as $s \geq e$; the last inequality uses the fact that $C \geq 1$ and $e + \frac1e \leq 4$.
\end{proof}

\begin{lemma}
For random variable $Z$ and some $\lambda_0 \in \RR_+$, if $\EE \exp\del{\lambda_0 \abs{Z}} \leq C_0$, then $Z$ is
$(\frac{4\sqrt{C_0}}{\lambda_0},\frac{4}{\lambda_0})$-subexponential.
\label{lem:mgf-quad-bootstrap}
\end{lemma}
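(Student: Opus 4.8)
The plan is to control the moment generating function (MGF) of the centered variable $\bar Z \defeq Z - \EE Z$ by a direct Taylor-series expansion, where the individual polynomial moments of $Z$ are extracted from the exponential-moment hypothesis $\EE e^{\lambda_0 |Z|} \le C_0$. Once we have a quadratic-in-$\lambda$ bound on $\EE e^{\lambda \bar Z}$ valid on a suitable symmetric interval, the claim follows by reading off $\sigma$ and $b$ from the definition of a $(\sigma,b)$-subexponential random variable given in the excerpt.

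First I would derive polynomial moment bounds for $Z$. Since $e^{\lambda_0 t} \ge (\lambda_0 t)^k / k!$ for every $t \ge 0$ and every integer $k \ge 0$, substituting $t = |Z|$ and taking expectations gives $\EE |Z|^k \le k!\,C_0/\lambda_0^k$. Next, using the elementary inequality $|a-b|^k \le 2^{k-1}(|a|^k + |b|^k)$ with $a = Z$, $b = \EE Z$, together with Jensen's inequality $|\EE Z|^k \le \EE |Z|^k$, this upgrades to $\EE |\bar Z|^k \le 2^k \EE |Z|^k \le 2^k k!\,C_0/\lambda_0^k$ for all $k \ge 1$.

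Then, for any $\lambda$ with $|\lambda| \le \lambda_0/4$, I would expand $\EE e^{\lambda \bar Z} = 1 + \sum_{k \ge 2} \frac{\lambda^k}{k!}\EE \bar Z^k$; the $k=1$ term vanishes since $\EE \bar Z = 0$, and the interchange of expectation and summation is legitimate because $\EE e^{|\lambda \bar Z|} = \sum_{k \ge 0} \frac{|\lambda|^k}{k!}\EE|\bar Z|^k < \infty$ whenever $|\lambda| < \lambda_0/2$. Bounding the tail of the series with the moment estimate yields $\bigl| \sum_{k \ge 2} \frac{\lambda^k}{k!}\EE \bar Z^k \bigr| \le C_0 \sum_{k \ge 2} (2|\lambda|/\lambda_0)^k \le 2 C_0 (2|\lambda|/\lambda_0)^2 = 8 C_0 \lambda^2/\lambda_0^2$, where the geometric series is summed using $2|\lambda|/\lambda_0 \le 1/2$. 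Hence $\EE e^{\lambda \bar Z} \le 1 + 8 C_0 \lambda^2/\lambda_0^2 \le \exp(8 C_0 \lambda^2/\lambda_0^2)$ on the whole interval $|\lambda| \le \lambda_0/4$. Matching against the definition: taking $\sigma = 4\sqrt{C_0}/\lambda_0$ makes $\sigma^2 \lambda^2 /2 = 8 C_0 \lambda^2/\lambda_0^2$, and taking $b = 4/\lambda_0$ makes the admissible range $|\lambda| \le 1/b$ coincide with $|\lambda| \le \lambda_0/4$, completing the proof. I do not expect a real obstacle here; the only points needing care are choosing the numerical constant ($4$ rather than a smaller value) so that the geometric tail collapses to the clean bound $8 C_0 \lambda^2/\lambda_0^2$, and recording the mild domination argument that justifies the termwise expansion of the exponential.
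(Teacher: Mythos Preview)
Your proof is correct and follows essentially the same strategy as the paper: extract factorial moment bounds from the hypothesis, Taylor-expand the centered MGF, sum a geometric series on $|\lambda|\le \lambda_0/4$, and read off $\sigma=4\sqrt{C_0}/\lambda_0$, $b=4/\lambda_0$. The only cosmetic difference is that the paper introduces an independent copy $Z'$ and bounds $\EE e^{\lambda(Z-\EE Z)}\le \EE e^{\lambda(Z-Z')}$ via Jensen, so that by symmetry only even moments survive, whereas you work directly with $\bar Z=Z-\EE Z$ and drop the $k=1$ term because $\EE\bar Z=0$; both routes land on the identical bound $1+8C_0\lambda^2/\lambda_0^2$, and your direct argument is slightly more economical.
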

\begin{proof}
As $\EE \exp\del{\lambda_0 \abs{Z}} = \sum_{i=0}^\infty \frac{\EE \abs{Z}^i \lambda_0^i}{i!}$, where each summand is an nonnegative number,
we have that for all $i$,
\begin{equation}
\frac{\EE \abs{Z}^i \lambda_0^i}{i!} \leq \EE \exp\del{\lambda_0 \abs{Z}} \leq C_0.
\label{eqn:moment-bound}
\end{equation}
where the second inequality is by our assumption.

We introduce a new random variable $Z'$ such that $Z'$ has the exact same distribution as $Z$, and is independent of $Z$. Observe that $Z - Z'$ has a symmetric distribution, and therefore $\EE(Z-Z')^i = 0$ for all odd $i$.
We look closely at the moment generating function of $Z - Z'$:
\begin{equation*}
\EE \exp\del{\lambda(Z - Z')}
= \sum_{i=0}^\infty \frac{\EE (Z - Z')^{i}}{i!} \lambda^{2i}
= \sum_{i=0}^\infty \frac{\EE (Z - Z')^{2i}}{(2i)!} \lambda^{2i}
\end{equation*}
where the second equality uses the fact that $Z-Z'$ has a symmetric distribution.
Importantly, by the conditional Jensen's Inequality and the convexity of exponential function,
$\EE \exp\del{\lambda(Z - \EE[Z])} \leq \EE \exp\del{\lambda(Z - Z')}$.
Therefore, it suffices to bound $\EE \exp\del{\lambda(Z - Z')}$ for all $\lambda \in [-\frac {\lambda_0} 4, \frac {\lambda_0} 4]$.

We have the following sequence of inequalities:
\begin{eqnarray*}
\EE \exp\del{\lambda (Z - Z')} &=& \sum_{i=0}^\infty \frac{\EE \sbr{\abs{Z - Z'}^{2i}} \lambda^{2i}}{(2i)!} \\
&\leq& 1 + \sum_{i=1}^\infty \frac{\EE\sbr{\abs{Z}^{2i}} 2^{2i} \lambda_0^{2i}}{(2i)!} \cdot \del{\frac{\lambda}{\lambda_0}}^{2i} \\
&\leq& 1 + C_0 \sum_{i=1}^\infty \del{\frac{2\lambda}{\lambda_0}}^{2i} \\
&\leq& 1 + 2 C_0 \del{\frac{2\lambda}{\lambda_0}}^2  \\
&\leq& \exp\del{\frac{8 C_0}{\lambda_0^2} \lambda^2}.
\end{eqnarray*}
where the first inequality we separate out the first constant term, and use the basic fact that $\abs{z - z'}^{j} \leq 2^{j-1} (\abs{z}^j + \abs{z'}^j)$ for all $j \geq 1$, and the fact that $Z$ and $Z'$ has the same distribution; the second inequality uses Equation~\eqref{eqn:moment-bound} that $\frac{\EE \abs{Z}^{2i} \lambda_0^{2i}}{(2i)!} \leq C_0$; the third inequality uses
condition that $\abs{\frac{\lambda}{\lambda_0}} \leq \frac{1}{4}$, and the elementary calculation that $\sum_{i=1}^\infty \del{\frac{\lambda}{\lambda_0}}^{2i} = \del{\frac{2\lambda}{\lambda_0}}^{2} \cdot \frac{1}{1 - (\frac{2\lambda}{\lambda_0})^2} \leq 8\del{\frac{\lambda}{\lambda_0}}^{2}$; the last inequality uses the simple fact that $1+x \leq e^x$ for all $x$ in $\RR$.


To conclude, we have that for all $\lambda \in [-\frac {\lambda_0} 4, \frac {\lambda_0} 4]$,
\begin{equation*}
\EE \exp\del{\lambda (Z - \EE Z)} \leq \exp\del{\frac{8C_0}{\lambda_0^2} \lambda^2},
\end{equation*}
meaning that $Z$ is $\del{ \frac{4\sqrt{C_0}}{\lambda_0} ,\frac{4}{\lambda_0} }$-subexponential.
\end{proof}

Importantly, based on the above two lemmas we have the following subexponential property of isotropic log-concave random variables.
\begin{lemma}
If $X$ is a random variable drawn from a 1-dimensional isotropic log-concave distribution $D_X$, then $X$ is $(32, 16)$-subexponential.
Moreover, for any random variable $Y$ such that $\abs{Y} \leq 1$ almost surely, $Y X$ is also $(32, 16)$-subexponential.
\label{lem:log-concave-subexp}
\end{lemma}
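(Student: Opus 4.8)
The plan is to chain together three facts already available in the excerpt: the standard tail bound for one-dimensional isotropic log-concave distributions (Lemma~\ref{lem:log-concave-tail}), the passage from a subexponential tail to a bounded exponential moment (Lemma~\ref{lem:mgf-from-tail}), and the bootstrap from a bounded exponential moment to the subexponential property with explicit constants (Lemma~\ref{lem:mgf-quad-bootstrap}).

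First I would invoke Lemma~\ref{lem:log-concave-tail}: since $X$ is drawn from a one-dimensional isotropic log-concave distribution, $\PP(|X| \ge a) \le \exp(-a+1) = e\cdot\exp(-a)$ for all $a \ge 0$. Hence $X$ meets the hypothesis of Lemma~\ref{lem:mgf-from-tail} with $C = e$ and $\sigma = 1$. As $\ln C = 1$, that lemma yields $\EE\exp\!\big(|X|/4\big) \le 4$, i.e.\ $\EE\exp(\lambda_0 |X|) \le C_0$ with $\lambda_0 = \frac14$ and $C_0 = 4$. Feeding this into Lemma~\ref{lem:mgf-quad-bootstrap} certifies that $X$ is $\big(\frac{4\sqrt{C_0}}{\lambda_0},\frac{4}{\lambda_0}\big)$-subexponential; substituting $\lambda_0 = \frac14$ and $C_0 = 4$ gives precisely $(32,16)$, proving the first assertion.

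For the second assertion, the key observation is the pointwise inequality $|YX| \le |X|$, which holds whenever $|Y| \le 1$ almost surely and requires no independence between $Y$ and $X$. Consequently $\PP(|YX| \ge a) \le \PP(|X| \ge a) \le e\cdot\exp(-a)$, so $YX$ satisfies the hypothesis of Lemma~\ref{lem:mgf-from-tail} with the same $C = e$ and $\sigma = 1$, and repeating the two-step argument verbatim shows $YX$ is $(32,16)$-subexponential. There is essentially no obstacle here beyond bookkeeping: one only needs to verify that the numerical constants from Lemmas~\ref{lem:mgf-from-tail} and~\ref{lem:mgf-quad-bootstrap} collapse to exactly $(32,16)$, and to note that the second part works for arbitrary (possibly $X$-dependent) $Y$ with $|Y|\le 1$ because the reduction is through the pointwise tail comparison rather than through any product structure.
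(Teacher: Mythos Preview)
Your proposal is correct and matches the paper's proof essentially line for line: the paper also chains Lemma~\ref{lem:log-concave-tail} (giving $C=e$, $\sigma=1$) through Lemma~\ref{lem:mgf-from-tail} (giving $\EE e^{|X|/4}\le 4$) and then Lemma~\ref{lem:mgf-quad-bootstrap} (with $\lambda_0=\tfrac14$, $C_0=4$) to obtain $(32,16)$, and handles the second part by the same pointwise tail comparison $\PP(|YX|\ge t)\le\PP(|X|\ge t)$.
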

\begin{proof}
By Lemma~\ref{lem:log-concave-tail}, we have that
$\PP(\abs{X} \geq t) \leq e \cdot e^{-t}$.
Applying Lemma~\ref{lem:mgf-from-tail} with $\sigma = 1$ and $C = e$, we have that $\EE e^{\frac{\abs{X}}{4}} \leq 4$. Now, using Lemma~\ref{lem:mgf-quad-bootstrap} with $\lambda_0 = \frac14$ and $C_0 = 4$, we have that $X$ is $(32, 16)$-subexponential.
The second statement follows from the exact same line of reasoning, starting from $\PP(\abs{YX} \geq t) \leq \PP(\abs{X} \geq t) \leq e \cdot e^{-t}$.
\end{proof}

In the two lemmas below, we use the shorthand that $\EE_t \sbr{\cdot} \defeq \EE\sbr{\cdot \mid \Fcal_t}$, and $\PP_t\rbr{\cdot} \defeq \PP\rbr{\cdot \mid \Fcal_t}$.

%
%
%
%


We need the following standard martingale concentration lemma (see e.g.~\cite[Theorem 2.19]{wainwright2019high}) where the conditional distribution of each martingale difference term has a subexponential distribution.
\begin{lemma}
\label{lem:almost-hoeff}
Suppose $\cbr{Z_t}_{t=1}^T$ is sequence of random variables adapted to filtration $\cbr{\calF_t}_{t=1}^m$. In addition, each random variable $Z_t$ is  conditionally $(\sigma, b)$-subexponential, formally,
\begin{equation}
\EE_{t-1}\sbr{ \exp\del{\lambda\del{ Z_t - \EE_{t-1}\sbr{Z_t} }} } \leq \exp\del{\frac{\sigma^2 \lambda^2}{2}}, \forall \lambda \in \sbr{-\frac1b, \frac1b}.
\label{eqn:cond-mgf}
\end{equation}
Then with probability $1-\delta$,
\begin{equation*}
\abs{\sum_{t=1}^T \del{ Z_t - \EE_{t-1} Z_t} } \leq \sigma \sqrt{2T \ln\frac{2}{\delta}} + 2b \ln \frac{2}{\delta}.
\end{equation*}
\end{lemma}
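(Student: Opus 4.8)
The plan is to run the standard exponential-supermartingale (Chernoff) argument for martingale differences, optimize the Chernoff parameter over the admissible range $[0,1/b]$, and obtain the two-sided bound by symmetry. (Here $T=m$; I will write $m$ throughout.)

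First I would set $S_k \defeq \sum_{t=1}^k \del{Z_t - \EE_{t-1}Z_t}$ for $k \in \cbr{0,1,\ldots,m}$; this is a martingale adapted to $\cbr{\Fcal_k}$ with $S_0 = 0$. Fix $\lambda \in [0, 1/b]$ and define $M_k \defeq \exp\del{\lambda S_k - \tfrac{k\sigma^2\lambda^2}{2}}$. Applying the conditional $(\sigma,b)$-subexponential hypothesis~\eqref{eqn:cond-mgf} to $Z_k$ at the scalar $\lambda$ gives $\EE_{k-1}\sbr{\exp\del{\lambda\del{Z_k - \EE_{k-1}Z_k}}} \leq \exp\del{\sigma^2\lambda^2/2}$, hence $\EE_{k-1}\sbr{M_k} \leq M_{k-1}$; so $\cbr{M_k}$ is a nonnegative supermartingale and $\EE M_m \leq \EE M_0 = 1$. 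Markov's inequality then yields, for every $u \geq 0$ and every $\lambda \in [0,1/b]$,
\begin{equation*}
\PP\del{S_m \geq u} \leq \exp\del{-\lambda u + \tfrac{m\sigma^2\lambda^2}{2}}.
\end{equation*}

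Next I would optimize the exponent over $\lambda \in [0,1/b]$. The unconstrained minimizer of $-\lambda u + m\sigma^2\lambda^2/2$ is $\lambda^\star = u/(m\sigma^2)$. If $u \leq m\sigma^2/b$, then $\lambda^\star$ is admissible and the bound becomes $\exp\del{-u^2/(2m\sigma^2)}$; if $u > m\sigma^2/b$, then taking $\lambda = 1/b$ gives $\exp\del{-u/b + m\sigma^2/(2b^2)} \leq \exp\del{-u/(2b)}$, since $m\sigma^2/(2b^2) \leq u/(2b)$ in this regime. In both cases $\PP\del{S_m \geq u} \leq \exp\del{-\min\del{u^2/(2m\sigma^2),\ u/(2b)}}$. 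Now choosing $u = \sigma\sqrt{2m\ln(2/\delta)} + 2b\ln(2/\delta)$, the sub-Gaussian branch is controlled by $u \geq \sigma\sqrt{2m\ln(2/\delta)}$ and the sub-exponential branch by $u \geq 2b\ln(2/\delta)$, so each of the two terms in the minimum is at least $\ln(2/\delta)$ and $\PP\del{S_m \geq u} \leq \delta/2$. Finally, $\cbr{-Z_t}$ also satisfies~\eqref{eqn:cond-mgf} with the same parameters $(\sigma,b)$, so the identical argument gives $\PP\del{-S_m \geq u} \leq \delta/2$; a union bound produces $\PP\del{\abs{S_m} \geq u} \leq \delta$, which is the claim.

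There is no serious obstacle here: the argument is entirely routine. The only points needing a bit of care are the case split in the constrained optimization over $[0,1/b]$, and checking that the stated choice of $u$ simultaneously dominates the sub-Gaussian and the sub-exponential tails (a $\sqrt{a+c}\leq\sqrt a+\sqrt c$-type splitting), which in particular is what delivers the constant $2$ in the $2b\ln(2/\delta)$ term.
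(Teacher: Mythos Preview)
Your proof is correct and follows essentially the same route as the paper: the exponential-supermartingale/Chernoff bound with the conditional subexponential MGF, the same case split on whether the unconstrained optimizer $\lambda^\star = u/(m\sigma^2)$ lies in $[0,1/b]$, and the symmetrized union bound. The only cosmetic difference is that the paper first takes $s = \max\bigl(2b\ln(2/\delta),\,\sigma\sqrt{2T\ln(2/\delta)}\bigr)$ and then bounds the max by the sum, whereas you plug the sum in directly; both arrive at the same conclusion.
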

\begin{proof}
As all $Z_t$'s are conditionally $(\sigma, b)$-subexponential,  
Theorem 2.19 of~\cite{wainwright2019high} implies that $\sum_{t=1}^T \del{ Z_t - \EE_{t-1} Z_t}$ is $(\sigma\sqrt{T}, b)$-exponential, and for any $a > 0$,
$$
\PP\del{ \abs{\sum_{t=1}^T \del{ Z_t - \EE_{t-1} Z_t}} > a } \leq \max( 2 e^{-\frac{a^2}{2T\sigma^2}}, 2e^{-\frac{a}{2b}}  ).
$$
Taking $a_0 = \max\del{ \sqrt{2T \ln\frac 2 \delta}, 2b \ln\frac 2 \delta }$, we have $\PP\del{ \abs{\sum_{t=1}^T \del{ Z_t - \EE_{t-1} Z_t}} > a_0 } \leq \delta$.
The lemma is concluded by observing that $a_0 \leq \sqrt{2T \ln\frac 2 \delta} + 2b \ln\frac 2 \delta$.
\end{proof}

Combining Lemmas~\ref{lem:mgf-from-tail},~\ref{lem:mgf-quad-bootstrap} and~\ref{lem:almost-hoeff}, we have the following useful inequality on the concentration of a martingale where each martingale difference has a subexponential probability tail.
We note that Freedman's Inequality or Azuma-Hoeffding's Inequality does not directly apply, as they require the martingale difference to be almost surely bounded.
A similar result for subgaussian martingale differences is shown in~\cite{shamir2011variant}; see also the discussions therein.

\begin{lemma}\label{lem:subexp-tail-hoeff}
Suppose $\cbr{Z_t}_{t=1}^T$ is sequence of random variables adapted to filtration $\cbr{\calF_t}_{t=1}^T$.
For every $Z_t$, we have that $\PP_{t-1}( \abs{Z_t} > a ) \leq C \exp\del{-\frac{a}{\sigma}}$ for some $C \geq 1$.
Then, with probability $1-\delta$,
\begin{equation*}
\abs{\sum_{t=1}^T Z_t - \EE_{t-1} Z_t } \leq 16\sigma(\ln C + 1) \del{\sqrt{2 T \ln\frac{2}{\delta}} + \ln\frac{2}{\delta}}.
\end{equation*}
\end{lemma}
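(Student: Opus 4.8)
\textbf{Proof proposal for Lemma~\ref{lem:subexp-tail-hoeff}.} The plan is to chain the three preceding lemmas, applied in their conditional form: first bootstrap the subexponential \emph{tail} assumption into a conditional moment-generating-function bound, then into a conditional $(\sigma',b')$-subexponential property for each martingale difference $Z_t - \EE_{t-1} Z_t$, and finally invoke the martingale concentration inequality Lemma~\ref{lem:almost-hoeff}. Concretely, the target bound will fall out with $\sigma' = 16\sigma(\ln C + 1)$ and $b' = 8\sigma(\ln C + 1)$.

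First I would fix $t$ and work conditionally on $\Fcal_{t-1}$. By hypothesis the regular conditional law of $Z_t$ given $\Fcal_{t-1}$ satisfies $\PP_{t-1}(\abs{Z_t} > a) \leq C \exp(-a/\sigma)$. Since Lemma~\ref{lem:mgf-from-tail} is a statement purely about the law of a random variable (its proof is a tail integral manipulation that goes through verbatim with $\EE_{t-1}$ and $\PP_{t-1}$ in place of $\EE$ and $\PP$), it yields $\EE_{t-1}\exp\!\del{\frac{\abs{Z_t}}{2\sigma(\ln C + 1)}} \leq 4$. Next, apply Lemma~\ref{lem:mgf-quad-bootstrap} conditionally with $\lambda_0 = \frac{1}{2\sigma(\ln C+1)}$ and $C_0 = 4$: here the only point needing a word of care is the independent-copy trick in that lemma's proof, which in the conditional setting uses a copy $Z_t'$ that, given $\Fcal_{t-1}$, is an independent copy of $Z_t$ (equivalently, one runs the argument pointwise in the conditioning); the conditional Jensen step and the moment bounds $\frac{\EE_{t-1}\abs{Z_t}^i \lambda_0^i}{i!} \leq C_0$ are unchanged. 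This gives that $Z_t$ is conditionally $\del{\frac{4\sqrt{C_0}}{\lambda_0}, \frac{4}{\lambda_0}}$-subexponential, i.e.\ $\sigma' = \frac{8}{\lambda_0} = 16\sigma(\ln C+1)$ and $b' = \frac{4}{\lambda_0} = 8\sigma(\ln C+1)$, which is precisely condition~\eqref{eqn:cond-mgf} required by Lemma~\ref{lem:almost-hoeff}.

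Finally, I would apply Lemma~\ref{lem:almost-hoeff} with this $(\sigma', b')$ and with its ``$m$'' set to our $m$, obtaining that with probability $1-\delta$,
\begin{equation*}
\abs{\sum_{t=1}^m \del{Z_t - \EE_{t-1} Z_t}} \leq 2 b' \ln\frac{2}{\delta} + \sigma' \sqrt{2 m \ln\frac{2}{\delta}} = 16\sigma(\ln C + 1)\del{\ln\frac{2}{\delta} + \sqrt{2 m \ln\frac{2}{\delta}}},
\end{equation*}
which is the claimed inequality after reordering the two terms inside the parentheses.

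The main obstacle is not any hard estimate but a bookkeeping/rigor point: one must make sure the two MGF lemmas, stated unconditionally in the appendix, transfer cleanly to the conditional expectations $\EE_{t-1}$. This is legitimate because both lemmas are facts about a single random variable's distribution, so applying them to the regular conditional distribution of $Z_t$ given $\Fcal_{t-1}$ is valid; the only slightly delicate spot is Lemma~\ref{lem:mgf-quad-bootstrap}'s symmetrization step, which should be phrased with a conditionally independent copy of $Z_t$. Everything else is tracking the constants through the composition $\lambda_0 = \tfrac{1}{2\sigma(\ln C+1)} \mapsto (\sigma', b') = (16\sigma(\ln C+1), 8\sigma(\ln C+1))$.
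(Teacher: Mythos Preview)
Your proposal is correct and follows essentially the same approach as the paper: apply Lemma~\ref{lem:mgf-from-tail} conditionally to get a conditional MGF bound, feed that into Lemma~\ref{lem:mgf-quad-bootstrap} to obtain the conditional $(16\sigma(\ln C+1),\,8\sigma(\ln C+1))$-subexponential property, and then invoke Lemma~\ref{lem:almost-hoeff}. Your added remarks about justifying the conditional versions of the two MGF lemmas (especially the symmetrization step) are a helpful elaboration that the paper's terse proof leaves implicit.
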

\begin{proof}
First, by Lemma~\ref{lem:mgf-from-tail}, we have that $\EE_{t-1} \exp\del{\frac{\abs{Z}}{2\sigma(\ln C + 1)}} \leq 4$. Therefore, using Lemma~\ref{lem:mgf-quad-bootstrap}, we have that $Z$ is $(16\sigma(\ln C + 1), 8\sigma(\ln C + 1))$-subexponential.

Therefore, by Lemma~\ref{lem:almost-hoeff}, we have that with probability $1-\delta$,
\begin{eqnarray*}
\abs{\sum_{t=1}^T Z_t - \EE_{t-1} Z_t }
&\leq& 16\sigma(\ln C + 1) \ln\frac{2}{\delta} + 16 \sigma(\ln C + 1) \sqrt{2 T \ln\frac{2}{\delta}} \\
&\leq& 16\sigma(\ln C + 1) \del{\sqrt{2 T \ln\frac{2}{\delta}} + \ln\frac{2}{\delta}}.
\end{eqnarray*}
where the second inequality is by algebra.
\end{proof}

\section{Basic Facts about Isotropic Log-concave Distributions}

The following useful lemmas are from~\cite{lovasz2007geometry}.

\begin{lemma}\label{lem:ilc-density-lb}
The statement below holds for $d = 1,2$.
Suppose $D_X$ is an isotropic log-concave distribution on $\RR^d$, with probability density function $f$. Then, for all $x$ such that $\| x \|_2 \leq \frac{1}{9}$, $f(x) \geq 2^{-16}$.
\end{lemma}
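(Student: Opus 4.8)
The plan is to read this off from the classical estimates on the density of an isotropic log-concave measure near its barycenter; since only $d \in \{1,2\}$ are needed, the whole task reduces to checking that the numerical constant can be taken to be $2^{-16}$ on the Euclidean ball of radius $1/9$ in these two low dimensions.

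For $d=1$ I would argue as follows. Let $f$ be a one-dimensional isotropic log-concave density; it is unimodal, say with mode $m$, and satisfies two standard facts recorded in \cite{lovasz2007geometry}: $f(x)\le 1$ for all $x$, and $|m|\le C_0$ for an absolute constant $C_0$. From the tail bound $\PP(|X|> 2)\le e^{-1}$ (Lemma~\ref{lem:log-concave-tail}) we get $\PP(|X|\le 2)\ge 1-e^{-1}$, and since $f\le 1$ on $[-2,2]$ this forces the peak value $M:=\sup_x f(x)=f(m)$ to satisfy $M\ge (1-e^{-1})/4$, an absolute constant. Then I would propagate this lower bound by log-concavity: for any $x$ with $|x|\le 1/9$, write $x$ as a convex combination $x=(1-\theta)m+\theta x'$ of $m$ and a suitable point $x'$ lying a bounded distance on the far side of $m$, at which $f(x')$ is still a fixed fraction of $M$; then $f(x)\ge f(m)^{1-\theta}f(x')^{\theta}\ge c$ for an explicit absolute constant $c$, and tracking the constants gives $c\ge 2^{-16}$ (in fact \cite{lovasz2007geometry} records the sharper bound $f(x)\ge 1/8$ for $|x|\le 1/9$).

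For $d=2$ I would invoke the $d$-dimensional analogue of the same estimate from \cite{lovasz2007geometry}: an isotropic log-concave density in $\RR^d$ satisfies $f(0)\ge 2^{-O(d)}$, which for $d=2$ is bounded below by an absolute constant; it then remains only to control how fast $\log f$ can decrease as one moves from the origin to a point $x$ with $\|x\|_2\le 1/9$, and this rate of decrease is again bounded by an absolute constant, uniformly over such $x$, by log-concavity together with the isotropy normalization. An alternative reduction to $d=1$ by integrating out a coordinate does \emph{not} directly apply, since marginalization gives information about marginals rather than the pointwise value of $f$, so I would stay with the route above.

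The only genuine obstacle here is the bookkeeping of constants: the qualitative statement — that the density of an isotropic log-concave distribution is bounded below by a dimension-dependent constant on a ball of radius $\Theta(1)$ about the origin — is entirely standard, and the substance is merely to verify that for $d\le 2$ the constant is at least $2^{-16}$ at the specific radius $1/9$. I would resolve this by quoting the explicit inequality from \cite{lovasz2007geometry} rather than re-deriving the optimal constants from scratch.
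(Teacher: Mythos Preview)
Your proposal is correct and lands on essentially the same approach as the paper: both arguments ultimately reduce to quoting the explicit density bounds from \cite{lovasz2007geometry}. The paper's proof is simply the clean version of what you describe in your final paragraph---it cites items (a) and (d) of Theorem~5.14 of \cite{lovasz2007geometry} directly, namely $f(0)\ge 2^{-7d}$ and $f(x)\ge 2^{-9d\|x\|_2}f(0)$, so that for $\|x\|_2\le 1/9$ one has $f(x)\ge 2^{-d}\cdot 2^{-7d}=2^{-8d}\ge 2^{-16}$ when $d\le 2$; your from-scratch sketch for $d=1$ is therefore unnecessary, but not wrong.
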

\begin{proof}
For any $d = 1,2$, by items (a) and (d) of~\cite[Theorem 5.14]{lovasz2007geometry}, we have that
for every $x$ such that $\| x \|_2 \leq \frac{1}{9}$, $f(x) \geq 2^{-9n\| x \|_2} f(0) \geq 2^{-d} f(0)$, and $f(0) \geq 2^{-7d}$.
Therefore, for $x$ such that $\| x \|_2 \leq \frac19$, $f(x) \geq 2^{-7d} \cdot 2^{-d} = 2^{-8d} \geq 2^{-16}$.
\end{proof}

\begin{lemma}\label{lem:ilc-density-ub}
If $x$ is a random variable drawn from a 1-dimensional isotropic log-concave distribution, then for all $a, b \in \RR$ such that $a < b$,
\begin{equation*}
\PP(x \in [a,b]) \leq b - a.
\end{equation*}
\end{lemma}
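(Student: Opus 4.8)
The plan is to reduce the estimate to a uniform upper bound on the density of $D_X$. Let $f$ denote the density of $D_X$; then $\PP(x \in [a,b]) = \int_a^b f(t)\,dt$, so it suffices to show that a one-dimensional isotropic log-concave density satisfies $f(t) \le 1$ for all $t \in \RR$, since this immediately gives $\int_a^b f(t)\,dt \le (b-a)\cdot 1 = b-a$.

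For the density bound, the cleanest route is to cite the geometric toolkit of \cite{lovasz2007geometry} --- the same source supplying the density lower bound used in Lemma~\ref{lem:ilc-density-lb} --- which records that the value of a one-dimensional isotropic log-concave density never exceeds $1$. A self-contained derivation is also available: log-concavity makes $f$ unimodal, so $M \defeq \sup_{t} f(t)$ is attained, say at $x_0$; the rescaled map $g(x) \defeq \tfrac{1}{M} f\del{x_0 + \tfrac{x}{M}}$ is again a log-concave probability density with $g(0) = \|g\|_\infty = 1$, and if $X \sim D_X$ and $Y$ has density $g$ then $Y = M(X - x_0)$, whence $\mathrm{Var}(Y) = M^2\,\mathrm{Var}(X) = M^2$. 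Invoking the classical inequality $\|g\|_\infty \cdot \sqrt{\mathrm{Var}(g)} \le 1$, valid for every log-concave density on $\RR$ (with equality for the one-sided exponential $g(x) = e^{-x}\ind{x \ge 0}$), then yields $M^2 \le 1$, i.e.\ $f \le 1$.

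The only substantive ingredient is the density upper bound $\sup_t f(t) \le 1$; the remaining step is a single integration. I would therefore quote this bound directly from \cite{lovasz2007geometry}, in parallel with the treatment of the matching lower bound in Lemma~\ref{lem:ilc-density-lb}, and conclude $\PP(x \in [a,b]) = \int_a^b f(t)\,dt \le b - a$. The main (minor) obstacle is simply locating the precise form of the density upper bound in the reference, or else carrying out the short rescaling argument above with the sharp constant tracked; everything downstream of $f \le 1$ is routine.
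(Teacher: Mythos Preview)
Your proposal is correct, and in fact the paper does not prove this lemma at all: it is simply stated as one of the ``basic facts about isotropic log-concave distributions'' taken directly from \cite{lovasz2007geometry}, exactly as you suggest doing. Your reduction to the pointwise bound $\sup_t f(t) \le 1$ and citation of \cite{lovasz2007geometry} therefore matches the paper's treatment precisely.

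One minor remark on your self-contained derivation: the rescaling step is unnecessary. If you are willing to invoke the inequality $\|g\|_\infty \cdot \sqrt{\mathrm{Var}(g)} \le 1$ for every log-concave density $g$, you may apply it directly to $f$ (which has $\mathrm{Var}(f) = 1$ by isotropy) to conclude $M = \|f\|_\infty \le 1$ in one line; the passage through $Y = M(X - x_0)$ simply reproduces the same inequality after a change of variables. This does not affect correctness, only economy.
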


\begin{lemma}\label{lem:log-concave-tail}
If $x$ is a random variable drawn from a 1-dimensional isotropic log-concave distribution, then for every $t \geq 0$,
\begin{equation*}
\PP(\abs{x} > t) \leq e^{-t+1}.
\end{equation*}
\end{lemma}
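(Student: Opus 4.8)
The plan is to reconstruct the standard exponential tail estimate for one-dimensional log-concave laws. Write the density of $X$ as $f = e^{-V}$ with $V$ convex, and record two consequences of the hypotheses. First, by Lemma~\ref{lem:ilc-density-ub} applied to intervals of vanishing length, $f(x) \leq 1$ for every $x$. Second, the one-sided tail functions $G_+(t) = \PP(X \geq t)$ and $G_-(t) = \PP(X \leq -t)$ are both log-concave on $\RR$, being indefinite integrals of the log-concave density $f$; equivalently the hazard rate $\psi(t) = f(t)/G_+(t) = -\tfrac{d}{dt}\ln G_+(t)$ is non-decreasing in $t$ (and likewise for the left tail). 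Since the asserted bound is vacuous when $t \leq 1$ (the right-hand side is then at least $1$), it is enough to bound $G_+(t)$ and $G_-(t)$ individually for $t > 1$ and then use $\PP(\abs{X} > t) \leq G_+(t) + G_-(t)$.

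For the right tail, let $t_0 \geq 0$ be the smallest value with $\psi(t_0) \geq 1$. The key claim is that $t_0 \leq 1$; this is the only place where the normalization $\EE X = 0$ and $\EE X^2 = 1$ is genuinely used. Granting it, monotonicity of $\psi$ gives, for every $t \geq t_0$, $\ln G_+(t) = \ln G_+(t_0) - \int_{t_0}^{t} \psi(s)\, ds \leq 0 - (t - t_0) \leq 1 - t$, where we used $G_+(t_0) \leq 1$ (it is a probability) and $t_0 \leq 1$; and for $t < t_0 \leq 1$ the bound $G_+(t) \leq 1 \leq e^{1-t}$ is immediate. Hence $\PP(X \geq t) \leq e^{1-t}$ for all $t \geq 0$, and the left tail is handled by the same argument applied to $-X$.

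The main obstacle is establishing $t_0 \leq 1$ --- equivalently, that the hazard rate of a variance-one log-concave density attains the value $1$ no later than $t = 1$, i.e. $f(1) \geq \PP(X \geq 1)$. This is precisely the assertion that the exponential rate of decay is at least $1$ (it is attained with equality by the one-sided exponential density $e^{-(x+1)}\ind{x \geq -1}$, which is itself isotropic log-concave), so it is what pins the coefficient of $t$ in the exponent to $1$; it cannot follow from Chebyshev's inequality alone but needs the exact value of the variance together with $f \leq 1$ and unimodality, controlling how slowly $G_+$ may decrease on $[0,1]$. A second, minor point is tightening the union over the two tails so that the final constant is $e^{1-t}$ rather than $2e^{1-t}$ --- this uses the rigidity that a tail decaying at the critical rate on one side forces an essentially compactly supported tail on the other, again by the variance constraint. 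The remaining ingredients (log-concavity of the tail functions and the monotone-hazard computation) are routine; the full verification is carried out in~\citet{lovasz2007geometry}.
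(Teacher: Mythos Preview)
The paper does not supply a proof of this lemma; it is stated as a fact imported from~\cite{lovasz2007geometry}, and all downstream arguments treat it as a black box. Your proposal therefore already goes further than the paper by sketching why the bound should hold, and the hazard-rate approach you outline is indeed a standard route to exponential tail bounds for one-dimensional log-concave densities.

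That said, the two steps you explicitly flag as incomplete are genuine gaps, not formalities. The inequality $f(1) \geq \PP(X \geq 1)$ (your $t_0 \leq 1$) does not follow from $f \leq 1$ and unimodality alone; it requires a quantitative use of $\EE X^2 = 1$, and your proposal stops short of supplying that step. The second issue is more serious: your argument delivers $G_+(t) \leq e^{1-t}$ and $G_-(t) \leq e^{1-t}$ separately, and a union bound then gives only $2e^{1-t}$, not $e^{1-t}$. The ``rigidity'' heuristic you invoke (that a tail at the critical rate forces near-compact support on the other side) is not a proof and, even if made rigorous, would address only the extremal configuration rather than yield a uniform bound over all isotropic log-concave laws. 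Since you ultimately defer to~\cite{lovasz2007geometry} for the full verification, your proposal is in the end consistent with the paper's own treatment---but as a self-contained argument it remains incomplete at precisely the two points you identify.
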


\end{document}